\newtheorem{theorem}{Theorem}
\newtheorem{corollary}{Corollary}
\newtheorem{proposition}{Proposition}
\newtheorem{lemma}{Lemma}
\DeclareMathOperator{\Tr}{Tr}
\DeclareMathOperator{\Det}{Det}
\DeclareMathOperator{\Span}{\mathrm{Span}}
\DeclareMathOperator{\DPP}{\mathrm{DPP}}
\DeclareMathOperator{\Tran}{\intercal}
\DeclareMathOperator{\EX}{\mathbb{E}}
\DeclareMathOperator{\Prb}{\mathbb{P}}
\DeclareMathOperator*{\KDPP}{\mathfrak{K}}
\def\twofig{.49\textwidth}
\title{Kernel quadrature with DPPs}
\author{%
  Ayoub Belhadji, \: Rémi Bardenet, \: Pierre Chainais \\
  Univ. Lille, CNRS, Centrale Lille, UMR 9189 - CRIStAL, Villeneuve d’Ascq, France \\
  \texttt{\{ayoub.belhadji, remi.bardenet, pierre.chainais\}@univ-lille.fr} \\
}
\begin{document}

\maketitle

\begin{abstract}
We study quadrature rules for functions from an RKHS, using nodes sampled from a determinantal point process (DPP). DPPs are parametrized by a kernel, and we use a truncated and saturated version of the RKHS kernel.
This link between the two kernels, along with DPP machinery, leads to relatively tight bounds on the quadrature error, that depends on the spectrum of the RKHS kernel.
Finally, we experimentally compare DPPs to existing kernel-based quadratures such as herding, Bayesian quadrature, or leverage score sampling. Numerical results confirm the interest of DPPs, and even suggest faster rates than our bounds in particular cases.

\end{abstract}

\section{Introduction}\label{s:intro}

Numerical integration \cite{DaRa07} is an important tool for Bayesian methods \cite{Rob07} and model-based machine learning \cite{Mur12}.
Formally, numerical integration consists in approximating
\begin{equation}
\int_{\mathcal{X}}f(x)g(x)\mathrm{d}\omega(x) \approx \sum\limits_{j \in [N]} w_{j}f(x_{j}) ,
\label{e:quadrature}
\end{equation}
where $\mathcal{X}$ is a topological space, $\mathrm{d}\omega$ is a Borel probability measure on $\mathcal{X}$, $g$ is a square integrable function, and $f$ is a function belonging to a space to be precised. In the quadrature formula \eqref{e:quadrature}, the $N$ points $x_{1}, \dots, x_{N} \in \mathcal{X}$ are called the quadrature \emph{nodes}, and $w_{1}, \dots, w_{N}$ the corresponding \emph{weights}.

The accuracy of a quadrature rule is assessed by the quadrature error, i.e., the absolute difference between the left-hand side and the right-hand side of \eqref{e:quadrature}.
Classical Monte Carlo algorithms, like importance sampling or Markov chain Monte Carlo \cite{RoCa04}, pick up the nodes as either independent samples or a sample from a Markov chain on $\mathcal{X}$, and all achieve a root mean square quadrature error in $\mathcal{O}(1/\sqrt{N})$. Quasi-Monte Carlo quadrature \cite{DiPi10} is based on deterministic, low-discrepancy sequences of nodes, and typical error rates for $\mathcal{X}=\mathbb{R}^d$ are $\mathcal{O}(\log^d N /N)$. Recently, kernels have been used to derive quadrature rules such as herding \cite{BaLaOb12,ChWeSm10}, Bayesian quadrature \cite{HuDu12,Hag91}, sophisticated control variates \cite{LiLe17,OaGiCh17}, and leverage-score quadrature \cite{Bac17} under the assumption that $f$ lies in a RKHS. The main theoretical advantage is that the resulting error rates are  faster than classical Monte Carlo and adapt to the smoothness of $f$.

In this paper, we propose a new quadrature rule for functions in a given RKHS. Our nearest scientific neighbour is \cite{Bac17}, but instead of sampling nodes independently, we leverage dependence and use a repulsive distribution called a projection determinantal point process (DPP), while the weights are obtained through a simple quadratic optimization problem. DPPs were originally introduced by \cite{Mac75} as probabilistic models for beams of fermions in quantum optics. Since then, DPPs have been thoroughly studied in random matrix theory \citep{Joh05}, and have more recently been adopted in machine learning \citep*{KuTa12} and Monte Carlo methods \citep{BaHa16}.

In practice, a projection DPP is defined through a reference measure $\mathrm{d}\omega$ and a repulsion kernel $\KDPP$.
In our approach, the repulsion kernel is a modification of the underlying RKHS kernel. This ensures that sampling is tractable, and, as we shall see, that the expected value of the quadrature error is controlled by the decay of the eigenvalues of the integration operator associated to the measure $\mathrm{d}\omega$.
Note that quadratures based on projection DPPs have already been studied in the literature: implicitly in \cite[Corollary 2.3]{Joh97} in the simple case where $\mathcal{X} = [0,1]$ and $\mathrm{d}\omega$ is the uniform measure, and in \cite{BaHa16} for $[0,1]^{d}$ and more general measures. In the latter case, the quadrature error is asymptotically of order $N^{-1/2-1/2d}$ \cite{BaHa16}, with $f$ essentially $\mathcal{C}^1$. In the current paper, we leverage the smoothness of the integrand to improve the convergence rate of the quadrature in general spaces $\mathcal{X}$.
%

This article is organized as follows. Section~\ref{s:relatedWork} reviews kernel-based quadrature. In Section~\ref{s:DPPs}, we recall some basic properties of projection DPPs. Section~\ref{s:DPPKQ} is devoted to the exposition of our main result, along with a sketch of proof. We give precise pointers to the supplementary material for missing details. Finally, in Section~\ref{s:numsims} we illustrate our result and compare to related work using numerical simulations, for the uniform measure in $d=1$ and $2$, and the Gaussian measure on $\mathbb{R}$.

\paragraph{Notation.}
\label{s:notation}
Let $\mathcal{X}$ be a topological space equipped with a Borel measure $\mathrm{d}\omega$ and assume that the support of $\mathrm{d}\omega$ is $\mathcal{X}$. Let $\mathbb{L}_{2}(\mathrm{d}\omega)$ be the Hilbert space of square integrable, real-valued functions defined on $\mathcal{X}$, with the usual inner product denoted by $\langle \cdot, \cdot \rangle_{\mathrm{d}\omega}$, and the associated norm by $\|.\|_{\mathrm{d}\omega}$.
%
Let $k : \mathcal{X} \times \mathcal{X} \rightarrow \mathbb{R}_{+}$ be a symmetric and continuous function such that, for any finite set of points in $\mathcal{X}$, the matrix of pairwise kernel evaluations is positive semi-definite. Denote by $\mathcal{F}$ the associated reproducing kernel Hilbert space (RKHS) of real-valued functions \cite{BeTh11}.
We assume that $x \mapsto k(x,x)$ is integrable with respect to the measure $\mathrm{d}\omega$ so that $\mathcal{F} \subset \mathbb{L}_{2}(\mathrm{d}\omega)$. Define the integral operator
\begin{equation}
\bm{\Sigma} f (\cdot) = \int_{\mathcal{X}} k(\cdot,y)f(y) \mathrm{d}\omega(y), \quad f \in \mathbb{L}_{2}(\mathrm{d}\omega).
\end{equation}
By construction, $\bm{\Sigma}$ is self-adjoint, positive semi-definite, and trace-class \cite{Sim05}.
%
For $m \in \mathbb{N}$, denote by $e_{m}$ the $m$-th eigenfunction of $\bm{\Sigma}$, normalized so that $\|e_{m}\|_{\mathrm{d}\omega} = 1$ and $\sigma_{m}$ the corresponding eigenvalue. The integrability of the diagonal $x \mapsto k(x,x)$ implies that $\mathcal{F}$ is compactly embedded in $\mathbb{L}_{2}(\mathrm{d}\omega)$, that is, the identity map $I_{\mathcal{F}}: \mathcal{F} \longrightarrow \mathbb{L}_{2}(\mathrm{d}\omega)$ is compact; moreover, since $\mathrm{d}\omega$ is of full support in $\mathcal{X}$, $I_{\mathcal{F}}$ is injective \cite{StCh08}. This implies a Mercer-type decomposition of $k$,
\begin{equation}
k(x,y)= \sum\limits_{m \in \mathbb{N}^{*}}\sigma_{m}e_{m}(x)e_{m}(y),
\end{equation}
where $\mathbb{N}^{*} = \mathbb{N} \smallsetminus \{0\}$ and the convergence is point-wise \cite{StSc12}. Moreover, for $m \in \mathbb{N}^{*}$, we write $e_{m}^{\mathcal{F}} = \sqrt{\sigma_{m}}e_{m}$. Since $I_{\mathcal{F}}$ is injective \cite{StSc12}, $(e_{m}^{\mathcal{F}})_{m \in \mathbb{N}^{*}}$ is an orthonormal basis of $\mathcal{F}$. Unless explicitly stated, we assume that $\mathcal{F}$ is dense in $\mathbb{L}_{2}(\mathrm{d}\omega)$, so that $(e_{m})_{m \in \mathbb{N}^{*}}$ is an orthonormal basis of $\mathbb{L}_{2}(\mathrm{d}\omega)$. For more intuition, under these assumptions, $f \in \mathcal{F}$ if and only if $\sum_m \sigma_{m}^{-1} \langle f,e_{m} \rangle_{\mathbb{L}_{2}(\mathrm{d} \omega)}^{2}$ converges.

\section{Related work on kernel-based quadrature}
\label{s:relatedWork}
%
When the integrand $f$ belongs to the RKHS $\mathcal{F}$ of kernel $k$ \citep{CrSh04}, the quadrature error reads \citep{SmGrSoSc07}
\begin{align}
\label{eq:integral_bound_mean_element}
  \int_{\mathcal{X}} f(x)g(x)\mathrm{d}\omega(x) - \sum\limits_{j \in [N]} w_{j}f(x_{j})
  & = \langle f, \mu_{g} - \sum\limits_{j \in [N]} w_{j} k(x_{j},.) \rangle_{\mathcal{F}} \nonumber\\
  & \leq \|f\|_{\mathcal{F}} \, \Big\|\mu_{g} - \sum\limits_{j \in [N]} w_{j} k(x_{j},.)\Big\|_{\mathcal{F}}\,,
\end{align}
where $\mu_{g} = \int_{\mathcal{X}}g(x) k(x,.) \mathrm{d}\omega(x)$ is the so-called \emph{mean element} \citep{DiPi14,MuFuSrSc17}.
A tight approximation of the mean element by a linear combination of functions $k(x_{j},.)$ thus guarantees low quadrature error. The approaches described in this section differ by their choice of nodes and weights.

\subsection{Bayesian quadrature and the design of nodes}
\emph{Bayesian Quadrature} initially \citep{Lar72} considered a fixed set of nodes and put a Gaussian process prior on the integrand $f$. Then, the weights were chosen to minimize the posterior variance of the integral of $f$. If the kernel of the Gaussian process is chosen to be $k$, this amounts to minimizing the RHS of \eqref{eq:integral_bound_mean_element}. The case of the Gaussian reference measure was later investigated in detail \citep{Hag91}, while parametric integrands were considered in \citep{Min00}. Rates of convergence were provided in \citep{BOGOS2019} for specific kernels on compact spaces, under a \emph{fill-in} condition \citep{Wend04} that encapsulates that the nodes must progressively fill up the (compact) space.


Finding the weights that optimize the RHS of \eqref{eq:integral_bound_mean_element} for a fixed set of nodes is a relatively simple task, see later Section~\ref{subsec:unreg_opt_problem}, the cost of which can even be reduced using symmetries of the set of nodes \citep{JaHi18,KaSaOa18}.
Jointly optimizing on the nodes and weights, however, is only possible in specific cases \citep{Boj81,KaSa17}. In general, this corresponds to a non-convex problem with many local minima \citep{HiOe16,Oett17}.
While \cite{RaGh03} proposed to sample i.i.d. nodes from the reference measure $\mathrm{d}\omega$, greedy minimization approaches have also been proposed \citep{HuDu12,Oett17}. In particular, \emph{kernel herding} \citep{ChWeSm10} corresponds to uniform weights and greedily minimizing the RHS in \eqref{eq:integral_bound_mean_element}. This leads to a fast rate in $\mathcal{O}(1/N)$, but only when the integrand is in a finite-dimensional RKHS. Kernel herding and similar forms of \emph{sequential} Bayesian quadrature are actually linked to the Frank-Wolfe algorithm \citep{BaLaOb12,BrOaGiOs15,HuDu12}. Beside the difficulty of proving fast convergence rates, these greedy approaches still require heuristics in practice.

\subsection{Leverage-score quadrature}
\label{s:bach}
In \cite{Bac17}, the author proposed to sample the nodes $(x_j)$ i.i.d. from some proposal distribution $q$, and then pick weights $\hat{\bm{w}}$ in \eqref{e:quadrature} that solve the optimization problem
\begin{equation}\label{eq:reg_kernel_opt_problem}
\min\limits_{w \in \mathbb{R}^{N}} \Big\| \mu_{g} - \sum\limits_{j \in [N]} \frac{w_{j}}{q(x_{j})^{1/2}} k(x_{j},.) \Big\|_{\mathcal{F}}^{2} + \lambda N \|w\|_{2}^{2},
\end{equation}
%
for some regularization parameter $\lambda>0$.
Proposition~\ref{p:bach} gives a bound on the resulting approximation error of the mean element for a specific choice of proposal pdf, namely the  leverage scores
\begin{equation}
q_{\lambda}^*(x) \propto \langle k(x,.), \bm{\Sigma}^{-1/2}(\bm{\Sigma}+ \lambda \mathbb{I}_{\mathbb{L}_{2}(\mathrm{d}\omega)})^{-1}\bm{\Sigma}^{-1/2} k(x,.) \rangle_{\mathbb{L}_{2}(\mathrm{d}\omega)} = \sum\limits_{m \in \mathbb{N}} \frac{\sigma_{m}}{\sigma_{m}+\lambda} e_{m}(x)^{2}.
\label{e:proposalBach}
\end{equation}

\begin{proposition}[Proposition 2 in \cite{Bac17}]
\label{p:bach}
Let $\delta \in [0,1]$, and $d_\lambda = \Tr \bm{\Sigma}(\bm{\Sigma} + \lambda \bm{I})^{-1}$. Assume that $N \geq 5 d_\lambda \log(16 d_\lambda / \delta)$, then
\begin{equation}
\Prb \bigg( \sup\limits_{\|g\|_{\mathrm{d}\omega} \leq 1} \inf\limits_{\|\bm{w}\|^{2}\leq \frac{4}{N}} \Big\| \mu_{g} - \sum\limits_{j \in [N]} \frac{w_{j}}{q_\lambda(x_{j})^{1/2}} k(x_{j},.)\Big\|_{\mathcal{F}}^{2} \leq 4\lambda \bigg) \geq 1- \delta .
\end{equation}
\end{proposition}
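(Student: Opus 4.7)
The plan is to recast the inner minimization in \eqref{eq:reg_kernel_opt_problem} as a ridge regression involving an empirical integral operator, solve it in closed form, and control the resulting residual on a high-probability event on which the leverage-score--weighted empirical operator concentrates sharply around $\bm{\Sigma}$.

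To begin, I would introduce the empirical integral operator
$$\hat{\bm{\Sigma}} := \frac{1}{N}\sum_{j \in [N]} \frac{k(x_j,\cdot) \otimes k(x_j,\cdot)}{q_\lambda^*(x_j)}, \qquad \bm{\Sigma}_\lambda := \bm{\Sigma} + \lambda \mathbb{I}_{\mathbb{L}_{2}(\mathrm{d}\omega)}, \qquad \hat{\bm{\Sigma}}_\lambda := \hat{\bm{\Sigma}} + \lambda \mathbb{I}_{\mathbb{L}_{2}(\mathrm{d}\omega)},$$
which satisfies $\EX \hat{\bm{\Sigma}} = \bm{\Sigma}$ by construction of $q_\lambda^*$. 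Setting the gradient of the objective in \eqref{eq:reg_kernel_opt_problem} to zero and applying a push-through identity of the form $S^*(SS^* + \lambda N D)^{-1} S = \mathbb{I} - \lambda \hat{\bm{\Sigma}}_\lambda^{-1}$ (with $D = \mathrm{diag}(q_\lambda^*(x_j))$), one finds that the optimizer $\hat w$ is explicit and that the residual in $\mathcal{F}$-norm reads $\lambda \hat{\bm{\Sigma}}_\lambda^{-1}\mu_g$. Since $\|g\|_{\mathrm{d}\omega} \leq 1$ implies, by a spectral argument, $\|\mu_g\|_{\mathcal{F}}^{2} = \langle g, \bm{\Sigma} g\rangle_{\mathrm{d}\omega} \leq \|\bm{\Sigma}^{1/2}g\|_{\mathrm{d}\omega}^{2}$, the squared residual is bounded by $\lambda \cdot \|\bm{\Sigma}_\lambda^{1/2}\hat{\bm{\Sigma}}_\lambda^{-1}\bm{\Sigma}_\lambda^{1/2}\|_{\mathrm{op}}^{2} \cdot \lambda$, and a parallel calculation handles $\|\hat w\|_{2}^{2}$.

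I would then work on the deterministic event
$$E := \Big\{\|\bm{\Sigma}_\lambda^{-1/2}(\bm{\Sigma} - \hat{\bm{\Sigma}})\bm{\Sigma}_\lambda^{-1/2}\|_{\mathrm{op}} \leq \tfrac{1}{2}\Big\}.$$
On $E$, the operator inequality $\tfrac{1}{2}\bm{\Sigma}_\lambda \preceq \hat{\bm{\Sigma}}_\lambda \preceq \tfrac{3}{2}\bm{\Sigma}_\lambda$ yields $\|\bm{\Sigma}_\lambda^{1/2}\hat{\bm{\Sigma}}_\lambda^{-1}\bm{\Sigma}_\lambda^{1/2}\|_{\mathrm{op}} \leq 2$, so that the squared residual is at most $4\lambda$ and $\|\hat w\|_{2}^{2} \leq 4/N$, uniformly over $\|g\|_{\mathrm{d}\omega} \leq 1$.

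The main obstacle, and the place where the choice $q = q_\lambda^*$ is crucial, is to show $\Prb(E) \geq 1 - \delta$ under $N \geq 5 d_\lambda \log(16 d_\lambda / \delta)$. I would apply a Bernstein-type concentration inequality for i.i.d.\ sums of self-adjoint Hilbert--Schmidt operators to the centred summands
$$X_j := \frac{1}{N}\bm{\Sigma}_\lambda^{-1/2}\Big(\frac{k(x_j,\cdot)\otimes k(x_j,\cdot)}{q_\lambda^*(x_j)} - \bm{\Sigma}\Big)\bm{\Sigma}_\lambda^{-1/2}.$$
The defining identity \eqref{e:proposalBach} is chosen precisely so that the scalar $\langle k(x,\cdot),\bm{\Sigma}_\lambda^{-1}k(x,\cdot)\rangle_{\mathcal{F}}/q_\lambda^*(x)$ reduces to the constant $d_\lambda$ for $\mathrm{d}\omega$-a.e.\ $x$. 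This simultaneously controls the almost-sure operator norm $\|X_j\|_{\mathrm{op}}$ and the variance proxy $\|\sum_{j}\EX X_j^{2}\|_{\mathrm{op}}$ by quantities proportional to $d_\lambda / N$, so that the effective dimension $d_\lambda$ enters the Bernstein exponent in place of the ambient (possibly infinite) dimension. Combined with the deterministic analysis above, this gives the claimed bound. The finicky part is the bookkeeping of numerical constants needed to recover the explicit threshold $5 d_\lambda \log(16 d_\lambda / \delta)$ rather than a generic $\mathcal{O}(d_\lambda \log(d_\lambda/\delta))$.
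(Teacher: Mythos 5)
The paper never proves this statement---it is imported verbatim from \cite{Bac17}---and your sketch reconstructs essentially Bach's original argument: the closed-form ridge solution with residual $\lambda\hat{\bm{\Sigma}}_\lambda^{-1}\mu_g$, the event $\|\bm{\Sigma}_\lambda^{-1/2}(\bm{\Sigma}-\hat{\bm{\Sigma}})\bm{\Sigma}_\lambda^{-1/2}\|_{\OP}\leq \tfrac12$ yielding simultaneously the $4\lambda$ residual bound and $\|\hat{\bm{w}}\|_2^2\leq 4/N$, and an operator Bernstein bound whose effective dimension is $d_\lambda$ precisely because the leverage-score proposal makes $\langle k(x,\cdot),\bm{\Sigma}_\lambda^{-1}k(x,\cdot)\rangle/q_\lambda^*(x)$ constant. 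One bookkeeping correction: bound the squared residual as $\lambda^2\langle\mu_g,\hat{\bm{\Sigma}}_\lambda^{-2}\mu_g\rangle_{\mathcal{F}}\leq\lambda\langle\mu_g,\hat{\bm{\Sigma}}_\lambda^{-1}\mu_g\rangle_{\mathcal{F}}\leq\lambda\,\|\bm{\Sigma}_\lambda^{1/2}\hat{\bm{\Sigma}}_\lambda^{-1}\bm{\Sigma}_\lambda^{1/2}\|_{\OP}\,\|\bm{\Sigma}_\lambda^{-1/2}\mu_g\|_{\mathcal{F}}^2\leq 2\lambda$ (using $\hat{\bm{\Sigma}}_\lambda^{-2}\preceq\lambda^{-1}\hat{\bm{\Sigma}}_\lambda^{-1}$ and $\|\bm{\Sigma}_\lambda^{-1/2}\mu_g\|_{\mathcal{F}}\leq\|g\|_{\mathrm{d}\omega}$), since your literal expression $\lambda\cdot\|\bm{\Sigma}_\lambda^{1/2}\hat{\bm{\Sigma}}_\lambda^{-1}\bm{\Sigma}_\lambda^{1/2}\|_{\OP}^2\cdot\lambda$ evaluates to $4\lambda^2$ on the event and would only give the claimed $4\lambda$ when $\lambda\leq 1$.
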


In other words, Proposition~\ref{p:bach} gives a uniform  control on the approximation error $\mu_{g}$ by the subspace spanned by the $k(x_{j},.)$ for $g$ belonging to the unit ball of $\mathbb{L}_{2}(\mathrm{d}\omega)$, where the $(x_{j})$ are sampled i.i.d. from $q_{\lambda}^*$. The required number of nodes is equal to $\mathcal{O}(d_{\lambda}\log d_{\lambda})$ for a given approximation error $\lambda$.
However, for fixed $\lambda$, the approximation error in Proposition~\ref{p:bach} does not go to zero when $N$ increases. One theoretical  workaround is to make $\lambda=\lambda(N)$ decrease with $N$. However, the coupling of $N$ and $\lambda$ through $d_\lambda$ makes it very intricate to derive a convergence rate from Proposition~\ref{p:bach}.
Moreover, the optimal density $q_{\lambda}^*$ is in general only available as the limit \eqref{e:proposalBach}, which makes sampling and evaluation difficult. Finally, we note that Proposition~\ref{p:bach} highlights the fundamental role played by the spectral decomposition of the operator $\bm{\Sigma}$ in designing and analyzing kernel quadrature rules.
\section{Projection determinantal point processes}
\label{s:DPPs}
%
%
Let $N \in \mathbb{N}^{*}$ and $(\psi_{n})_{n \in [N]}$ an orthonormal family of $\mathbb{L}_{2}(\mathrm{d}\omega)$, and assume for simplicity that $\mathcal{X}\subset\mathbb{R}^d$ and that $\rm{d}\omega$ has density $\omega$ with respect to the Lebesgue measure. Define the repulsion kernel
\begin{equation}
	\KDPP(x,y) = \sum\limits_{n \in [N]} \psi_{n}(x)\psi_{n}(y),
\end{equation}
not to be mistaken for the RKHS kernel $k$. One can show \cite[Lemma 21]{HKPV06} that
\begin{equation}\label{eq:pdpp_definition}
\frac{1}{N!} \Det(\KDPP(x_{i},x_{j})_{i,j \in [N]}) \prod_{i \in [N]} \omega(x_{i})
\end{equation}
is a probability density over $\mathcal{X}^N$. When $x_1,\dots,x_N$ have distribution \eqref{eq:pdpp_definition}, the set $\bm{x} = \{x_{1}, \dots x_{N} \}$ is said to be a projection DPP\footnote{In the finite case, more common in ML, projection DPPs are also called \emph{elementary} DPPs \cite{KuTa12}.} with reference measure $\mathrm{d}\omega$ and kernel $\KDPP$. Note that the kernel $\KDPP$ is a positive definite kernel so that the determinant in \eqref{eq:pdpp_definition} is non-negative.
Equation \eqref{eq:pdpp_definition} is key to understanding DPPs.
First, loosely speaking, the probability of seeing a point of $\bm{x}$ in an infinitesimal volume around $x_1$ is $\KDPP(x_1,x_1)\omega(x_1)\mathrm{d} x_1$. Note that when $d=1$ and $(\psi_{n})$ are the family of orthonormal polynomials with respect to $\mathrm{d}\omega$, this marginal probability is related to the optimal proposal $q_\lambda$ in Section~\ref{s:bach}; see Appendix~\ref{app:christoffel}. Second, the probability of simultaneously seeing a point of $\bm{x}$ in an infinitesimal volume around $x_1$ and one around $x_2$ is
\begin{align*}
\Big[\KDPP(x_1,x_1)\KDPP(x_2,x_2)  - &\KDPP(x_1,x_2)^2 \Big] \omega(x_1) \omega(x_2)\, \mathrm{d} x_1 \mathrm{d} x_2 \\
&\leq \left[\KDPP(x_1,x_1)\omega(x_1)\mathrm{d}x_1\right]\left[ \KDPP(x_2,x_2)\omega(x_2)\mathrm{d} x_2\right].
\end{align*}
The probability of co-occurrence is thus always smaller than that of a Poisson process with the same intensity. In this sense, a projection DPP with symmetric kernel is a \emph{repulsive} distribution, and $\KDPP$ encodes its repulsiveness.

One advantage of DPPs is that they can be sampled exactly. Because of the orthonormality of $(\psi_n)$, one can write the chain rule for \eqref{eq:pdpp_definition}; see \cite{HKPV06}. Sampling each conditional in turn, using e.g. rejection sampling \cite{RoCa04}, then yields an exact sampling algorithm. Rejection sampling aside, the cost of this algorithm is cubic in $N$ without further assumptions on the kernel. Simplifying assumptions can take many forms. In particular, when $d=1$, and $\omega$ is a Gaussian, gamma \cite{DuEd02}, or beta \cite{KiNe04} pdf, and $(\psi_n)$ are the orthonormal polynomials with respect to $\omega$, the corresponding DPP can be sampled by tridiagonalizing a matrix with independent entries, which takes the cost to $\mathcal{O}(N^2)$ and bypasses the need for rejection sampling. For further information on DPPs see \cite{Joh05, Sos00}.


\section{Kernel quadrature with projection DPPs}\label{s:DPPKQ}
We follow in the footsteps of \cite{Bac17}, see Section~\ref{s:bach}, but using a projection DPP rather than independent sampling to obtain the nodes.
In a nutshell, we consider nodes $(x_{j})_{j \in [N]}$ that are drawn from the projection DPP with reference measure $\mathrm{d}\omega$ and repulsion kernel
\begin{equation}
  \KDPP(x,y) = \sum\limits_{n \in [N]} e_{n}(x)e_{n}(y),
  \label{e:kernel}
\end{equation}
where we recall that $(e_n)$ are the normalized eigenfunctions of the integral operator $\bm{\Sigma}$.
The weights $\bm{w}$ are obtained by solving the optimization problem
\begin{equation}\label{eq:unreg_opt_problem}
\min\limits_{w \in \mathbb{R}^{N}} \| \mu_{g} - \bm{\Phi} \bm{w} \|_{\mathcal{F}}^{2},
\end{equation}
where
\begin{equation}
	\bm{\Phi}:(w_{j})_{j \in [N]} \mapsto \sum_{j \in [N]} w_{j} k(x_{j},.)
\end{equation}
 is the reconstruction operator\footnote{The reconstruction operator $\bm{\Phi}$ depends on the nodes $x_{j}$, although our notation doesn't reflect it for simplicity.}. In Section~\ref{subsec:unreg_opt_problem} we prove that \eqref{eq:unreg_opt_problem} almost surely has a unique solution $\hat{\bm{w}}$ and state our main result, an upper bound on the expected approximation error $\|\mu_{g} - \bm{\Phi}\hat{\bm{w}}\|^{2}_{\mathcal{F}}$ under the proposed Projection DPP. Section~\ref{subsec:dpp_quadrature_error} gives a sketch of the proof of this bound.

\subsection{Main result}
\label{subsec:unreg_opt_problem}
Assuming that nodes $(x_{j})_{j \in [N]}$ are known, we first need to solve the optimization problem \eqref{eq:unreg_opt_problem} that relates to problem~\eqref{eq:reg_kernel_opt_problem} without regularization ($\lambda = 0$).
Let $\bm{x} = (x_{1}, \dots, x_{N}) \in \mathcal{X}^{N}$, then
\begin{equation}
\| \mu_{g} - \bm{\Phi} \bm{w} \|_{\mathcal{F}}^{2} = \|\mu_{g}\|_{\mathcal{F}}^{2} - 2 \bm{w}^{\Tran} \mu_{g}(x_{j})_{j \in [N]} + \bm{w}^{\Tran} \bm{K}(\bm{x}) \bm{w},
\label{e:quadratic}
\end{equation}
where $\bm{K}(\bm{x}) = (k(x_{i},x_{j}))_{i,j \in [N]}$. The right-hand side of \eqref{e:quadratic} is quadratic in $\bm{w}$, so that the optimization problem \eqref{eq:unreg_opt_problem}
admits a unique solution $\hat{\bm{w}}$ if and only if $\bm{K}(\bm{x})$ is invertible. In this case, the solution is given by $\hat{\bm{w}} = \bm{K}(\bm{x})^{-1}\mu_{g}(x_{j})_{j \in [N]}$. A sufficient condition for the invertibility of $\bm{K}(\bm{x})$ is given in the following proposition.
\begin{proposition}\label{prop:K_N_non_singular}
Assume that the matrix
$\bm{E}(\bm{x}) = (e_{i}(x_{j}))_{ i,j \in [N]}$ is invertible, then $\bm{K}(\bm{x})$ is invertible.
\end{proposition}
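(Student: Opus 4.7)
The plan is to use the Mercer decomposition to write $\bm{K}(\bm{x})$ as a (possibly infinite) sum of rank-one matrices built from the eigenfunctions, then exploit the invertibility of $\bm{E}(\bm{x})$ to kill any null vector. Concretely, I would first set $\bm{v}_m = (e_m(x_j))_{j \in [N]} \in \mathbb{R}^N$ and observe that, by the Mercer expansion $k(x,y) = \sum_{m \in \mathbb{N}^*} \sigma_m e_m(x) e_m(y)$ recalled in the notation section, one has the representation
\begin{equation*}
\bm{K}(\bm{x}) \;=\; \sum_{m \in \mathbb{N}^*} \sigma_m \, \bm{v}_m \bm{v}_m^{\Tran}.
\end{equation*}
This representation is the main technical point to justify: pointwise convergence of the Mercer series at each pair $(x_i, x_j)$ is exactly what is stated in the excerpt, and since we only sum finitely many such scalars to form each entry of $\bm{K}(\bm{x})$, the matrix identity follows immediately entry-wise.

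Next, I would note that $\bm{K}(\bm{x})$ is symmetric and positive semi-definite as a non-negative combination of rank-one PSD matrices. Therefore, showing invertibility reduces to showing that its kernel is trivial. Take $\bm{u} \in \mathbb{R}^N$ with $\bm{K}(\bm{x}) \bm{u} = 0$; then
\begin{equation*}
0 \;=\; \bm{u}^{\Tran} \bm{K}(\bm{x}) \bm{u} \;=\; \sum_{m \in \mathbb{N}^*} \sigma_m \, (\bm{u}^{\Tran} \bm{v}_m)^2,
\end{equation*}
so $\bm{u}^{\Tran} \bm{v}_m = 0$ for every $m$ with $\sigma_m > 0$. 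The standing density assumption $\overline{\mathcal{F}} = \mathbb{L}_2(\mathrm{d}\omega)$ made in the notation paragraph guarantees that $\sigma_m > 0$ for all $m \in \mathbb{N}^*$, so in particular this orthogonality holds for $m \in [N]$.

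Finally, I would read the relations $\bm{u}^{\Tran} \bm{v}_m = 0$ for $m \in [N]$ as $\bm{E}(\bm{x}) \bm{u} = 0$, since $\bm{v}_m$ is the $m$-th row of $\bm{E}(\bm{x})$. By hypothesis $\bm{E}(\bm{x})$ is invertible, hence $\bm{u} = 0$, which proves that $\bm{K}(\bm{x})$ has trivial kernel and is therefore invertible. The only subtlety I anticipate is the housekeeping around the infinite Mercer sum and the use of the density assumption to ensure positivity of all $\sigma_m$; everything else is linear algebra.
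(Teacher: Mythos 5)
Your proof is correct, but it follows a genuinely different route from the paper's. You argue via the quadratic form: writing $\bm{K}(\bm{x})=\sum_{m\in\mathbb{N}^{*}}\sigma_{m}\bm{v}_{m}\bm{v}_{m}^{\Tran}$ with $\bm{v}_{m}=(e_{m}(x_{j}))_{j\in[N]}$ (legitimate entry-wise by pointwise Mercer convergence), you deduce from $\bm{u}^{\Tran}\bm{K}(\bm{x})\bm{u}=0$ that $\bm{u}^{\Tran}\bm{v}_{m}=0$ for every $m$ with $\sigma_{m}>0$, and then the invertibility of $\bm{E}(\bm{x})$ forces $\bm{u}=0$; the only assumption you invoke is the standing density hypothesis, which guarantees $\sigma_{m}>0$ for $m\in[N]$. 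The paper instead truncates the Mercer series, factorizes the truncated Gram matrix as $\bm{K}_{M}(\bm{x})=\bm{F}_{M}(\bm{x})^{\Tran}\bm{\Sigma}_{M}\bm{F}_{M}(\bm{x})$, applies the Cauchy--Binet identity to get $\Det\bm{K}_{M}(\bm{x})\geq\prod_{i\in[N]}\sigma_{i}\,\Det^{2}\bm{E}(\bm{x})$, and passes to the limit $M\to\infty$ using continuity of the determinant, obtaining the quantitative lower bound $\Det\bm{K}(\bm{x})\geq\prod_{i\in[N]}\sigma_{i}\,\Det^{2}\bm{E}(\bm{x})>0$. Your null-space argument is more elementary, avoiding truncation, determinant continuity and Cauchy--Binet entirely, and it makes the role of the hypothesis on $\bm{E}(\bm{x})$ transparent; what it does not give is the explicit lower bound on $\Det\bm{K}(\bm{x})$, and the paper's truncation-plus-Cauchy--Binet machinery is not incidental, since it is reused later (notably in the proof of Lemma~\ref{lemma:truncated_Fredholm_formula}). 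Both proofs ultimately need $\sigma_{1},\dots,\sigma_{N}>0$, which you state explicitly and the paper uses implicitly.
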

The proof of Proposition~\ref{prop:K_N_non_singular} is given in Appendix~\ref{app:K_N_non_singular}.
Since the pdf \eqref{eq:pdpp_definition} of the projection DPP with kernel \eqref{e:kernel} is proportional to $\Det^2\bm{E}(\bm{x})$, the following corollary immediately follows.

\begin{corollary}
  Let $\bm{x} = \{x_{1}, \dots , x_{N}\}$ be a projection DPP with reference measure $\mathrm{d}\omega$ and kernel \eqref{e:kernel}. Then $\bm{K}(\bm{x})$ is a.s. invertible, so that \eqref{eq:unreg_opt_problem} has unique solution $\hat{\bm{w}} = \bm{K}(\bm{x})^{-1}\mu_{g}(x_{j})_{j \in [N]}$ a.s.
\label{c:regularization}
\end{corollary}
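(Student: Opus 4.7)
The plan is to derive this corollary as an essentially immediate consequence of Proposition~\ref{prop:K_N_non_singular} combined with the specific form of the DPP density in \eqref{eq:pdpp_definition}. First I would rewrite the Gram-type matrix $(\KDPP(x_i,x_j))_{i,j \in [N]}$ in factored form. Since $\KDPP(x,y) = \sum_{n \in [N]} e_n(x) e_n(y)$, the identity $[\bm{E}(\bm{x})^{\Tran}\bm{E}(\bm{x})]_{ij} = \sum_{n \in [N]} e_n(x_i) e_n(x_j) = \KDPP(x_i,x_j)$ holds, so $\Det(\KDPP(x_i,x_j))_{i,j} = (\Det \bm{E}(\bm{x}))^2$. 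Substituting into \eqref{eq:pdpp_definition}, the DPP density with respect to $\mathrm{d}\omega^{\otimes N}$ is proportional to $(\Det \bm{E}(\bm{x}))^2$, which vanishes exactly on the set $\mathcal{S} = \{\bm{x} \in \mathcal{X}^N : \bm{E}(\bm{x}) \text{ is singular}\}$.

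The second step is to argue that $\Prb(\bm{x} \in \mathcal{S}) = 0$ under the projection DPP: integrating the non-negative density over $\mathcal{S}$ gives zero since the integrand is identically zero there. Hence $\bm{E}(\bm{x})$ is almost surely invertible, and Proposition~\ref{prop:K_N_non_singular} then transfers this to $\bm{K}(\bm{x})$, which is therefore almost surely invertible as well.

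Finally, I would close the loop by returning to the quadratic expansion \eqref{e:quadratic}. On the a.s.\ event that $\bm{K}(\bm{x})$ is invertible, the objective is a strictly convex quadratic in $\bm{w}$ whose gradient vanishes uniquely at $\hat{\bm{w}} = \bm{K}(\bm{x})^{-1}(\mu_{g}(x_{j}))_{j \in [N]}$, yielding the stated minimizer. I do not anticipate any genuine obstacle here: the corollary is a bookkeeping exercise exploiting the fortunate coincidence that the DPP density is precisely the squared determinant whose nonvanishing governs invertibility. The only mild nuance is measure-theoretic, namely that one needs the density to actually vanish on $\mathcal{S}$ rather than merely be bounded there, which is automatic from the factorization $\KDPP\text{-matrix} = \bm{E}^{\Tran}\bm{E}$.
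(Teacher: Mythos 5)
Your proposal is correct and follows essentially the same route as the paper: the paper simply notes that the DPP density \eqref{eq:pdpp_definition} with kernel \eqref{e:kernel} is proportional to $\Det^{2}\bm{E}(\bm{x})$, so $\Det\bm{E}(\bm{x})\neq 0$ almost surely, and Proposition~\ref{prop:K_N_non_singular} together with the quadratic form \eqref{e:quadratic} gives the a.s.\ invertibility of $\bm{K}(\bm{x})$ and the unique minimizer $\hat{\bm{w}} = \bm{K}(\bm{x})^{-1}\mu_{g}(x_{j})_{j\in[N]}$. Your write-up just makes explicit the factorization $(\KDPP(x_i,x_j))_{i,j} = \bm{E}(\bm{x})^{\Tran}\bm{E}(\bm{x})$ and the measure-theoretic step, which the paper leaves implicit.
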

We now give our main result that uses nodes $(x_{j})_{j \in [N]}$ drawn from a well-chosen projection DPP.
\begin{theorem}\label{thm:main_theorem}
Let $\bm{x} = \{x_{1}, \dots , x_{N}\}$ be a projection DPP with reference measure $\mathrm{d}\omega$ and kernel \eqref{e:kernel}. Let $\hat{\bm{w}}$ be the unique solution to \eqref{eq:unreg_opt_problem} and define $\displaystyle \|g\|_{\mathrm{d}\omega,1} = \sum\limits_{n \in [N]} |\langle e_{n},g \rangle_{d\omega}|$. Assume that $\|g\|_{\mathrm{d}\omega} \leq 1$ and define $r_{N} = \sum\limits_{m \geq N+1} \sigma_{m}$, then
\begin{equation}\label{eq:main_result}
\EX_{\DPP} \|\mu_{g} - \bm{\Phi}\hat{\bm{w}}\|_{\mathcal{F}}^{2}  \leq
2\sigma_{N+1} +2\|g\|_{\mathrm{d}\omega,1}^{2}\left(   N r_{N} + \sum\limits_{\ell =2}^{N} \frac{\sigma_{1}}{\ell!^{2}}\left(\frac{Nr_{N}}{\sigma_{1}}\right)^{\ell} \right).
\end{equation}
\end{theorem}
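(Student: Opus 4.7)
The plan is to exploit the optimality of $\hat{\bm{w}}$: for any candidate weight vector $\bm{w}_0$, one has $\|\mu_g - \bm{\Phi}\hat{\bm{w}}\|_{\mathcal{F}}^2 \leq \|\mu_g - \bm{\Phi}\bm{w}_0\|_{\mathcal{F}}^2$ pointwise in $\bm{x}$, so it suffices to analyze a well-chosen candidate. I would take $\bm{w}_0$ to reproduce the integrals of the first $N$ eigenfunctions exactly, namely $\bm{w}_0 = \bm{E}(\bm{x})^{-\Tran}(\alpha_n)_{n \in [N]}$ with $\alpha_n := \langle e_n, g\rangle_{\mathrm{d}\omega}$ and $\bm{E}(\bm{x}) = (e_i(x_j))_{i,j\in[N]}$. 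By Corollary~\ref{c:regularization}, this candidate is well-defined almost surely under the DPP.

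Next I would use the Mercer expansion together with the fact that $(\sqrt{\sigma_m}\, e_m)$ is an orthonormal basis of $\mathcal{F}$ to write $\mu_g = \sum_m \sigma_m \alpha_m e_m$ and $\bm{\Phi}\bm{w}_0 = \sum_m \sigma_m \beta_m e_m$, where $\beta_m = \sum_j w_{0,j}\, e_m(x_j)$. By construction $\beta_m = \alpha_m$ for $m \in [N]$, so the error collapses to a tail sum:
\[
\|\mu_g - \bm{\Phi}\bm{w}_0\|_{\mathcal{F}}^2 = \sum_{m > N} \sigma_m (\alpha_m - \beta_m)^2 \leq 2 \sum_{m > N}\sigma_m \alpha_m^2 + 2 \sum_{m > N}\sigma_m \beta_m^2.
\]
The first sum is at most $\sigma_{N+1}\|g\|_{\mathrm{d}\omega}^2 \leq \sigma_{N+1}$, which produces the $2\sigma_{N+1}$ contribution in the statement.

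For the second, ``spillover'' sum, Cramer's rule lets me write $\beta_m = \sum_{n \in [N]} \alpha_n\, u_{mn}$, where $u_{mn} = D_{n,m}(\bm{x}) / \Det \bm{E}(\bm{x})$ and $D_{n,m}(\bm{x})$ denotes the determinant obtained from $\bm{E}(\bm{x})$ by replacing its $n$-th row by $(e_m(x_j))_{j\in[N]}$. The key cancellation is that the projection-DPP density is proportional to $\Det(\bm{E})^2$, so under the DPP expectation the denominator of $u_{mn}^2$ disappears and one is left with pure integrals of squared determinants of eigenfunction matrices. These I would evaluate by the Andreief identity, and thanks to the orthonormality of $(e_m)$ in $\mathbb{L}_2(\mathrm{d}\omega)$ the resulting Gram determinants take value $0$ or $1$ for $m > N$, in particular giving $\EX u_{mn}^2 = 1$.

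A Hölder bound $\beta_m^2 \leq \|g\|_{\mathrm{d}\omega,1}^2 \max_n u_{mn}^2 \leq \|g\|_{\mathrm{d}\omega,1}^2 \sum_n u_{mn}^2$ combined with the identity above already produces the leading contribution $\|g\|_{\mathrm{d}\omega,1}^2 \cdot N r_N$. The higher-order terms $\sum_{\ell \geq 2}^{N} (\sigma_1/\ell!^2)(N r_N/\sigma_1)^\ell$ should emerge from a sharper control of $\max_n u_{mn}^2$ via higher even moments $\EX u_{mn}^{2\ell}$, which are again Andreief integrals but with $e_m$ now appearing with multiplicity $\ell$; the $1/\ell!^2$ factor would arise from symmetrization over the $\ell$ copies, the truncation at $\ell = N$ reflects the rank-$N$ projection, and the scaling $(Nr_N/\sigma_1)^\ell$ from summing over tail indices. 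The hard part will be precisely this higher-moment step: organizing the Cauchy--Binet combinatorics for repeated rows so that they reassemble into the explicit series in $\ell$ that appears in the statement.
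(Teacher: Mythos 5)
Your proposal is correct, and it takes a genuinely different route from the paper. The paper proceeds via Lemma~\ref{lemma:approximation_error_spectral_bound} (splitting off $2\sigma_{N+1}$ and the interpolation errors $\sigma_n\|\bm{\Pi}_{\mathcal{T}(\bm{x})^{\perp}}e_n^{\mathcal{F}}\|_{\mathcal{F}}^2$), then bounds these by principal angles, majorizes the largest angle by the \emph{product} of all $1/\cos^2\theta_\ell$ (Lemma~\ref{lemma:max_error_cos}), flattens the top-$N$ spectrum through leverage-score rank-one updates (Proposition~\ref{prop:kernel_perturbation_inequality}) so that the expectation of that product becomes the explicit eigenvalue ratio of Proposition~\ref{prop:expected_value_of_product_of_cos}, and finally invokes Maclaurin's inequality. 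You instead compare $\hat{\bm{w}}$ with the explicit quasi-interpolation weights $\bm{w}_0$ solving $\bm{E}(\bm{x})\bm{w}_0=(\alpha_n)_{n\in[N]}$ (note it is $\bm{E}(\bm{x})^{-1}$, not $\bm{E}(\bm{x})^{-\Tran}$, with the paper's convention $\bm{E}(\bm{x})=(e_i(x_j))_{i,j}$), expand in the basis $(e_m^{\mathcal{F}})$, and use Cramer's rule so that the $\Det^2\bm{E}(\bm{x})$ in the DPP density cancels the denominator of $u_{mn}^2$; the remaining integral is exactly the Andreief/Gram identity already used in the paper (Lemma~\ref{lemma:truncated_Fredholm_formula}, via \cite{HKPV06}), giving $\EX_{\DPP}u_{mn}^2=1$ for $m>N$. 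Your chain $\beta_m^2\le\|g\|_{\mathrm{d}\omega,1}^2\sum_{n\in[N]}u_{mn}^2$ then yields $\EX_{\DPP}\|\mu_g-\bm{\Phi}\hat{\bm{w}}\|_{\mathcal{F}}^2\le 2\sigma_{N+1}+2\|g\|_{\mathrm{d}\omega,1}^2\,N r_N$, which is \emph{sharper} than \eqref{eq:main_result} since the terms $\ell\ge 2$ there are nonnegative; the theorem follows immediately. In other words, the ``hard part'' you postpone (recovering the $\ell\ge2$ series from higher moments $\EX u_{mn}^{2\ell}$) is unnecessary — you do not need to reproduce the paper's right-hand side, only to dominate it — and it would in any case be an awkward path, since repeated-row determinants vanish and the higher moments do not organize themselves into that series. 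What each approach buys: yours is more elementary, avoids principal angles, the leverage-score perturbation argument, and the admittedly loose product-over-angles majorization (it effectively realizes the ``replace the product by a sum'' improvement the authors mention in Section~\ref{s:discussion}); the paper's route, in exchange, provides the geometric picture and the exact expectation identity of Proposition~\ref{prop:expected_value_of_product_of_cos}, which have independent interest.
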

In particular, if $Nr_{N} = o(1)$, then the right-hand side of \eqref{eq:main_result} is $N r_{N} + o(N r_{N})$. For example, take $\mathcal{X} = [0,1]$, $\mathrm{d}\omega$ the uniform measure on $\mathcal{X}$, and $\mathcal{F}$ the $s$-Sobolev space, then $\sigma_{m} = m^{-2s}$ \cite{BeTh11}. Now, if $s > 1$, the expected worst case quadrature error is bounded by $Nr_{N} = \mathcal{O}(N^{2-2s}) = o(1)$. Another example is the case of the Gaussian measure on $\mathcal{X} = \mathbb{R}$, with the Gaussian kernel. In this case $\sigma_{m} = \beta \alpha^{m}$ with $0 < \alpha < 1$ and $\beta >0$ \cite{RaWi06} so that $Nr_{N} = N\frac{\beta}{1-\alpha}\alpha^{N+1}=  o(1)$.

We have assumed that $\mathcal{F}$ is dense in $\mathbb{L}_{2}(\mathrm{d}\omega)$ but Theorem~\ref{thm:main_theorem} is valid also when $\mathcal{F}$ is finite-dimensional. In this case, denote $N_{0} = \dim \mathcal{F}$. Then, for $n > N_{0}$, $\sigma_{n} = 0$ and $r_{N_{0}} = 0$, so that \eqref{eq:main_result} implies
\begin{equation}
  \| \mu_{g} - \bm{\Phi}\hat{\bm{w}}\|_{\mathcal{F}}^{2} = 0 \:\: \text{a.s.}
\end{equation}
This compares favourably with herding, for instance, which comes with a rate in $\mathcal{O}(\frac{1}{N})$ for the quadrature based on herding with uniform weights \cite{BaLaOb12,ChWeSm10}.

The constant $\|g\|_{\mathrm{d}\omega,1}$ in $\eqref{eq:main_result}$ is the $\ell_1$ norm of the coefficients of projection of $g$ onto $\Span(e_{n})_{n \in [N]}$ in $\mathbb{L}_{2}(\mathrm{d}\omega)$. For example, for $g = e_{n}$, $\|g\|_{\mathrm{d}\omega,1} = 1$ if $n \in [N]$ and $\|g\|_{\mathrm{d}\omega,1} = 0$ if $n \geq N+1$. In the worst case, $\|g\|_{\mathrm{d}\omega,1} \leq \sqrt{N} \|g\|_{\mathrm{d}\omega} \leq \sqrt{N}$.
Thus, we can obtain a uniform bound for $\|g\|_{\mathrm{d}\omega}\leq 1$ in the spirit of Proposition~\ref{p:bach}, but with a supplementary factor $N$ in the upper bound in \eqref{eq:main_result}.

\subsection{Bounding the approximation error under the DPP}
\label{subsec:dpp_quadrature_error}

In this section, we give the skeleton of the proof of Theorem~\ref{thm:main_theorem}, referring to the appendices for technical details. The proof is in two steps. First, we give an upper bound for the approximation error $\|\mu_{g} - \bm{\Phi}\hat{\bm{w}}\|^{2}_{\mathcal{F}}$ that involves the maximal principal angle between the functional subspaces of $\mathcal{F}$
$$\mathcal{E}_N^{\mathcal{F}} = \Span(e_{n}^{\mathcal{F}})_{n \in [N]} \quad\text{ and }\quad \mathcal{T}(\bm{x}) = \Span(k(x_{j},.))_{j \in [N]}.$$
DPPs allow closed form expressions for the expectation of trigonometric functions of such angles; see \cite{BeBaCh18Sub} and Appendix~\ref{app:geometric_intuition} for the geometric intuition behind the proof.
The second step thus consists in developing the expectation of the bound under the DPP.

\subsubsection{Bounding the approximation error using principal angles}
\label{subsec:approximation_error_principal_angle}
 Let $\bm{x} =  (x_{1}, \dots , x_{N}) \in \mathcal{X}^{N} $ be such that $\Det \bm{E}(\bm{x}) \neq 0$. By Proposition~\ref{prop:K_N_non_singular}, $\bm{K}(\bm{x})$ is non singular and $\dim \mathcal{T}(\bm{x}) = N$. The optimal approximation error writes
\begin{equation}
\|\mu_{g} - \bm{\Phi}\hat{\bm{w}}\|^{2}_{\mathcal{F}} =\|\mu_{g} - \bm{\Pi}_{\mathcal{T}(\bm{x})}\mu_{g}\|^{2}_{\mathcal{F}},\label{e:finalTool}
\end{equation}
where $\bm{\Pi}_{\mathcal{T}(\bm{x})} = \bm{\Phi}(\bm{\Phi}^{*}\bm{\Phi})^{-1}\bm{\Phi}^{*}$ is the orthogonal projection onto $\mathcal{T}(\bm{x})$ with $\bm{\Phi}^{*}$ the dual\footnote{For $\mu \in \mathcal{F}$,$\bm{\Phi}^{*}\mu = (\mu(x_{j}))_{j \in [N]}$. $\bm{\Phi}^{*}\bm{\Phi}$ is an operator from $\mathbb{R}^{N}$ to $\mathbb{R}^{N}$ that can be identified with $\bm{K}(\bm{x})$.} of $\bm{\Phi}$.

In other words, \eqref{e:finalTool} equates the approximation error to $\|\bm{\Pi}_{\mathcal{T}(\bm{x})^{\perp}}\mu_g\|^{2}_{\mathcal{F}}$, where $\bm{\Pi}_{\mathcal{T}(\bm{x})^{\perp}}$ is the orthogonal projection onto $\mathcal{T}(\bm{x})^{\perp}$.
Now we have the following lemma.

\begin{lemma}\label{lemma:approximation_error_spectral_bound}
Assume that $\|g\|_{\mathrm{d}\omega} \leq 1$ then $\| \bm{\Sigma}^{-1/2} \mu_{g} \|_{\mathcal{F}} \leq 1$ and
\begin{equation}
	\|\bm{\Pi}_{\mathcal{T}(\bm{x})^{\perp}}\mu_{g}\|^{2}_{\mathcal{F}} \leq 2 \bigg( \sigma_{N+1} + \|g\|_{\mathrm{d}\omega,1}^2  \max\limits_{ n \in [N]} \sigma_{n}\|\bm{\Pi}_{\mathcal{T}(\bm{x})^{\perp}} e_{n}^{\mathcal{F}}\|_{\mathcal{F}}^{2} \bigg) .
\label{e:boundingOrthogonalProjection}
\end{equation}
\end{lemma}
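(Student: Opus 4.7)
The plan is to expand $\mu_g$ in the RKHS orthonormal basis $(e_n^{\mathcal{F}})_{n\in\mathbb{N}^*}$, split it into a ``head'' (first $N$ modes) and ``tail'' (remaining modes), and control the projection error on each piece separately.

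First, using the Mercer expansion of $k$ and the reproducing property, one computes
\[
\langle \mu_g, e_m^{\mathcal{F}} \rangle_{\mathcal{F}} \;=\; \int g(y) e_m^{\mathcal{F}}(y)\, \mathrm{d}\omega(y) \;=\; \sqrt{\sigma_m}\, \langle g, e_m \rangle_{\mathrm{d}\omega},
\]
so that $\mu_g = \sum_m \sqrt{\sigma_m}\,\langle g, e_m \rangle_{\mathrm{d}\omega}\, e_m^{\mathcal{F}}$. Applying $\bm{\Sigma}^{-1/2}$ (which acts diagonally on the basis $(e_m)$ of $\mathbb{L}_2(\mathrm{d}\omega)$, with eigenvalues $\sigma_m^{-1/2}$) and using $e_m^{\mathcal{F}} = \sqrt{\sigma_m}\,e_m$, the coefficients collapse to $\langle g, e_m \rangle_{\mathrm{d}\omega}$, so that $\|\bm{\Sigma}^{-1/2}\mu_g\|_{\mathcal{F}}^2 = \|g\|_{\mathrm{d}\omega}^2 \leq 1$. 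This disposes of the first claim.

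For the second claim, write $g = g_1 + g_2$ with $g_1 = \sum_{n=1}^{N} \langle g, e_n \rangle_{\mathrm{d}\omega}\, e_n$ and $g_2 = g - g_1$, and split $\mu_g = \mu_{g_1} + \mu_{g_2}$ accordingly. The inequality $\|a+b\|^2 \leq 2\|a\|^2 + 2\|b\|^2$ applied after projecting yields
\[
\|\bm{\Pi}_{\mathcal{T}(\bm{x})^{\perp}}\mu_g\|_{\mathcal{F}}^2 \;\leq\; 2\|\bm{\Pi}_{\mathcal{T}(\bm{x})^{\perp}}\mu_{g_1}\|_{\mathcal{F}}^2 + 2\|\bm{\Pi}_{\mathcal{T}(\bm{x})^{\perp}}\mu_{g_2}\|_{\mathcal{F}}^2.
\]
For the tail term I bound $\|\bm{\Pi}_{\mathcal{T}(\bm{x})^{\perp}}\mu_{g_2}\|_{\mathcal{F}}^2 \leq \|\mu_{g_2}\|_{\mathcal{F}}^2 = \sum_{n > N} \sigma_n \langle g, e_n\rangle_{\mathrm{d}\omega}^2$ and then use the monotonicity $\sigma_n \leq \sigma_{N+1}$ for $n \geq N+1$ together with $\|g\|_{\mathrm{d}\omega} \leq 1$ to get $\sigma_{N+1}$.

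For the head term, I use the fact that $\mu_{g_1} = \sum_{n=1}^N \sqrt{\sigma_n} \langle g, e_n\rangle_{\mathrm{d}\omega}\, e_n^{\mathcal{F}}$ is a finite sum of $N$ basis vectors, apply the triangle inequality after projection, and pull the worst coefficient out of the sum:
\[
\|\bm{\Pi}_{\mathcal{T}(\bm{x})^{\perp}}\mu_{g_1}\|_{\mathcal{F}} \;\leq\; \Big(\max_{n \in [N]} \sqrt{\sigma_n}\,\|\bm{\Pi}_{\mathcal{T}(\bm{x})^{\perp}} e_n^{\mathcal{F}}\|_{\mathcal{F}} \Big) \sum_{n=1}^N |\langle g, e_n \rangle_{\mathrm{d}\omega}|.
\]
The latter sum is precisely $\|g\|_{\mathrm{d}\omega,1}$; squaring gives the remaining term in \eqref{e:boundingOrthogonalProjection}. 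The only mildly delicate point is keeping track of the $\sqrt{\sigma_n}$ vs $\sigma_n$ factors when converting from $e_n^{\mathcal{F}}$ to $e_n$; the rest is just the triangle inequality and orthogonality, and I expect no real obstacle.
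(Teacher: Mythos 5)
Your proposal is correct and follows essentially the same route as the paper: the head/tail split of $g$ at index $N$ is exactly the paper's decomposition $\bm{\Sigma}^{1/2}=\bm{\Sigma}_{N}^{1/2}+\bm{\Sigma}_{N}^{\perp 1/2}$ applied to $\tilde{\mu}_{g}=\bm{\Sigma}^{-1/2}\mu_{g}$, followed by the same $\|a+b\|^{2}\leq 2\|a\|^{2}+2\|b\|^{2}$ step, the same $\sigma_{N+1}$ bound on the tail, and an equivalent bound on the head (triangle inequality with the worst factor pulled out, where the paper uses Cauchy--Schwarz on the double sum, giving the identical term $\|g\|_{\mathrm{d}\omega,1}^{2}\max_{n\in[N]}\sigma_{n}\|\bm{\Pi}_{\mathcal{T}(\bm{x})^{\perp}}e_{n}^{\mathcal{F}}\|_{\mathcal{F}}^{2}$).
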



Now, to upper bound the right-hand side of \eqref{e:boundingOrthogonalProjection}, we note that $\sigma_{n}\|\bm{\Pi}_{\mathcal{T}(\bm{x})^{\perp}} e_{n}^{\mathcal{F}}\|_{\mathcal{F}}^{2}$ is the product of two terms: $\sigma_{n}$ is a decreasing function of $n$ while $\|\bm{\Pi}_{\mathcal{T}(\bm{x})^{\perp}} e_{n}^{\mathcal{F}}\|_{\mathcal{F}}^{2}$ is the interpolation error of the eigenfunction $e_{n}^{\mathcal{F}}$, measured in the $\|.\|_{\mathcal{F}}$ norm.
We can bound the latter interpolation error uniformly in $n\in [N]$ using the geometric notion of maximal principal angle between $\mathcal{T}(\bm{x})$ and $\mathcal{E}^{\mathcal{F}}_{N} = \Span(e_{n}^{\mathcal{F}})_{ n \in [N]}$.
This maximal principal angle is defined through its cosine
\begin{equation}
	\cos^{2} \theta_{N}(\mathcal{T}(\bm{x}),\mathcal{E}^{\mathcal{F}}_{N}) = \inf\limits_{\substack{\bm{u} \in \mathcal{T}(\bm{x}), \bm{v} \in \mathcal{E}^{\mathcal{F}}_{N}\\ \|u\|_{\mathcal{F}} = 1, \|v\|_{\mathcal{F}} = 1}} \langle \bm{u}, \bm{v} \rangle_{\mathcal{F}}.
\end{equation}
Similarly, we can define the $N$ principal angles $\theta_{n}(\mathcal{T}(\bm{x}),\mathcal{E}^{\mathcal{F}}_{N}) \in \left[0, \frac{\pi}{2}\right]$ for $ n\in [N]$ between the subspaces $\mathcal{E}^{\mathcal{F}}_{N}$ and $\mathcal{T}(\bm{x})$. These angles quantify the relative position of the two subspaces. See Appendix~\ref{subsec:principalangles} for more details about principal angles.
Now, we have the following lemma.
\begin{lemma}\label{lemma:max_error_cos}
Let $\bm{x} = (x_{1}, \dots , x_{N}) \in \mathcal{X}^{N}$ such that $\Det \bm{E}(\bm{x}) \neq 0$. Then
\begin{equation}\label{eq:max_error_as_function_of_cos}
	\max_{ n \in [N]}\|\bm{\Pi}_{\mathcal{T}(\bm{x})^{\perp}} e_{n}^{\mathcal{F}}\|_{\mathcal{F}}^{2} \leq \frac{1}{\cos^{2} \theta_{N}(\mathcal{T}(\bm{x}),\mathcal{E}^{\mathcal{F}}_{N})} - 1  \leq \prod\limits_{n \in [N]}\frac{1}{\cos^{2} \theta_{n}(\mathcal{T}(\bm{x}),\mathcal{E}^{\mathcal{F}}_{N})} - 1.
\end{equation}
\end{lemma}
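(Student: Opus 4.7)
The plan is to reduce everything to the variational characterization of the maximal principal angle between the two $N$-dimensional subspaces $\mathcal{T}(\bm{x})$ and $\mathcal{E}^{\mathcal{F}}_N$ of $\mathcal{F}$. The hypothesis $\Det \bm{E}(\bm{x}) \neq 0$, combined with Proposition~\ref{prop:K_N_non_singular}, gives $\dim \mathcal{T}(\bm{x}) = N$, so the $N$ principal angles $\theta_1, \dots, \theta_N$ between the two subspaces are well-defined.

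First, since $(e_m^{\mathcal{F}})_{m \in \mathbb{N}^*}$ is an orthonormal basis of $\mathcal{F}$, each $e_n^{\mathcal{F}}$ is a unit vector, and for $n \in [N]$ it belongs to $\mathcal{E}^{\mathcal{F}}_N$. Applying the Pythagorean identity to the complementary orthogonal projections $\bm{\Pi}_{\mathcal{T}(\bm{x})}$ and $\bm{\Pi}_{\mathcal{T}(\bm{x})^{\perp}}$ yields
$$\|\bm{\Pi}_{\mathcal{T}(\bm{x})^{\perp}} e_n^{\mathcal{F}}\|_{\mathcal{F}}^{2} = 1 - \|\bm{\Pi}_{\mathcal{T}(\bm{x})} e_n^{\mathcal{F}}\|_{\mathcal{F}}^{2}.$$
Proving the first inequality of \eqref{eq:max_error_as_function_of_cos} therefore reduces to uniformly lower bounding $\|\bm{\Pi}_{\mathcal{T}(\bm{x})} e_n^{\mathcal{F}}\|_{\mathcal{F}}^{2}$ over $n \in [N]$.

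Second, I invoke the standard variational formula for the largest principal angle between two equidimensional subspaces (recalled in Appendix~\ref{subsec:principalangles}),
$$\cos^{2} \theta_N\bigl(\mathcal{T}(\bm{x}), \mathcal{E}^{\mathcal{F}}_N\bigr) = \min_{v \in \mathcal{E}^{\mathcal{F}}_N,\, \|v\|_{\mathcal{F}} = 1} \|\bm{\Pi}_{\mathcal{T}(\bm{x})} v\|_{\mathcal{F}}^{2}.$$
Each $e_n^{\mathcal{F}}$ with $n \in [N]$ is a feasible competitor in this minimization, so $\|\bm{\Pi}_{\mathcal{T}(\bm{x})} e_n^{\mathcal{F}}\|_{\mathcal{F}}^{2} \ge \cos^{2} \theta_N$. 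Plugging this back into the Pythagoras identity gives
$$\|\bm{\Pi}_{\mathcal{T}(\bm{x})^{\perp}} e_n^{\mathcal{F}}\|_{\mathcal{F}}^{2} \le 1 - \cos^{2} \theta_N = \sin^{2} \theta_N \le \tan^{2} \theta_N = \frac{1}{\cos^{2} \theta_N} - 1,$$
where the last inequality uses $\cos^{2} \theta_N \in [0,1]$. Taking the maximum over $n \in [N]$ yields the first claimed bound; the degenerate case $\cos \theta_N = 0$ is harmless because the right-hand side is then $+\infty$.

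The second inequality is a one-line algebraic observation. Since $\cos^{2} \theta_n \in [0,1]$ for every $n \in [N]$, each factor $1/\cos^{2} \theta_n$ is at least $1$, so $\prod_{n \in [N]} 1/\cos^{2} \theta_n \ge 1/\cos^{2} \theta_N$, and subtracting one finishes the argument. There is no genuine obstacle in this lemma: the only substantive ingredient is the min-characterization of $\cos^{2} \theta_N$, and the passage from $\sin^{2} \theta_N$ to $\tan^{2} \theta_N$ is a deliberately loose step introduced so that the bound factorizes through a product, in a form convenient for later taking expectations of $\prod_n 1/\cos^{2} \theta_n$ under the projection DPP in the proof of Theorem~\ref{thm:main_theorem}.
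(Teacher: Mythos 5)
Your proof is correct and follows the route the paper intends: the paper never writes out a proof of this lemma explicitly, but its stated ingredient is exactly the variational characterization of $\cos\theta_{N}$ in Appendix~\ref{subsec:principalangles} (Proposition~\ref{prop:cos_between_subspaces}, Equation~\eqref{eq:costhetaN}), which you combine with the Pythagorean identity, the elementary bound $1-c \leq 1/c - 1$ for $c \in (0,1]$, and the fact that each factor $1/\cos^{2}\theta_{n} \geq 1$ for the product step. Your handling of the degenerate case $\cos\theta_{N}=0$ and the remark on why the loose product majorization is introduced (tractable DPP expectation) are both consistent with the paper's discussion.
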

To sum up, we have so far bounded the approximation error by the geometric quantity in the right-hand side of \eqref{eq:max_error_as_function_of_cos}. Where projection DPPs shine is in taking expectations of such geometric quantities.

\subsubsection{Taking the expectation under the DPP}
\label{subsec:expectation_under_the_DPP}
The analysis in Section~\ref{subsec:approximation_error_principal_angle} is valid whenever $\Det \bm{E}(\bm{x}) \neq 0$. As seen in Corollary~\ref{c:regularization}, this condition is satisfied almost surely when $\bm{x}$ is drawn from the projection DPP of Theorem~\ref{thm:main_theorem}. Furthermore, the expectation of the right-hand side of \eqref{eq:max_error_as_function_of_cos} can be written in terms of the eigenvalues of the kernel $k$.
\begin{proposition}\label{prop:expected_value_of_product_of_cos}
Let $\bm{x}$ be a projection DPP with reference measure $\mathrm{d}\omega$ and kernel \eqref{e:kernel}. Then,
\begin{equation}\label{eq:expected_value_of_product_of_cos}
\EX_{\DPP} \prod\limits_{n \in [N]} \frac{1}{\cos^{2} \theta_{n}\bigg(\mathcal{T}(\bm{x}), \mathcal{E}^{\mathcal{F}}_{N} \bigg)}   = \sum\limits_{\substack{T \subset \mathbb{N}^{*} \\ |T| = N}} \frac{ \prod\limits_{t \in T}\sigma_{t}}{\prod\limits_{n \in [N]} \sigma_{n}} \:.
\end{equation}
\end{proposition}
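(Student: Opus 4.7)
The plan is to reduce the left-hand side to a ratio of determinants that can be integrated in closed form against the DPP density, exploiting the cancellation between the projection DPP normalization and the Gram-type determinant that appears in the principal-angle formula.

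First I would recall the standard determinantal characterization of principal angles: if $U,V$ are two $N$-dimensional subspaces of a Hilbert space and $(u_i),(v_j)$ are bases with Gram matrices $G_U,G_V$, then
\begin{equation*}
\prod_{n \in [N]} \cos^{2}\theta_{n}(U,V) = \frac{\bigl|\Det(\langle u_i, v_j\rangle)\bigr|^{2}}{\Det(G_U)\Det(G_V)}.
\end{equation*}
Apply this with $U=\mathcal{T}(\bm{x})$, basis $(k(x_j,\cdot))_{j\in[N]}$ (Gram matrix $\bm{K}(\bm{x})$), and $V=\mathcal{E}^{\mathcal{F}}_{N}$, orthonormal basis $(e_n^{\mathcal{F}})_{n\in[N]}$ (Gram matrix $\bm{I}$). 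The reproducing property yields $\langle e_n^{\mathcal{F}}, k(x_j,\cdot)\rangle_{\mathcal{F}} = e_n^{\mathcal{F}}(x_j) = \sqrt{\sigma_n}\,e_n(x_j)$, so the cross matrix is $\Diag(\sqrt{\sigma_n})\,\bm{E}(\bm{x})$ and
\begin{equation*}
\prod_{n \in [N]} \frac{1}{\cos^{2}\theta_{n}} \;=\; \frac{\Det(\bm{K}(\bm{x}))}{\bigl(\prod_{n\in[N]} \sigma_n\bigr)\,\Det(\bm{E}(\bm{x}))^{2}},
\end{equation*}
which is well-defined almost surely by Corollary~\ref{c:regularization}.

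Next I would integrate against the DPP density $\tfrac{1}{N!}\Det(\bm{E}(\bm{x}))^{2}\prod_i \omega(x_i)$. The $\Det(\bm{E}(\bm{x}))^{2}$ factors cancel, giving
\begin{equation*}
\EX_{\DPP}\prod_{n\in[N]}\frac{1}{\cos^{2}\theta_n} = \frac{1}{N!\prod_{n\in[N]}\sigma_n}\int_{\mathcal{X}^N}\Det(\bm{K}(\bm{x}))\,\prod_{i\in[N]}\mathrm{d}\omega(x_i).
\end{equation*}
To evaluate the remaining integral, use Mercer's decomposition to write $\bm{K}(\bm{x}) = \bm{A}\bm{A}^{\Tran}$ with $\bm{A}_{i,m}=\sqrt{\sigma_m}\,e_m(x_i)$ for $i\in[N]$, $m\in\mathbb{N}^{*}$, and apply the Cauchy--Binet formula:
\begin{equation*}
\Det(\bm{K}(\bm{x})) = \sum_{\substack{T\subset\mathbb{N}^{*}\\|T|=N}}\Bigl(\prod_{t\in T}\sigma_t\Bigr)\Det\bigl(e_t(x_i)\bigr)_{i\in[N],t\in T}^{2}.
\end{equation*}
Each submatrix determinant integrates to $N!$ by orthonormality of $(e_m)$ in $\mathbb{L}_{2}(\mathrm{d}\omega)$ (expand the squared determinant via the Leibniz formula and observe that only diagonal cross-terms survive). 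Substituting and cancelling $N!$ yields exactly \eqref{eq:expected_value_of_product_of_cos}.

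The main obstacle is justifying the interchange of sum and integral in the Cauchy--Binet step, since the sum runs over all $N$-subsets of $\mathbb{N}^{*}$; positivity of every term (each is the integral of a square times positive eigenvalues) together with Tonelli's theorem handles this, and the convergence of $\sum_T \prod_{t\in T}\sigma_t$ is guaranteed by the trace-class property of $\bm{\Sigma}$. A secondary point to be careful about is the determinantal principal-angle identity: verifying it cleanly in the RKHS requires only picking any orthonormal bases of $U$ and $V$ and noting that a change of basis multiplies both numerator and denominator by the squared modulus of the same determinantal factor.
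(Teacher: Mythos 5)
Your proposal is correct and follows essentially the same route as the paper: you reduce the product of inverse squared cosines to $\Det\bm{K}(\bm{x})/\bigl(\prod_{n\in[N]}\sigma_n\,\Det^{2}\bm{E}(\bm{x})\bigr)$ via the Gram-determinant characterization of principal angles (the paper's Lemma~\ref{lemma:cos_ratio_det} together with Corollary~\ref{cor:cos_det_relationship}), cancel $\Det^{2}\bm{E}(\bm{x})$ against the projection-DPP density, and evaluate $\frac{1}{N!}\int\Det\bm{K}(\bm{x})$ by Cauchy--Binet and orthonormality of $(e_m)$ (the paper's Lemma~\ref{lemma:truncated_Fredholm_formula}). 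The only cosmetic differences are that you invoke the determinant identity for arbitrary bases directly (rather than orthonormalizing $\mathcal{T}(\bm{x})$) and justify the infinite Cauchy--Binet expansion by positivity and Tonelli, where the paper truncates the kernel at level $M$ and passes to the limit by monotone convergence with Maclaurin's inequality ensuring summability.
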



The bound of Proposition~\ref{prop:expected_value_of_product_of_cos}, once reported in  Lemma~\ref{lemma:max_error_cos} and Lemma~\ref{lemma:approximation_error_spectral_bound}, already yields Theorem~\ref{thm:main_theorem} in the special case where $\sigma_{1} = \dots = \sigma_{N}$. This seems a very restrictive condition, but next Proposition~\ref{prop:kernel_perturbation_inequality} shows that we can always reduce the analysis to that case.
In fact, let the kernel $\tilde{k}$ be defined by
\begin{equation}\label{eq:tilde_k_kernel_definition}
\tilde{k}(x,y) = \sum\limits_{n \in [N]} \sigma_{1}e_{n}(x)e_{n}(y) + \sum\limits_{n \geq N+1} \sigma_{n}e_{n}(x)e_{n}(y) = \sum\limits_{n \in \mathbb{N}^{*}} \tilde{\sigma}_{n}e_{n}(x)e_{n}(y),
\end{equation}
and let $\tilde{\mathcal{F}}$ be the corresponding RKHS. Then one has the following inequality.
\begin{proposition}\label{prop:kernel_perturbation_inequality}
Let $ \tilde{\mathcal{T}}(\bm{x}) = \Span \left( \tilde{k}(x_{j},.) \right)_{j \in [N]}$ and $\bm{\Pi}_{\tilde{\mathcal{T}}(\bm{x})^{\perp}}$ the orthogonal projection onto $\tilde{\mathcal{T}}(\bm{x})^{\perp}$ in $(\tilde{\mathcal{F}}, \langle .,.\rangle_{\tilde{\mathcal{F}}})$. Then,
\begin{equation}\label{eq:kernel_perturbation_inequality}
	\forall n \in [N], \:\: \sigma_{n} \|\bm{\Pi}_{\mathcal{T}(\bm{x})^{\perp}} e_{n}^{\mathcal{F}}\|_{\mathcal{F}}^{2} \leq \sigma_{1}   \|\bm{\Pi}_{\tilde{\mathcal{T}}(\bm{x})^{\perp}} e_{n}^{\tilde{\mathcal{F}}}\|_{\tilde{\mathcal{F}}}^{2}.
\end{equation}
\end{proposition}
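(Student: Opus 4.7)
My plan is to use the variational characterization of the RKHS projection error and reduce both sides of \eqref{eq:kernel_perturbation_inequality} to finite-dimensional quadratic minimization problems that can be compared coefficient by coefficient.

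First, recall that for any $f \in \mathcal{F}$,
\[
\|\bm{\Pi}_{\mathcal{T}(\bm{x})^\perp} f\|^2_{\mathcal{F}} = \min_{\bm{c} \in \mathbb{R}^N} \Big\|f - \sum_{j \in [N]} c_j k(x_j,\cdot)\Big\|^2_{\mathcal{F}}.
\]
Expanding $k(x_j,\cdot) = \sum_m \sqrt{\sigma_m}\,e_m(x_j)\,e_m^{\mathcal{F}}$ in the orthonormal basis $(e_m^{\mathcal{F}})$ of $\mathcal{F}$ and writing $\bm{v}_m = (e_m(x_j))_{j \in [N]}$, the squared residual for $f = e_n^{\mathcal{F}}$ equals $(1 - \sqrt{\sigma_n}\,\bm{c}^{\Tran}\bm{v}_n)^2 + \sum_{m \neq n} \sigma_m(\bm{c}^{\Tran}\bm{v}_m)^2$. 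After the substitution $\bm{a} = \sqrt{\sigma_n}\,\bm{c}$ and multiplication by $\sigma_n$, one obtains the compact form
\[
\sigma_n\|\bm{\Pi}_{\mathcal{T}(\bm{x})^\perp} e_n^{\mathcal{F}}\|^2_{\mathcal{F}} = \min_{\bm{a}\in\mathbb{R}^N}\Big\{\sigma_n(1-\bm{a}^{\Tran}\bm{v}_n)^2 + \sum_{m\neq n}\sigma_m (\bm{a}^{\Tran}\bm{v}_m)^2\Big\}.
\]
The identical computation carried out with $\tilde k$, using $\tilde\sigma_n = \sigma_1$ for $n \in [N]$, gives
\[
\sigma_1\|\bm{\Pi}_{\tilde{\mathcal{T}}(\bm{x})^\perp} e_n^{\tilde{\mathcal{F}}}\|^2_{\tilde{\mathcal{F}}} = \min_{\bm{b}\in\mathbb{R}^N}\Big\{\sigma_1(1-\bm{b}^{\Tran}\bm{v}_n)^2 + \sum_{m\neq n}\tilde\sigma_m (\bm{b}^{\Tran}\bm{v}_m)^2\Big\}.
\]

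The desired inequality then follows from a coefficient-wise comparison of the two quadratic objectives: for $n \in [N]$, one has $\sigma_n \leq \sigma_1$, and by construction $\sigma_m \leq \tilde{\sigma}_m$ for every $m$ (equality holds for $m > N$, while $\tilde\sigma_m = \sigma_1 \geq \sigma_m$ for $m \in [N]$). Hence evaluating the left-hand objective at $\bm{a} = \bm{b}^{*}$, the minimizer of the right-hand objective, yields a value no larger than the right-hand minimum, which is exactly \eqref{eq:kernel_perturbation_inequality}.

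I do not expect any serious obstacle. The only slightly delicate step is the rescaling $\bm{a} = \sqrt{\sigma_n}\,\bm{c}$, which normalizes the coefficient of the ``target'' term and aligns the two problems so that the comparison between the weights $\sigma_m$ and $\tilde\sigma_m$ becomes immediate; without this rescaling the inequality would confound the scale of the target (encoded by the prefactor $\sigma_n$ vs $\sigma_1$) with the relative weighting of the off-target Mercer modes.
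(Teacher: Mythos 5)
Your proof is correct, and it takes a genuinely different and more elementary route than the paper. The paper first converts both projection residuals into leverage scores of semi-infinite feature matrices (Lemma~\ref{lemma:interpolation_error_matricial_form}), then interpolates between $k$ and $\tilde{k}$ through a chain of $N$ intermediate kernels $\tilde{k}^{(\ell)}$ and tracks the leverage scores along this chain via Sherman--Morrison rank-one update formulas (Lemma~\ref{lemma:rank1_leverage_score}, Proposition~\ref{prop:graded_RKHS}), with truncations $M\to\infty$ to handle the infinite Mercer expansion; this machinery is precisely what the authors deploy to avoid the ``naive'' Loewner-order bound $\tilde{\bm{K}}(\bm{x})^{-1}\prec\bm{K}(\bm{x})^{-1}$, which gives the wrong pairing of $\sigma_1$ and $\sigma_n$. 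You sidestep that pitfall entirely by writing each side as a least-squares problem in the Mercer basis,
\begin{equation*}
\sigma_n\|\bm{\Pi}_{\mathcal{T}(\bm{x})^{\perp}}e_n^{\mathcal{F}}\|_{\mathcal{F}}^{2}
=\min_{\bm{a}\in\mathbb{R}^{N}}\Big\{\sigma_n\big(1-\bm{a}^{\Tran}\bm{v}_n\big)^{2}+\sum_{m\neq n}\sigma_m\big(\bm{a}^{\Tran}\bm{v}_m\big)^{2}\Big\},
\end{equation*}
and likewise for $\tilde{k}$ with weights $\tilde{\sigma}_m$, after which the inequality follows from the pointwise domination of the two objectives ($\sigma_n\leq\sigma_1$ and $\sigma_m\leq\tilde{\sigma}_m$ for all $m$). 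The rescaling $\bm{a}=\sqrt{\sigma_n}\,\bm{c}$ is exactly what aligns the ``target'' coefficients, and the comparison-of-minima step is sound (you do not even need the minimizer $\bm{b}^{*}$ to exist: comparing near-minimizers suffices, and in fact your argument does not require $\bm{K}(\bm{x})$ or $\tilde{\bm{K}}(\bm{x})$ to be invertible, nor any truncation argument, since the variational characterization of the projection onto the finite-dimensional span holds regardless). What the paper's chained construction buys in exchange for its length is the finer intermediate monotonicity statements of Proposition~\ref{prop:graded_RKHS}, but these are not needed for the inequality \eqref{eq:kernel_perturbation_inequality} itself; your argument proves it directly.
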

Simply put, capping the first eigenvalues of $k$ yields a new kernel $\tilde{k}$ that captures the interaction between the terms $\sigma_{n}$ and $\|\bm{\Pi}_{\mathcal{T}(\bm{x})^{\perp}} e_{n}^{\mathcal{F}}\|_{\mathcal{F}}^{2}$ such that we only have to deal with the term $\|\bm{\Pi}_{\tilde{\mathcal{T}}(\bm{x})^{\perp}} e_{n}^{\tilde{\mathcal{F}}}\|_{\tilde{\mathcal{F}}}^{2}$.
Combining  Proposition~\ref{prop:expected_value_of_product_of_cos} with Proposition~\ref{prop:kernel_perturbation_inequality} applied to the kernel $\tilde{k}$ yields Theorem~\ref{thm:main_theorem}.


\subsection{Discussion}
\label{s:discussion}
We have arbitrarily introduced a product in the right-hand side of \eqref{eq:max_error_as_function_of_cos}, which is a rather loose majorization. Our motivation is that the expected value of this symmetric quantity is tractable under the DPP. Getting rid of the product could make the bound much tighter. Intuitively, taking the upper bound in \eqref{eq:expected_value_of_product_of_cos} to the power $1/N$ results in a term in $\mathcal{O}(r_{N})$ for the RKHS $\tilde{\mathcal{F}}$. Improving the bound in \eqref{eq:expected_value_of_product_of_cos} would require a de-symmetrization by comparing the maximum of the $1/\cos^{2} \theta_{\ell}(\mathcal{T}(\bm{x}),\mathcal{E}^{\mathcal{F}}_{N})$ to their geometric mean.
An easier route than de-symmetrization could be to replace the product in \eqref{eq:max_error_as_function_of_cos} by a sum, but this is beyond the scope of this article.

In comparison with \cite{Bac17}, we emphasize that the dependence of our bound on the eigenvalues of the kernel $k$, via $r_{N}$, is explicit. This is in contrast with Proposition~\ref{p:bach} that depends on the eigenvalues of $\bm{\Sigma}$ through the degree of freedom $d_{\lambda}$ so that the necessary number of samples $N$ diverges when $\lambda \rightarrow 0$. On the contrary, our quadrature requires a finite number of points for $\lambda =0$. It would be interesting to extend the analysis of our quadrature in the regime $\lambda >0$.


\section{Numerical simulations}\label{s:numsims}


\subsection{The periodic Sobolev space and the Korobov space}\label{s:sobolev_numsim}
Let $\mathrm{d}\omega$ be the uniform measure on $\mathcal{X} = [0,1]$, and let the RKHS kernel be \citep{BeTh11}
$$k_{s}(x,y) = 1+ \sum\limits_{m \in \mathbb{N}^{*}} \frac{1}{m^{2s}} \cos(2\pi m(x-y)),$$
so that $\mathcal{F}=\mathcal{F}_{s}$ is the Sobolev space of order $s$ on $[0,1]$. Note that $k_{s}$ can be expressed in closed form using Bernoulli polynomials \cite{Wah90}. We take $g\equiv 1$ in \eqref{e:quadrature}, so that the mean element $\mu_{g} \equiv 1$. We compare the following algorithms: $(i)$ the quadrature rule DPPKQ we propose in Theorem~\ref{thm:main_theorem}, $(ii)$ the quadrature rule DPPUQ based on the same projection DPP but with uniform weights, implicitly studied in \cite{Joh97}, $(iii)$ the kernel quadrature rule \eqref{eq:reg_kernel_opt_problem} of \cite{Bac17}, which we denote LVSQ for \emph{leverage score quadrature}, with regularization parameter $\lambda \in \{0,0.1,0.2\}$ (note that the optimal proposal is $q_{\lambda}^* \equiv 1$), $(iv)$ herding with uniform weights \citep{BaLaOb12,ChWeSm10}, $(v)$ sequential Bayesian quadrature (SBQ) \citep{HuDu12} with regularization to avoid numerical instability, and $(vi)$ Bayesian quadrature on the uniform grid (UGBQ). We take $N \in [5,50]$. Figures~\ref{fig:a} and \ref{fig:b} show log-log plots of the worst case quadrature error w.r.t. $N$, averaged over 50 samples for each point, for $s \in \{1,3\}$.

 We observe that the approximation errors of all first four quadratures converge to $0$ with different rates. Both UGBQ and DPPKQ converge to $0$ with a rate of $\mathcal{O}(N^{-2s})$, which indicates that our $\mathcal{O}(N^{2-2s})$ bound in Theorem~\ref{thm:main_theorem} is not tight in the Sobolev case. Meanwhile, the rate of DPPUQ is $\mathcal{O}(N^{-2})$ across the three values of $s$: it does not adapt to the regularity of the integrands. This corresponds to the CLT proven in \cite{Joh97}.
  LVSQ without regularization converges to $0$ slightly slower than $\mathcal{O}(N^{-2s})$. Augmenting $\lambda$ further slows down convergence. Herding converges at an empirical rate of $\mathcal{O}(N^{-2})$, which is faster than the rate $\mathcal{O}(N^{-1})$ predicted by the theoretical analysis in \cite{BaLaOb12,ChWeSm10}. SBQ is the only one that seems to plateau for $s = 3$, although it consistently has the best performance for low $N$.
Overall, in the Sobolev case, DPPKQ and UGBQ have the best convergence rate. UGBQ --~known to be optimal in this case \cite{Boj81}~-- has a better constant.

Now, for a multidimensional example, consider the ``Korobov" kernel $k_{s}$ defined on $[0,1]^{d}$ by
\begin{equation}
\forall \bm{x}, \bm{y} \in [0,1]^{d}, \:\:k_{s,d}(\bm{x},\bm{y}) = \prod\limits_{i \in [d]} k_{s}(x_{i},y_{i}).
\end{equation}
We still take $g\equiv 1$ in \eqref{e:quadrature} so that $\mu_{g} \equiv 1$. We compare $(i)$ our DPPKQ, $(ii)$ LVSQ without regularization ($\lambda =0$), $(iii)$ the kernel quadrature based on the uniform grid UGBQ, $(iv)$ the kernel quadrature SGBQ based on the sparse grid from \citep{Smo63}, $(v)$ the kernel quadrature based on the Halton sequence HaltonBQ \citep{Hal64}. We take $N \in [5,1000]$ and $s =1$. The results are shown in Figure~\ref{fig:c}. This time, UGBQ suffers from the dimension with a rate in $\mathcal{O}(N^{-2s/d})$, while DPPKQ, HaltonBQ and LVSQ $(\lambda = 0)$ all perform similarly well. They scale as $\mathcal{O}((\log N)^{2s(d-1)} N^{-2s})$, which is a tight upper bound on $\sigma_{N+1}$, see \citep{Bac17} and Appendix~\ref{app:sup_num_sim}.
 SGBQ seems to lag slightly behind with a rate $\mathcal{O}((\log N)^{2(s+1)(d-1)} N^{-2s})$ 
 \citep{Hol08,Smo63}.

\subsection{The Gaussian kernel}\label{s:gaussian_numsim}
We now consider $\rm{d}\omega$ to be the Gaussian measure on $\mathcal{X} = \mathbb{R}$ along with the RKHS kernel $\displaystyle k_{\gamma}(x,y) = \exp[-(x-y)^{2}/2\gamma^{2}]$, and again $g\equiv 1$. Figure~\ref{fig:d} compares the empirical performance of DPPKQ to the theoretical bound of Theorem~\ref{thm:main_theorem}, herding, crude Monte Carlo with i.i.d. sampling from $\rm{d}\omega$, and sequential Bayesian Quadrature, where we again average over $50$ samples.
We take $N \in [5,50]$ and $\gamma = \frac{1}{2}$. Note that, this time, only the $y$-axis is on the log scale for better display, and that LVSQ is not plotted since we don't know how to sample from $q_\lambda$ in \eqref{e:proposalBach} in this case.
We observe that the approximation error of DPPKQ converges to $0$ as $\mathcal{O}(\alpha^{N})$, while the discussion below Theorem~\ref{thm:main_theorem} let us expect a slightly slower $\mathcal{O}(N\alpha^{N})$. Herding improves slightly upon Monte Carlo that converges as $\mathcal{O}(N^{-1})$. Similarly to Sobolev spaces, the convergence of sequential Bayesian quadrature plateaus even if it has the smallest error for small $N$.
We also conclude that DPPKQ is a close runner-up to SBQ and definitely takes the lead for large enough $N$.

\begin{figure}
    \centering
\begin{subfigure}[]{\twofig}\label{fig:sobolev_1}
\includegraphics[width=\textwidth]{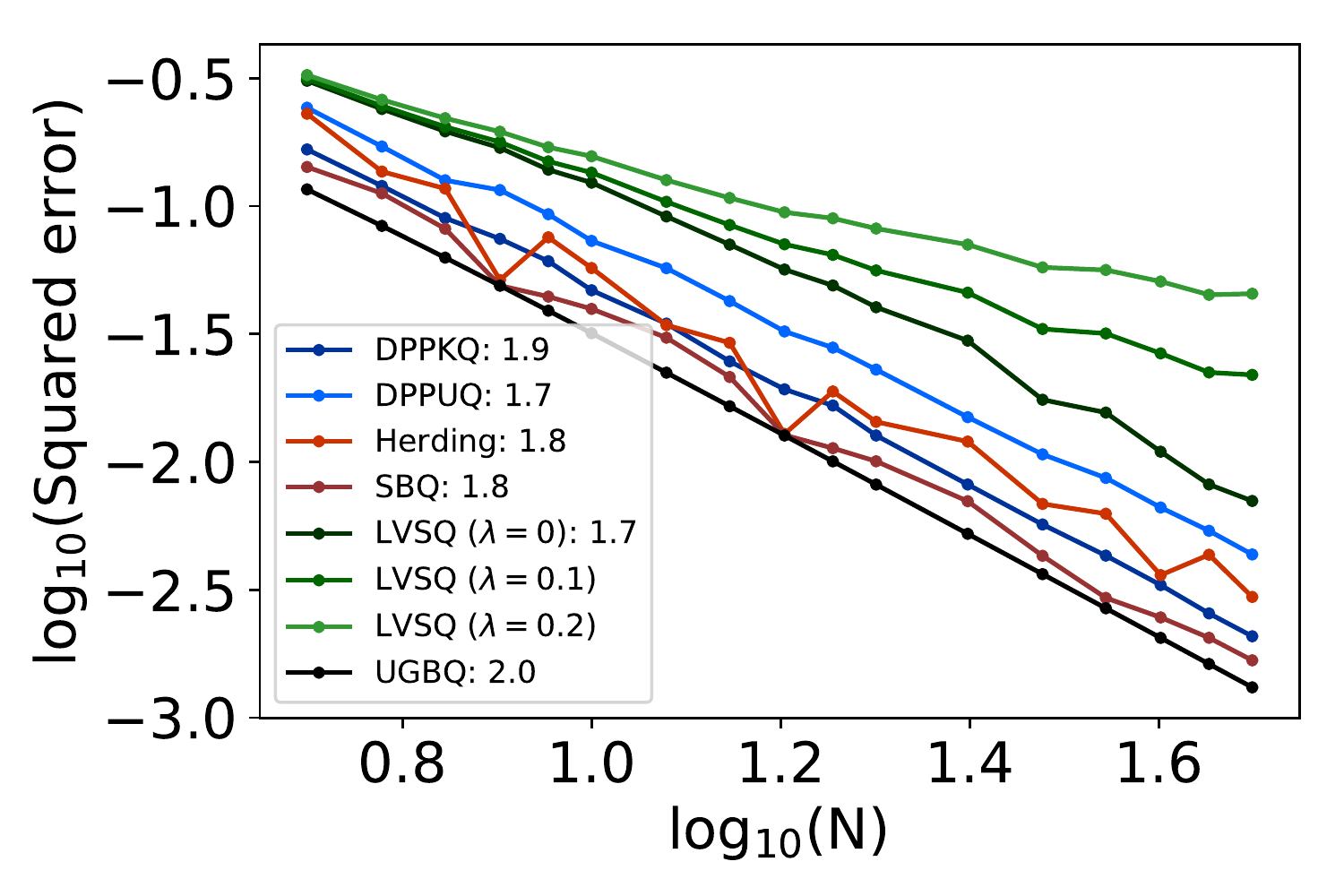}
\caption{\label{fig:a} Sobolev space, $d=1$, $s=1$}
\end{subfigure}
\hfill
\begin{subfigure}[]{\twofig}
\includegraphics[width=\textwidth]{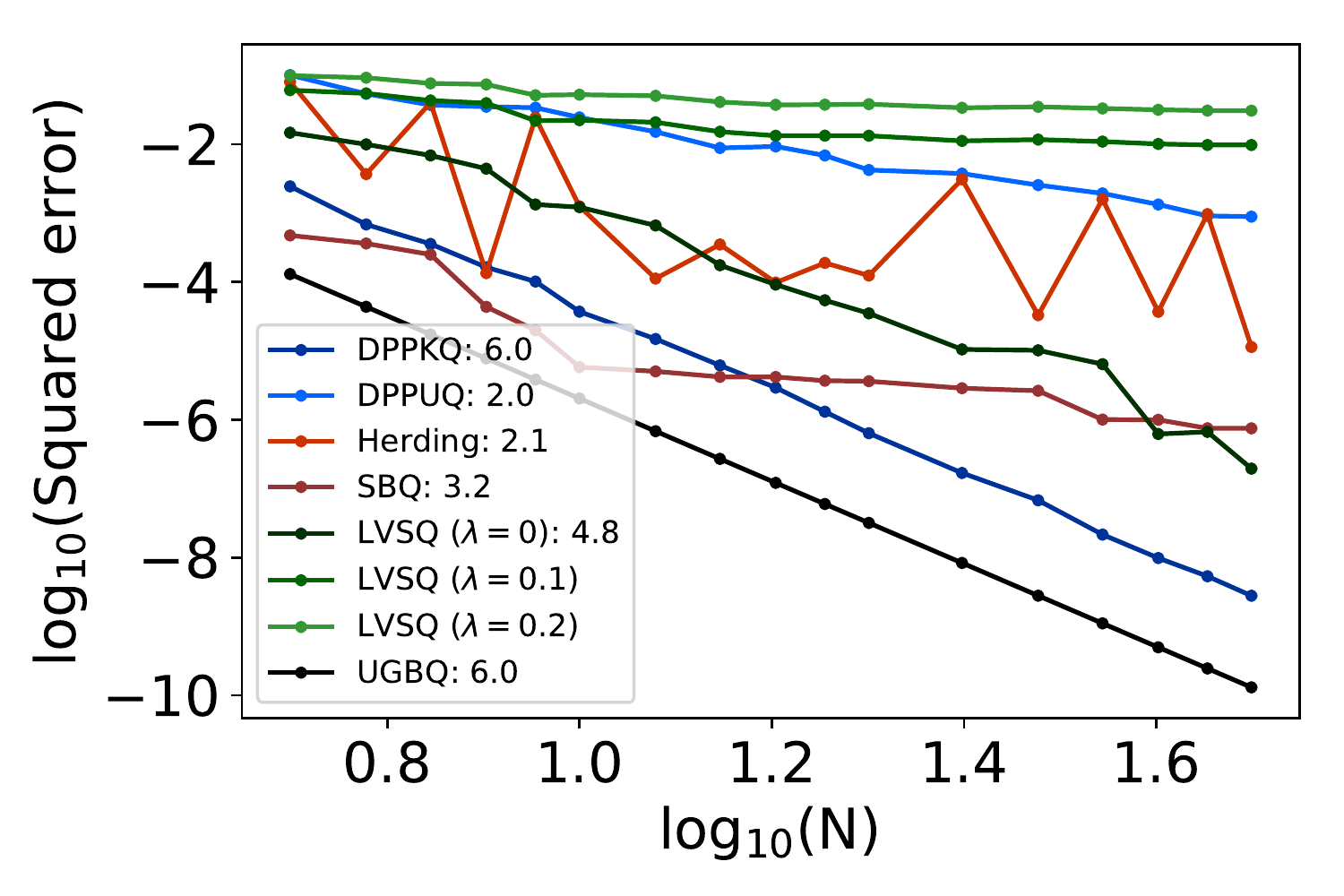}
\caption{\label{fig:b} Sobolev space, $d=1$, $s=3$}
\end{subfigure}
\\
\begin{subfigure}[]{\twofig}
\includegraphics[width=\textwidth]{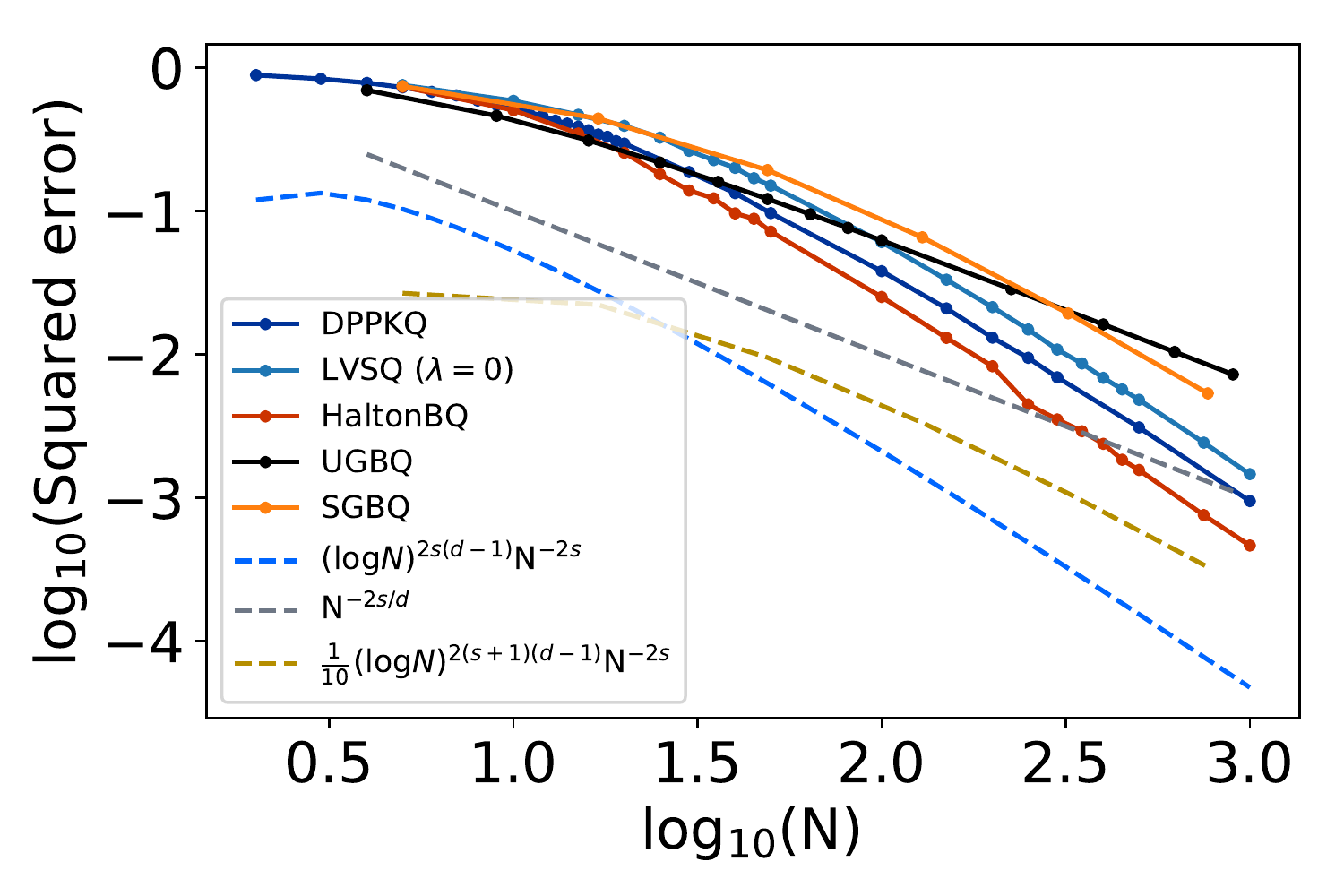}
\caption{\label{fig:c} Korobov space, $d=2$, $s=1$}
\end{subfigure}
\hfill
\begin{subfigure}[]{\twofig}
\includegraphics[width=\textwidth]{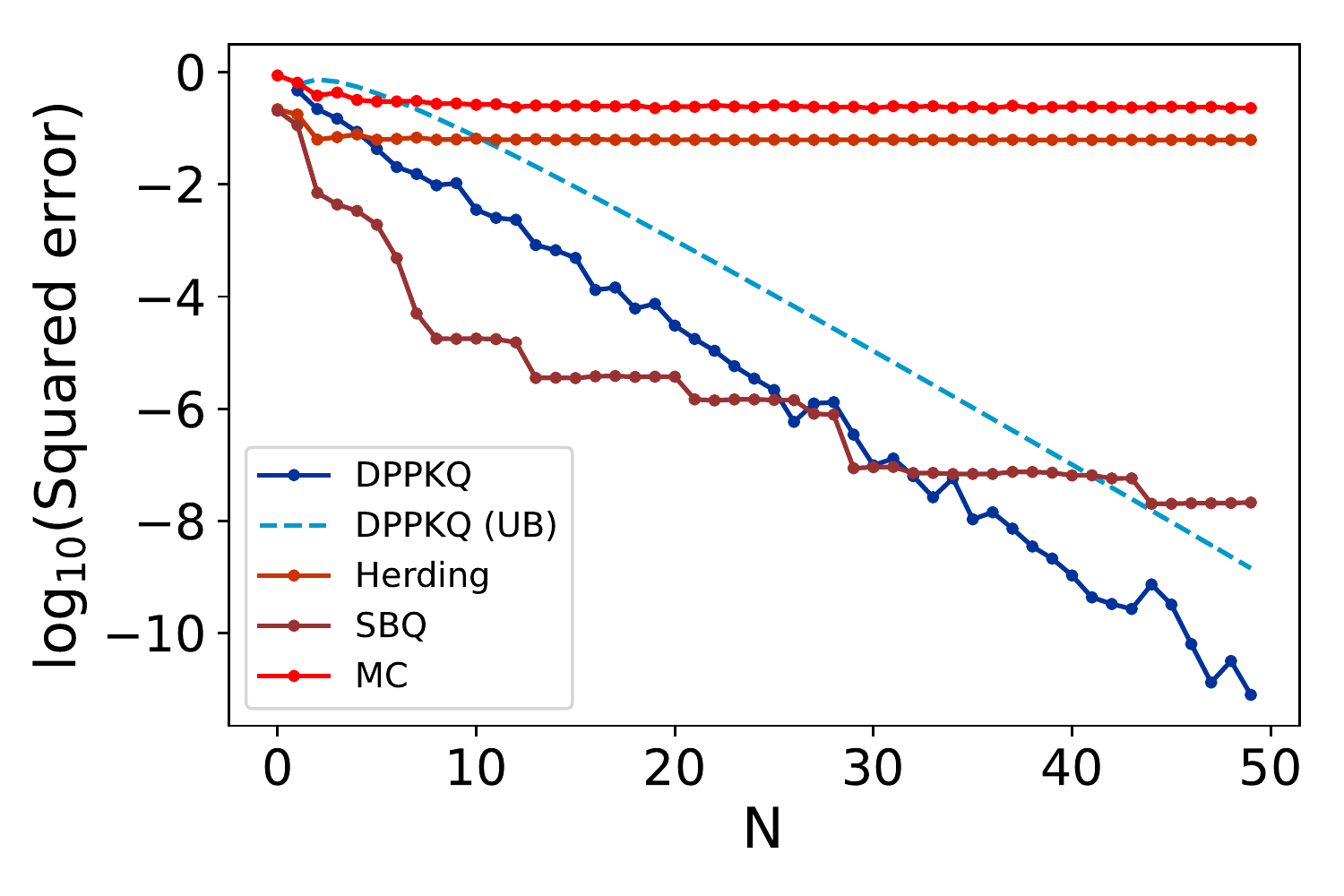}
\caption{\label{fig:d} Gaussian kernel, $d=1$}
\end{subfigure}

\caption{Squared error vs. number of nodes $N$ for various kernels.
\label{fig:results}}
\end{figure}

\vspace*{-2mm}

\section{Conclusion}
In this article, we proposed a quadrature rule for functions living in a RKHS. The nodes are drawn from a DPP tailored to the RKHS kernel, while the weights are the solution to a tractable, non-regularized optimization problem.
We proved that the expected value of the squared worst case error is bounded by a quantity that depends on the eigenvalues of the integral operator associated to the RKHS kernel, thus preserving the natural feel and the generality of the bounds for kernel quadrature \cite{Bac17}.
Key intermediate quantities further have clear geometric interpretations in the ambient RKHS.
Experimental comparisons suggest that DPP quadrature favourably compares with existing kernel-based quadratures. In specific cases where an optimal quadrature is known, such as the uniform grid for 1D periodic Sobolev spaces, DPPKQ seems to have the optimal convergence rate. However, our generic error bound does not reflect this optimality in the Sobolev case, and must thus be sharpened.

We have discussed room for improvement in our proofs. Further work should also address exact sampling algorithms, which do not exist yet when the spectral decomposition of the integral operator is not known. Approximate algorithms would also suffice, as long as the error bound is preserved.

\subsubsection*{Acknowledgments}
We acknowledge support from ANR grant BoB (ANR-16-CE23-0003) and région Hauts-de-France.
We also thank Adrien Hardy and the reviewers for their detailed and insightful comments.

\bibliography{bibliography,stats,learning}

\newpage

\appendix

\section{Implementation details}
In this section, we give details on the repulsion kernels in each example of the main paper, and explain how we sampled from the corresponding DPPs. In short, we relied on matrix models for univariate cases, and vanilla DPP sampling \cite{HKPV06} for multivariate settings.

\subsection{The one-dimensional periodic Sobolev space}
Consider the kernel $k_{s} : [0,1]\times [0,1] \rightarrow \mathbb{R}_{+}$ defined by
\begin{equation}
k_{s}(x,y) = 1+ \sum\limits_{m \in \mathbb{N}^{*}} \frac{1}{m^{2s}} \cos(2\pi m(x-y)).
\end{equation}
The Mercer decomposition of $k_{s}$ associated to the uniform measure $\mathrm{d}\omega$ on $[0,1]$ writes
\begin{equation}
k_{s}(x,y) = \sum\limits_{k \in \mathbb{Z}} \frac{1}{\max(1,|k|)^{2s}} \,e^{2\pi i k x}e^{-2\pi i k y}.
\end{equation}
The corresponding repulsion kernel is
\begin{equation}
\KDPP(x,y)  = e^{\pi i N(x-y) } \sum\limits_{m = -N/2}^{N/2}e^{2 \pi i m x} e^{-2 \pi i m y} = \sum\limits_{m = 0}^{N} e^{2\pi i m x}e^{-2\pi i m y},
\end{equation}
if $N$ is even and
\begin{equation}
\KDPP(x,y)  = e^{\pi i (N-1)(x-y) } \sum\limits_{m = -(N-1)/2}^{(N+1)/2}e^{2 \pi i m x} e^{-2 \pi i m y} = \sum\limits_{m = 0}^{N} e^{2\pi i m x}e^{-2\pi i m y},
\end{equation}
if not.
The projection DPP with kernel $\KDPP$ and reference measure $\mathrm{d}\omega$ can be sampled through a matrix model. Indeed this DPP is also the distribution of the arguments (normalized by $2\pi$) of the eigenvalues of a random unitary matrix drawn from the Haar measure on $\mathbb{U}_{N+1}$ \citep{Wey46}. Sampling such matrices can be done, e.g., using the QR decomposition of a matrix with i.i.d. unit complex Gaussians as coefficients \citep{Mez06}.

\subsection{The one-dimensional Gaussian kernel}

Let $k_{\gamma}:\mathbb{R}\times \mathbb{R}\rightarrow \mathbb{R}_{+}$ and the reference measure $\mathrm{d}\omega$ be defined by
\begin{equation}
k_{\gamma}(x,y) = e^{-\frac{(x-y)^{2}}{2\gamma^{2}}}, \quad \mathrm{d}\omega(x) = \frac{1}{\sqrt{2 \pi} \sigma}e^{-\frac{x^{2}}{2\sigma^2}}.
\end{equation}
For notational convenience, we further let
\begin{equation}
a = \frac{1}{4 \sigma^{2}}, \quad b = \frac{1}{2 \gamma^{2}}, \quad c = \sqrt{a^{2} + 2ab},
\end{equation}
and
\begin{equation}
A = a + b + c,  \quad B = b/A.
\end{equation}
Now, the Mercer decomposition of $k_{\gamma}$ reads \cite{Ras03}
\begin{equation}
k_{\gamma}(x,y) = \sum\limits_{m \in \mathbb{N}} \sigma_{m}e_{m}(x)e_{m}(y),
\end{equation}
where
\begin{equation}
\sigma_{m} = \sqrt{\frac{2a}{A}} B^{m}, \quad e_{m} = e^{-(c-a)x^{2}}H_{m}(\sqrt{2c}x),
\end{equation}
and $H_{m}$ is the $m$-th Hermite polynomial (i.e., orthonormal polynomials for the pdf of a unit Gaussian). Now, denote
\begin{equation}
\tilde{e}_{m}(x) := H_{m}(\sqrt{2c}x),
\end{equation}
and the measure
\begin{equation}
\mathrm{d}\tilde{\omega} = \frac{1}{\sqrt{2 \pi} \sigma}e^{-2cx^{2}}.
\end{equation}
The rescaled polynomials $(\tilde{e}_{m})_{m \in \mathbb{N}}$ are orthonormal with respect to the measure $\mathrm{d}\tilde{\omega}$. Moreover, for $x \in \mathbb{R}$,
\begin{align}
 e_{m}(x)e_{m'}(x) \mathrm{d}\omega(x)
 & = e^{-(c-a)x^{2}}H_{m}(\sqrt{2c}x)e^{-(c-a)x^{2}}H_{m'}(\sqrt{2c}x) \frac{1}{\sqrt{2 \pi} \sigma}e^{-2a x^{2}}\\
 & = H_{m}(\sqrt{2c}x)H_{m'}(\sqrt{2c}x) e^{-2c x^{2}}\\
 & = \tilde{e}_{m}(x)\tilde{e}_{m'}(x) \mathrm{d}\tilde{\omega}(x).
\end{align}
Thus, for $\bm{x} = (x_{i})_{i \in [N]} \in \mathbb{R}^{N}$, we have
\begin{equation}
\Det \bm{E}(\bm{x}) \otimes_{i \in [N]} \mathrm{d}\omega(x_{i}) = \Det \tilde{\bm{E}}(\bm{x}) \otimes_{i \in [N]} \mathrm{d}\tilde{\omega}(x_{i}).
\end{equation}
In other words, the projection DPP associated to the orthonormal family $(e_{n})_{n \in [N]}$ and the reference measure $\mathrm{d}\omega$ is equivalent to the projection DPP associated to the orthonormal family $(\tilde{e}_{n})_{n \in [N]}$ and the reference measure $\mathrm{d}\tilde{\omega}$. The latter DPP is known to be the distribution of the eigenvalues of a symmetrized matrix with i.i.d. Gaussian entries \citep{Meh04}, which is easily implemented.

\subsection{The case of a tensor product of RKHSs}
\label{s:multivariate}

We consider the case where $\mathcal{F}$ writes as a tensor product of RKHSs, with the associated kernel
%
\begin{equation}
k(\bm{x}, \bm{y}) = \prod\limits_{\ell \in [L]} k_{\ell}(x_{\ell},y_{\ell}),
\end{equation}
with $k_{\ell}: \mathcal{X}_{\ell}\times \mathcal{X}_{\ell} \rightarrow \mathbb{R}$.

\subsubsection{The multivariate integral operator}
The integral operator becomes
\begin{equation}
\bm{\Sigma} f(\bm{x}) = \int_{\mathcal{X}} f(\bm{x})k(\bm{x},\bm{y}) \mathrm{d}\omega(\bm{y}) = \prod\limits_{\ell \in [L]} \int_{\mathcal{X}_{\ell}} f_{\ell}(x_{\ell}) k_{\ell}(x_{\ell},y_{\ell}) \mathrm{d}\omega_{\ell}(y_{\ell}).
\end{equation}

In the main paper, we considered for instance the Korobov space $\mathbb{K}^{d}_{s}([0,1])$, defined as the tensor product of unidimensional periodic Sobolev spaces.
Note that an element $f$ of $\mathbb{K}^{d}_{s}([0,1])$ is such that
$$ \frac{\partial^{u_{1} + \dots + u_{d}}}{\partial{x_{1}^{u_{1}}} \dots \partial{x_{d}^{u_{d}}}}f \in \mathbb{L}_{2}([0,1]^{d}), \quad \forall u_{1}, \dots u_{d} \in \{0, \dots , s \}.$$
This implies that $\mathbb{K}^{d}_{s}([0,1])$ is included in the multidimensional Sobolev space, which corresponds to the same requirement, but only for multi-indices such that $\Vert u_{i}\Vert_1 \leq s$.
Another example, featured in this supplementary material, is the multidimensional Gaussian space associated to the Gaussian kernel on $\mathcal{X}_\ell = \mathbb{R}$ and the multidimensional Gaussian measure. In this case, the kernel $k_{\gamma,d}$ can be written as the tensor product of the Gaussian kernels on $\mathbb{R}$:
\begin{equation}
\forall \bm{x}, \bm{y} \in \mathbb{R}^{d}, \:\: k_{\gamma,d}(\bm{x},\bm{y}) = \prod\limits_{i \in [d]} k_{\gamma}(x_{i},y_{i}).
\end{equation}

In general, the eigenpairs of the integral operator are the tensor products of the eigenpairs of the integral operators $\Sigma_{\ell}$ corresponding to the spaces $\mathcal{F}_{\ell}$ and measures $\mathrm{d}\omega_{\ell}$. In other words, for $\bm{u} \in (\mathbb{N}\smallsetminus \{0\})^{d}$,
\begin{equation}
\bm{\Sigma} \otimes_{i \in [d]} e_{\ell,u_{i}} = \prod\limits_{i \in [d]} \sigma_{\ell,u_{i}} \otimes_{i \in [d]} e_{\ell,u_{i}}.
\end{equation}
\subsubsection{Fixing an order on multi-indices}
The definition of the projection DPP and its kernel $\frak K$ now require that we fix an order on multi-indices. We choose an order $\prec$ that keeps eigenvalues decreasing, as in the univariate case where $\sigma_{1} \geq \sigma_{2}\geq\dots$. Whenever the univariate eigenvalues take the form $\sigma_{i} = \frac{1}{(1+i)^{\eta}}$ with $\eta >0$, such as in the Korobov case, it holds
\begin{align}
\prod\limits_{i \in [d]} \sigma_{u_{i}} \leq \prod\limits_{i \in [d]} \sigma_{v_{i}}  & \Leftrightarrow \left(\prod\limits_{i \in [d]} \frac{1}{1+u_{i}} \right)^{\eta} \leq \left(\prod\limits_{i \in [d]} \frac{1}{1+v_{i}} \right)^{\eta} \\
& \Leftrightarrow \sum\limits_{i \in [d]} \log(1+v_{i})  \leq \sum\limits_{i \in [d]} \log(1+u_{i}) .
\label{e:orderSobolev}
\end{align}
Now, if the eigenvalues takes the form $\displaystyle \sigma_{i} = {\eta^{-i}}$, with $\eta >1$, as in the Gaussian case,
\begin{align}
\prod\limits_{i \in [d]} \sigma_{u_{i}} \leq \prod\limits_{i \in [d]} \sigma_{v_{i}}  & \Leftrightarrow \prod\limits_{i \in [d]} \frac{1}{\eta^{u_{i}}} \leq \prod\limits_{i \in [d]} \frac{1}{\eta^{v_{i}}} \\
& \Leftrightarrow \left(\sum\limits_{i \in [d]} v_{i} \right)\log \eta \leq \left(\sum\limits_{i \in [d]} u_{i} \right)\log \eta \\
& \Leftrightarrow \sum\limits_{i \in [d]} v_{i}  \leq \sum\limits_{i \in [d]} u_{i} .
\label{e:orderGaussian}
\end{align}
In the multivariate Korobov and the Gaussian cases, we thus define in this work $\bm{u}\prec \bm{v}$ as \eqref{e:orderSobolev} or \eqref{e:orderGaussian}, respectively.

Now, for $N \in \mathbb{N}$, let $\textbf{u}_N = (\textbf{u}_{1,N},\dots,\textbf{u}_{d,N})\in\mathbb{N}^d$ be the $N$-th multi-index according to $\prec$. The repulsion kernel is defined as
\begin{equation}
\KDPP(x,y) = \sum\limits_{n \in [N]} \prod\limits_{i \in [d]}e_{\textbf{u}_{i, N}} (x_{i}) e_{\textbf{u}_{i, N}} (y_{i}), \quad x,y\in\mathbb{R}^d.
\end{equation}
We sampled from the corresponding DPP using the generic sampling algorithm in \cite{HKPV06}, using the uniform and Gaussian distributions as proposal in the successive rejection sampling steps for the Korobov and Gaussian cases, respectively.

\section{Supplementary simulations}\label{app:sup_num_sim}
In this section, we give more plots of the convergence of the quadrature error. Before that, we experimentally assess whether the upper bounds given in \citep{Bac17} are sharp. The author proved upper bounds for $\sigma_{N+1}$ in cases where the univariate eigenvalues $\sigma_{\ell,N}$ decrease polynomially or geometrically in $N$. In particular, for the Korobov spaces of dimension $d$ and regularity $s$, we have
\begin{equation}
\sigma_{N+1} = \mathcal{O}\left((\log N)^{2s(d-1)}N^{-2s} \right).
\label{eq:sigma_N_sobolev}
\end{equation}
For the Gaussian RKHS in dimension $d$, it holds
\begin{equation}\label{eq:sigma_N_1_gaussian_1}
\sigma_{N+1} = \mathcal{O}\left(\beta^{d}e^{-\delta d!^{1/d}N^{1/d}} \right),
\end{equation}
where $\beta \in ]0,1[$ and $\delta > 0$ are constants depending on the scale parameters of the kernel and the measure $\mathrm{d}\omega$.
In our experiments, we compare the errors of various quadratures to the two rates \eqref{eq:sigma_N_sobolev} and \eqref{eq:sigma_N_1_gaussian_1}. We mean these rates to be proxies for plotting $\sigma_{\textbf{u}_N}$, where $\textbf{u}_N$ refers to the order introduced in Section~\ref{s:multivariate}. Figure~\ref{f:rates} shows that in the Korobov case, the rate \eqref{eq:sigma_N_sobolev} is indeed close to the corresponding eigenvalue for large values of $N$. The value of $(\log N)^{2s(d-1)}N^{-2s}$ could be larger than $1$ for $d \geq 4$ and small values of $N$. As for the Gaussian case, Figure~\ref{f:rates} shows that the rate \eqref{eq:sigma_N_1_gaussian_1} is also close to the corresponding eigenvalue for all values of $N$.  
\begin{figure}[]
    \centering
\includegraphics[width= 0.45\textwidth]{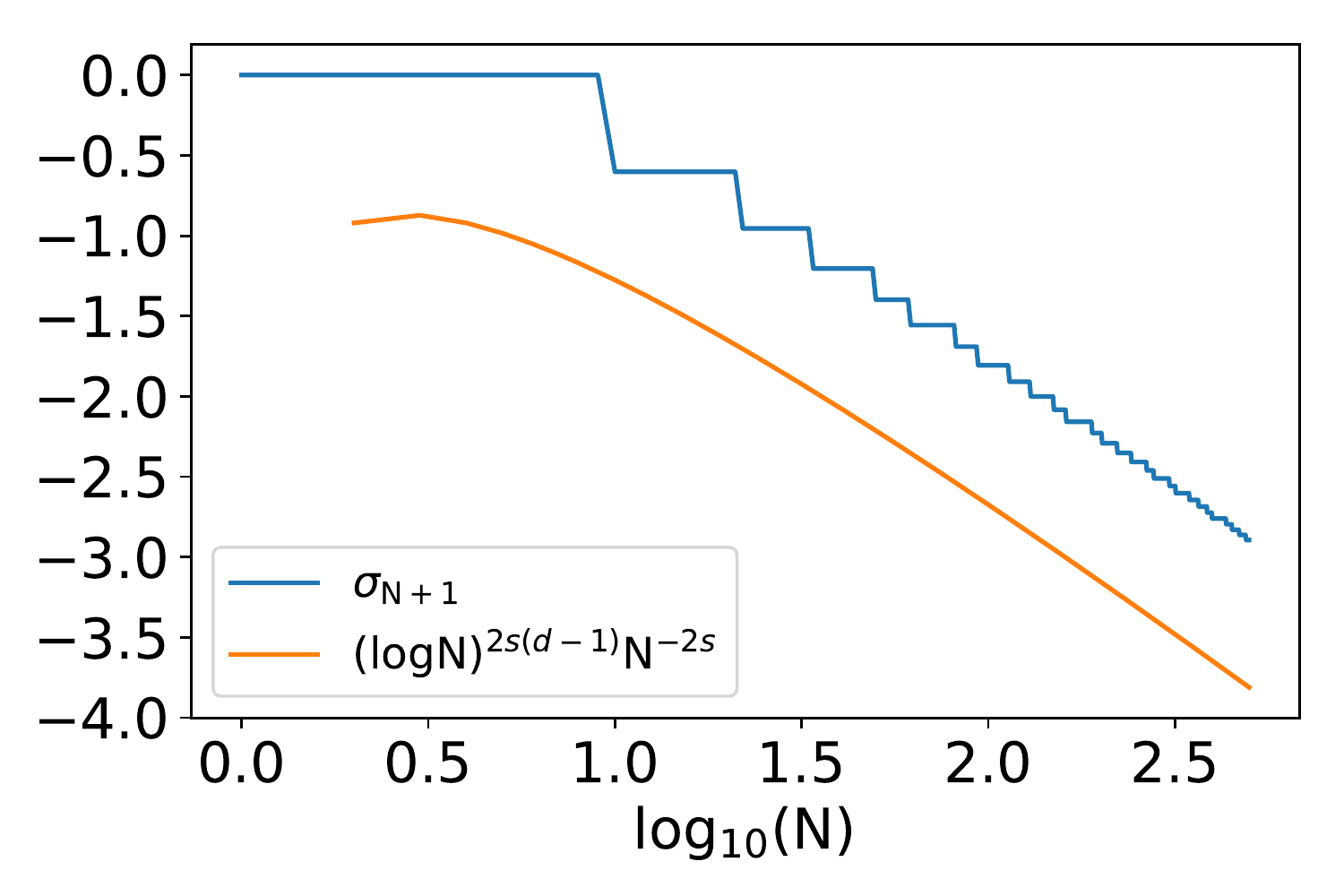} 
\includegraphics[width= 0.45\textwidth]{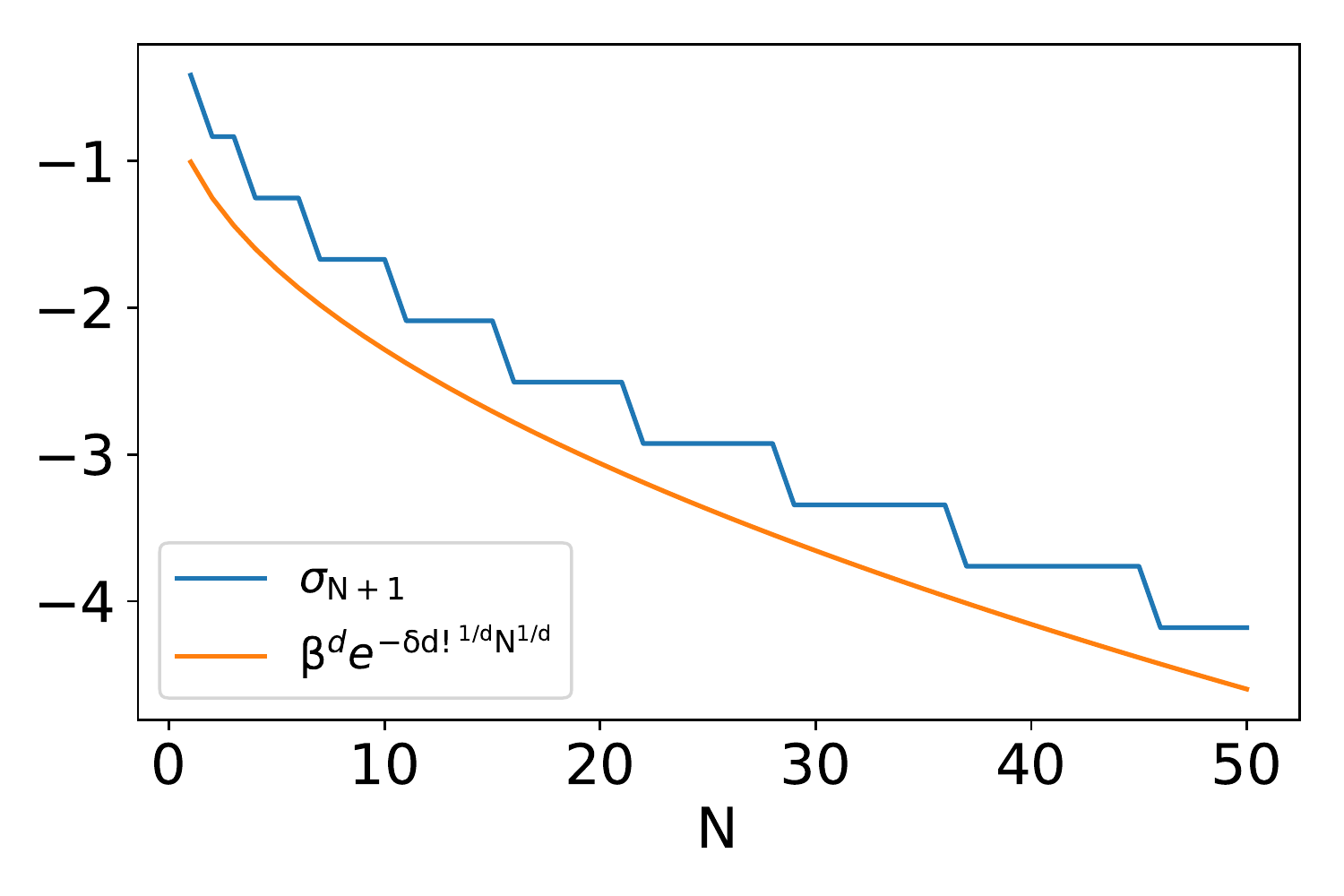} \\
\includegraphics[width= 0.45\textwidth]{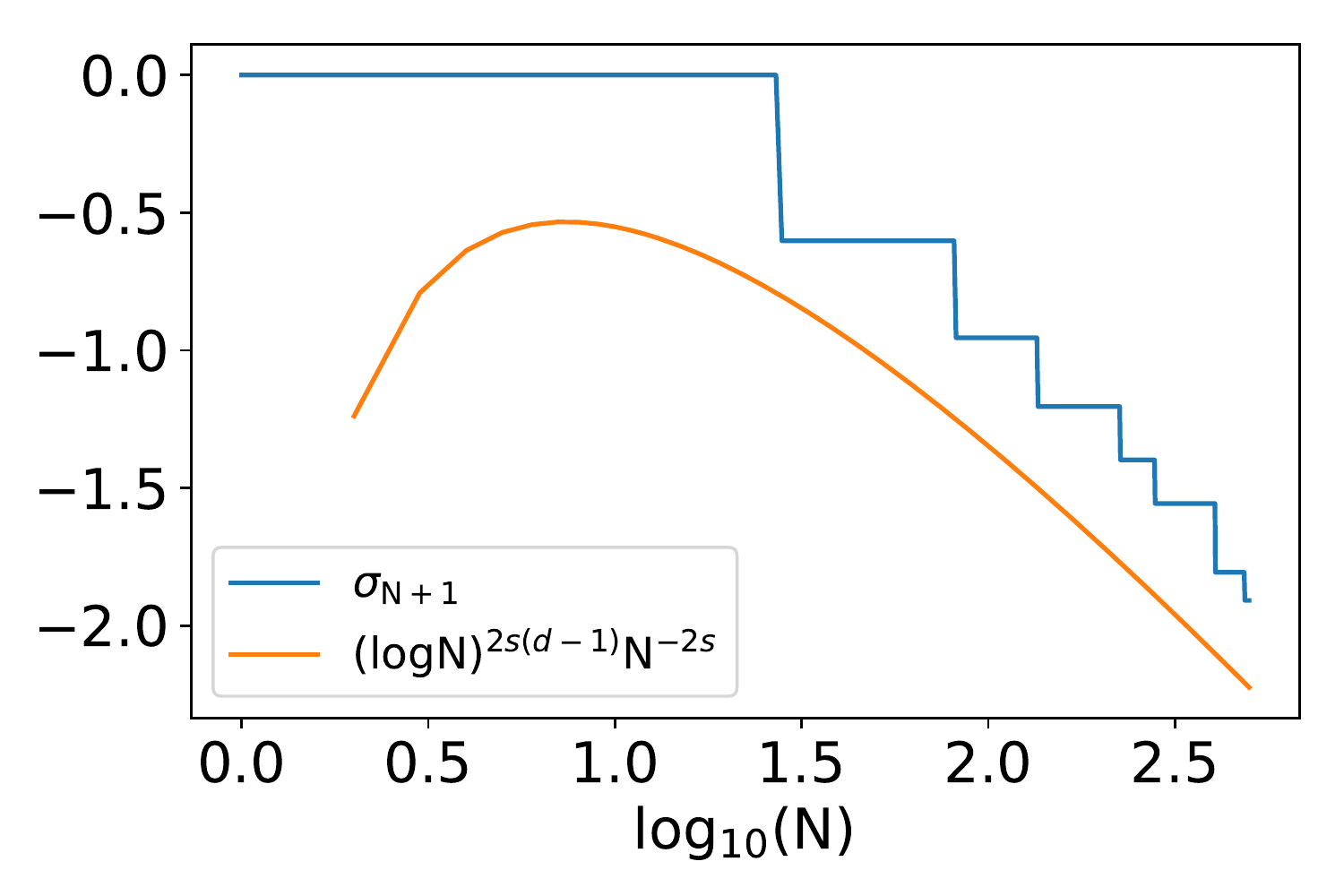} 
\includegraphics[width= 0.45\textwidth]{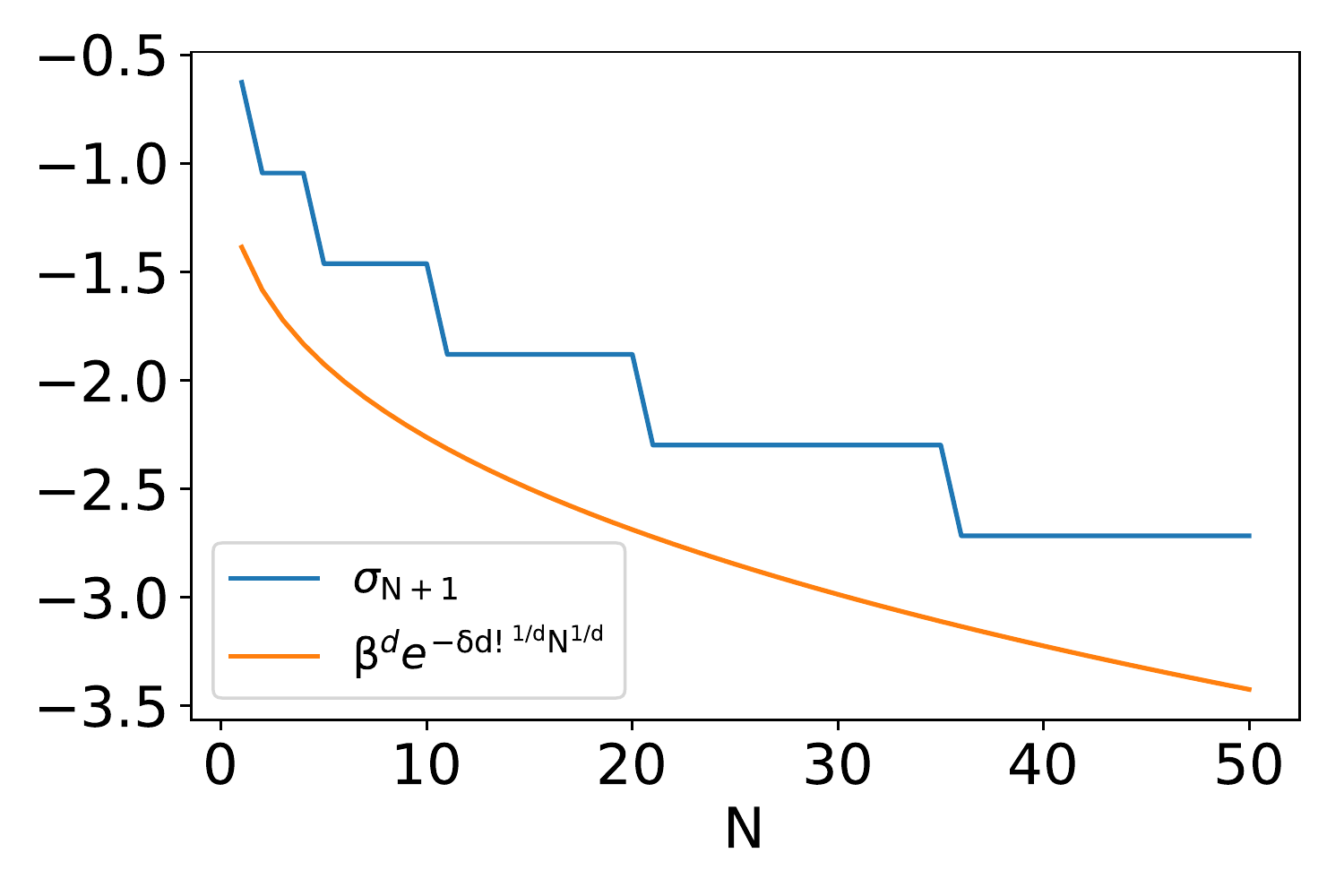}\\
\includegraphics[width= 0.45\textwidth]{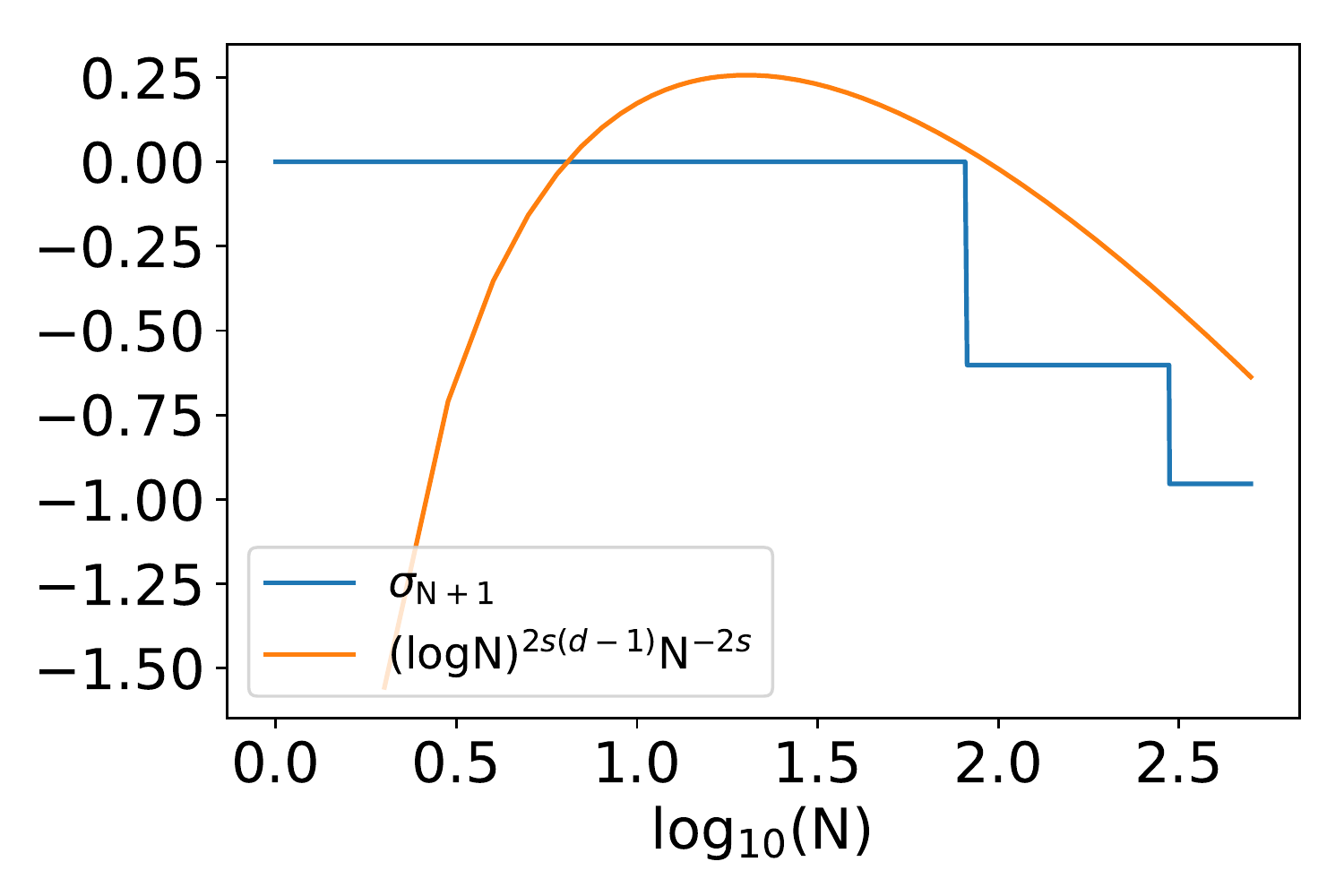} 
\includegraphics[width= 0.45\textwidth]{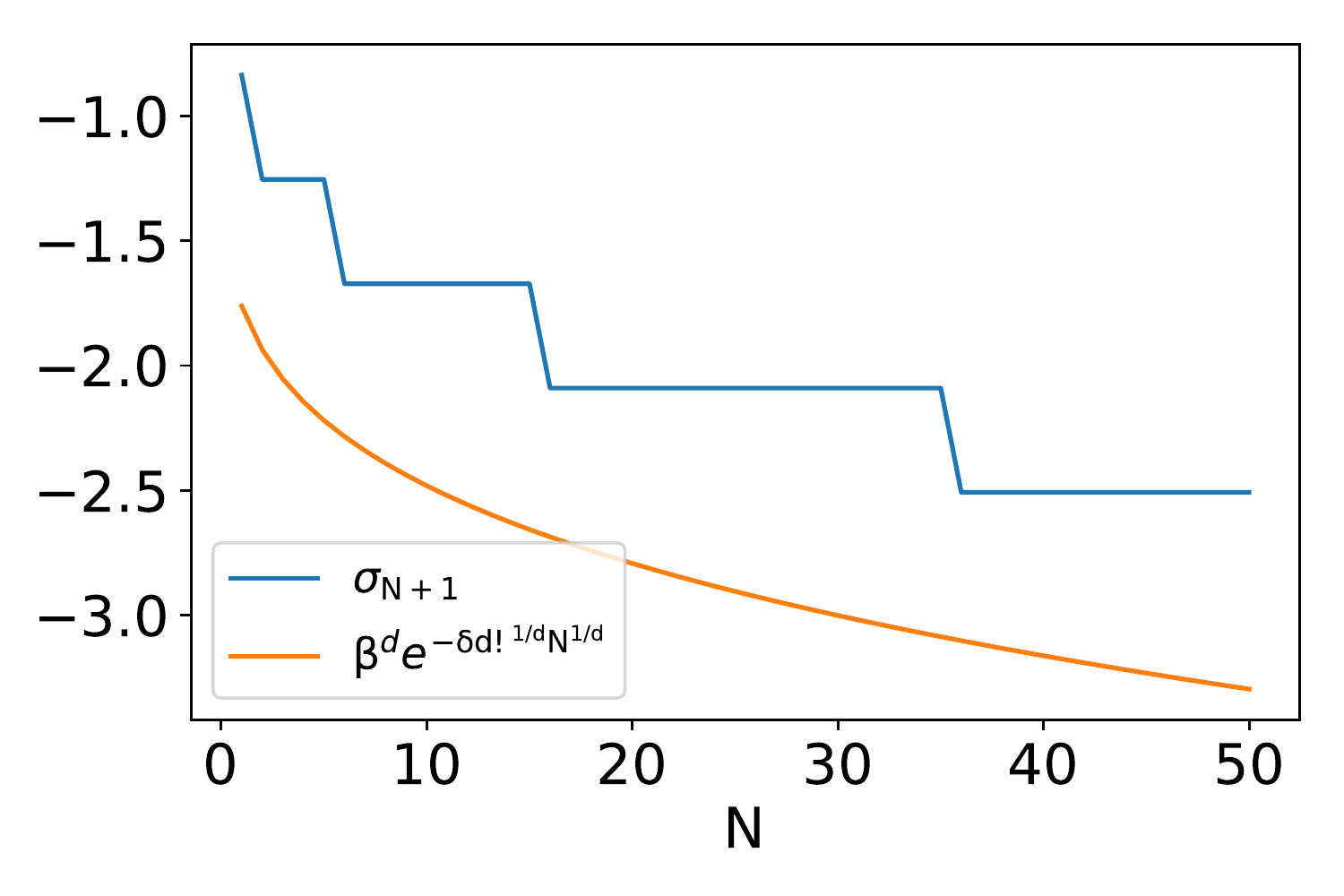}
\\
\caption{(Left): comparison of $\sigma_{N+1}$ in the Korobov case according to the spectral order and $(\log N)^{2s(d-1)}N^{-2s}$ for $d \in \{2,3,4\}$ and $s=1$, (Right): comparison of $\sigma_{N+1}$ in the Gaussian case according to the spectral order and $\beta^{d}e^{-\delta d!^{1/d}N^{1/d}}$ for $d \in \{2,3,4\}$ and $\gamma = 1$.}
\label{f:rates}
\end{figure}

\subsection{The multi Fourier ensemble and Korobov RKHS}
We consider the case of Korobov spaces with $d \in \{2,3\}$ and $s \in \{1,2\}$ and compare the quadrature error of the same algorithms as in \ref{s:sobolev_numsim}. The results are compiled in Figure~\ref{fig:KorobovResult}. The numerical simulations confirm the dependencies of the theoretical bounds of the different algorithms to the dimension $d$ and the regularity $s$. In particular, UGBQ have better performance for high values of $s$ and low values of $N$ while its asymptotic behaviour is still the same $\mathcal{O}(N^{-2s/d})$. Moreover, the empirical rate of SGBQ is similar to its theoretical rate $\mathcal{O}((\log N)^{2(s+1)(d-1)} N^{-2s})$ \citep{Hol08,Smo63}. Finally, the rate $\mathcal{O}((\log N)^{2s(d-1)} N^{-2s})$ is confirmed also for the algorithms DPPKQ, LVSQ $(\lambda = 0)$ and HaltonBQ.
\begin{figure}[]
    \centering
\includegraphics[width= 0.45\textwidth]{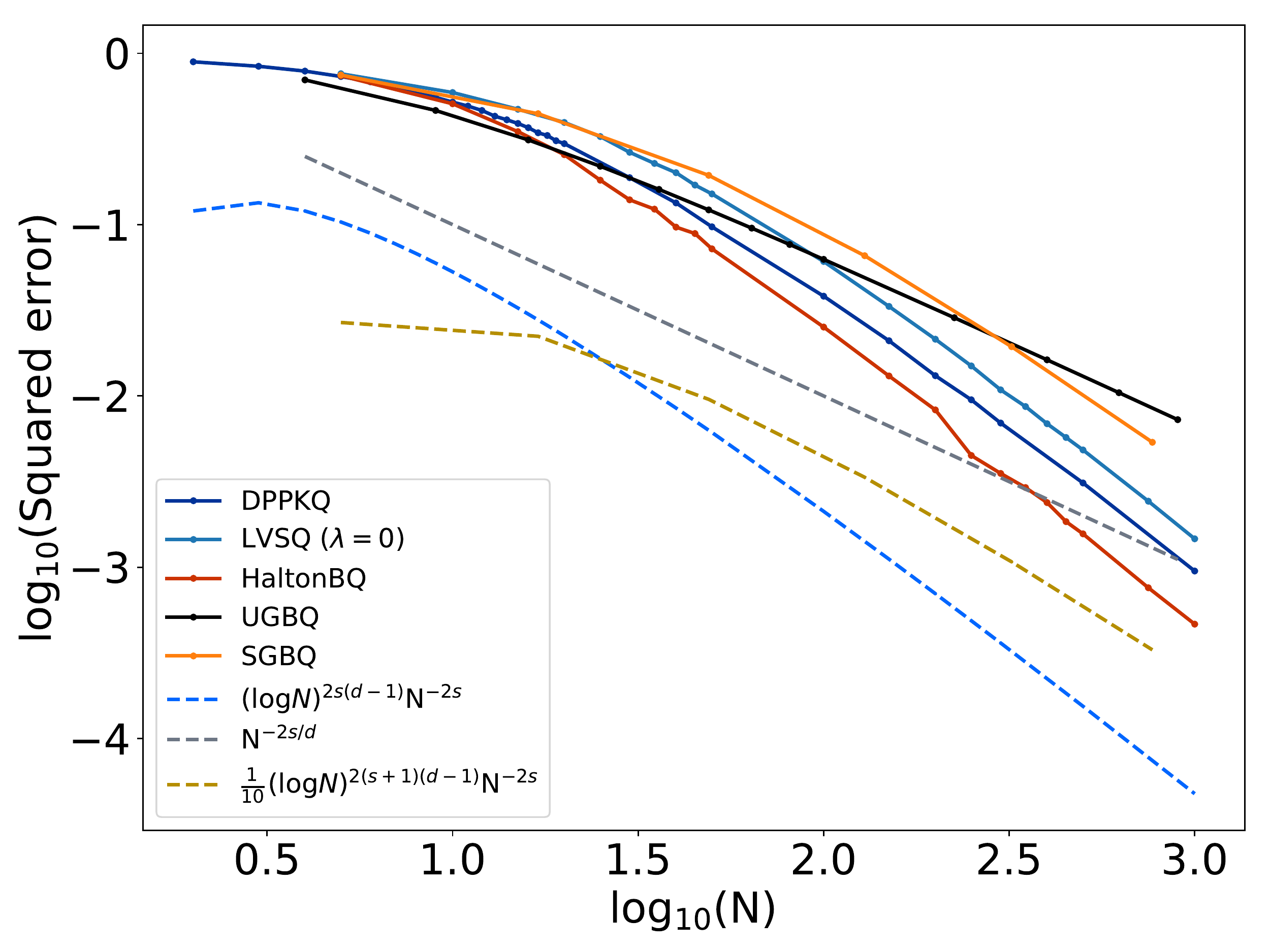}
\includegraphics[width= 0.45\textwidth]{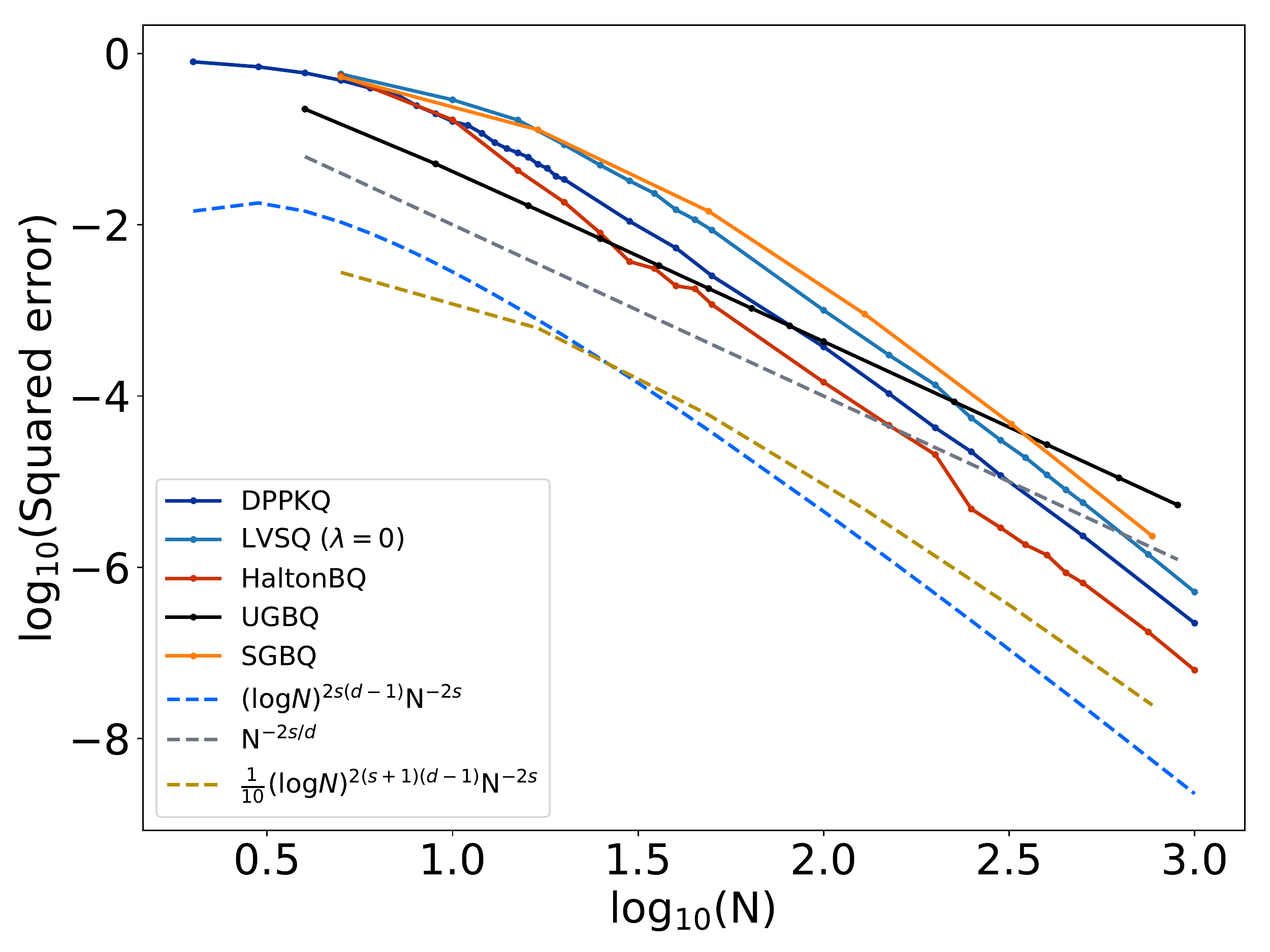} \\

\includegraphics[width= 0.45\textwidth]{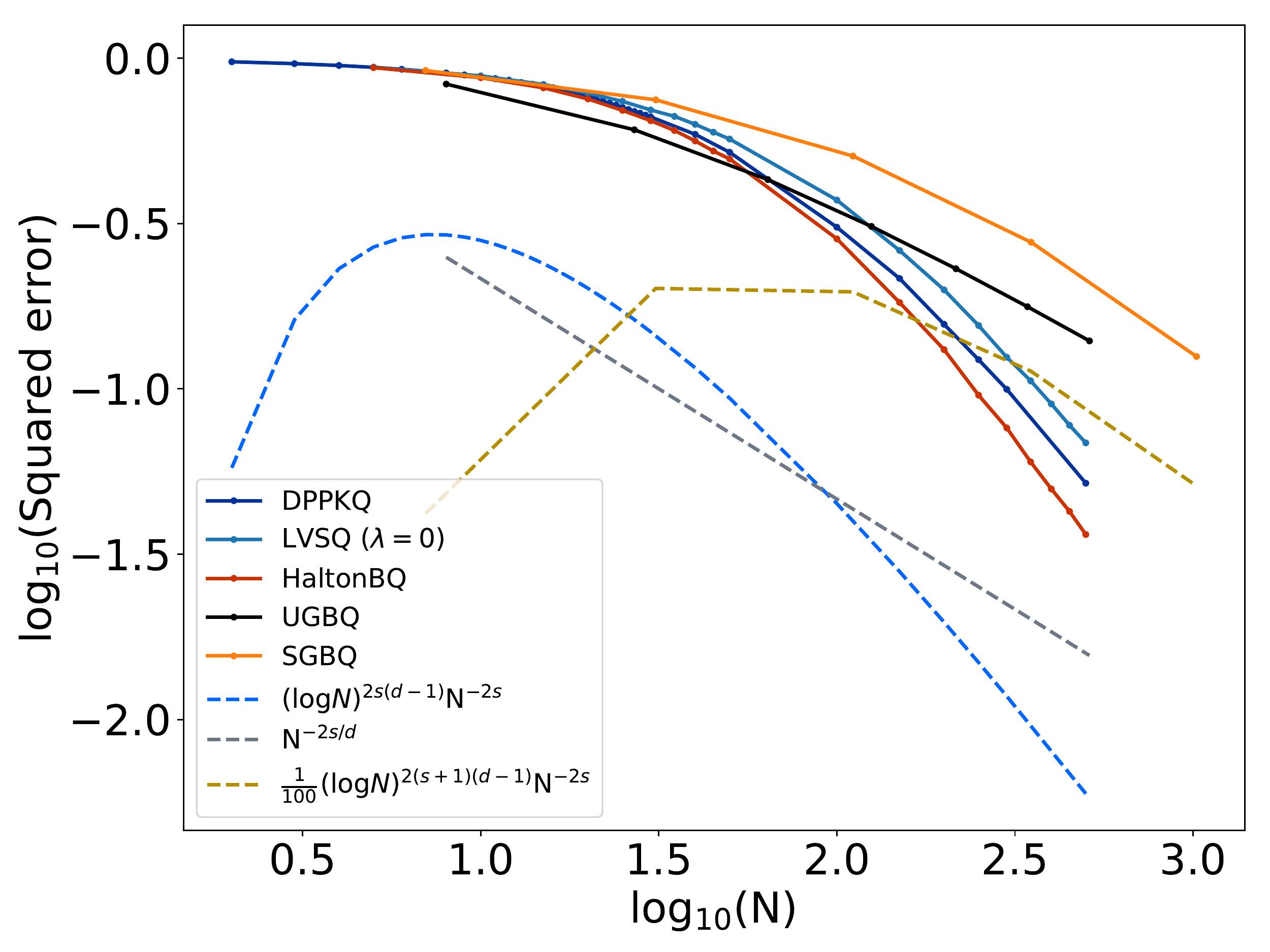}
\includegraphics[width= 0.45\textwidth]{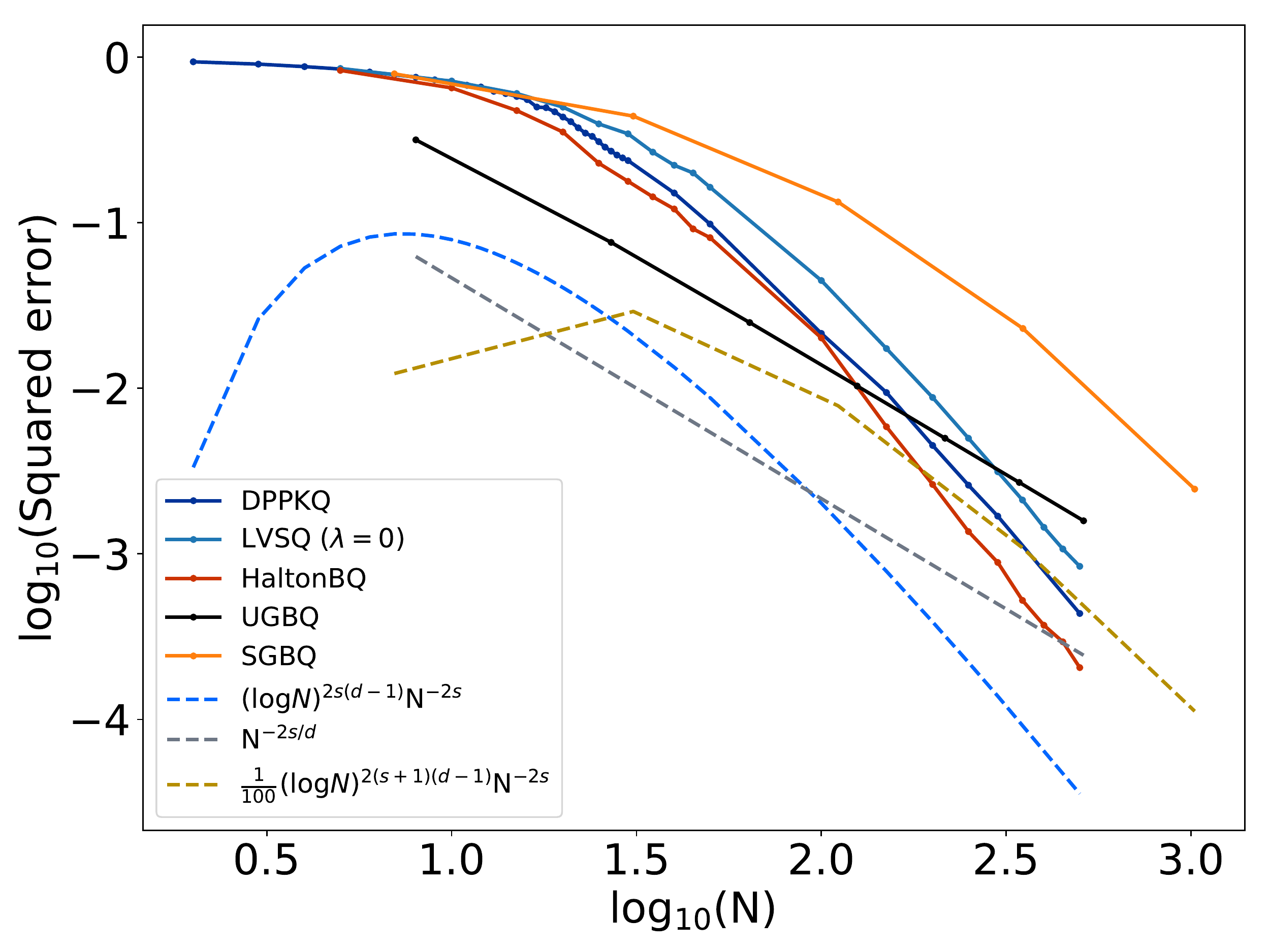} \\
\caption{The squared error for the Korobov space ($d \in \{2,3\}$, $s\in \{1,2\}$): (Top) the results for the dimension $d = 2$, (Left) the results for the regularity $s = 1$. \label{fig:KorobovResult}}
\end{figure}

\subsection{The multi Gaussian ensemble}
We consider the case of Gaussian spaces with $d \in \{2,3\}$. The kernel $k_{\gamma,d}$ and the reference measure are the tensor product of respectively the same kernel and the same measure used in Section~\ref{s:gaussian_numsim}. We compare DPPKQ and Bayesian quadrature based on the tensor product of Gauss-Hermite nodes noted GHBQ. Note that a variant of this algorithm was proposed in \citep{KaSa19}: the quadrature nodes are the tensor product of the Gauss-Hermite nodes however the weights were calculated differently. The authors proved under an assumption on the stability of the weights (that was verified empirically) that the rate of convergence is $\mathcal{O}(dr^{d}\beta'^{d}e^{-\delta' d N^{1/d}})$, where $r$ is a constant that quantify the stability of the weights, and $\beta',\delta'$ are constants that depend simultaneously on the the stability of the weights and length scales of the kernel and the measure. The results are compiled in Figure~\ref{fig:GaussianResult}.

The numerical simulations shows that the empirical rate of DPPKQ is $\mathcal{O}(e^{-\delta d N^{1/d}})$ that is slightly better than its theoretical rate $\mathcal{O}(e^{-\delta d!^{1/d} N^{1/d}})$. Moreover, we observe that the empirical rate of DPPKQ is better than the empirical rate of HGBQ.

\begin{figure}[]
    \centering
\includegraphics[width= 0.45\textwidth]{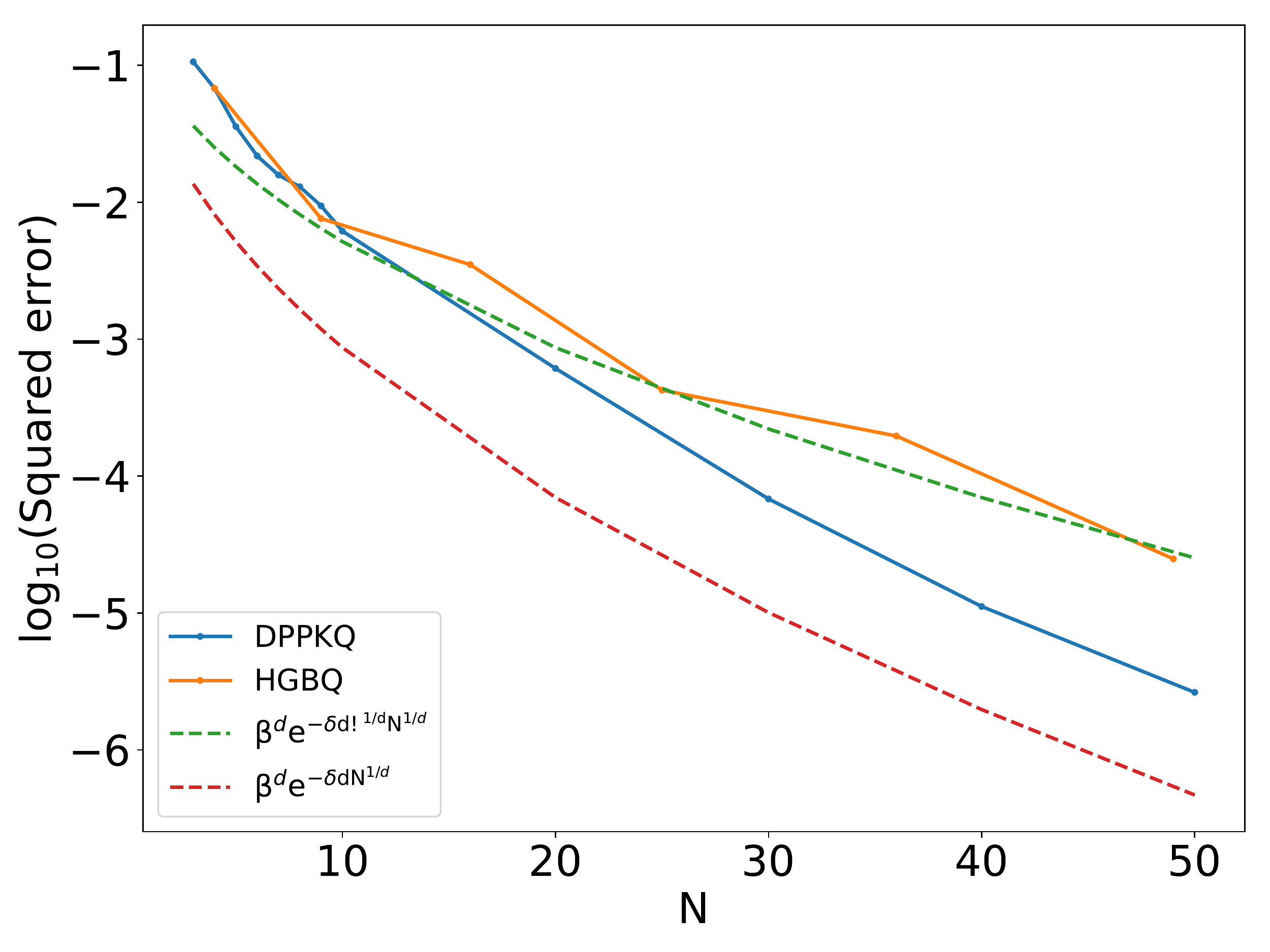}
\includegraphics[width= 0.45\textwidth]{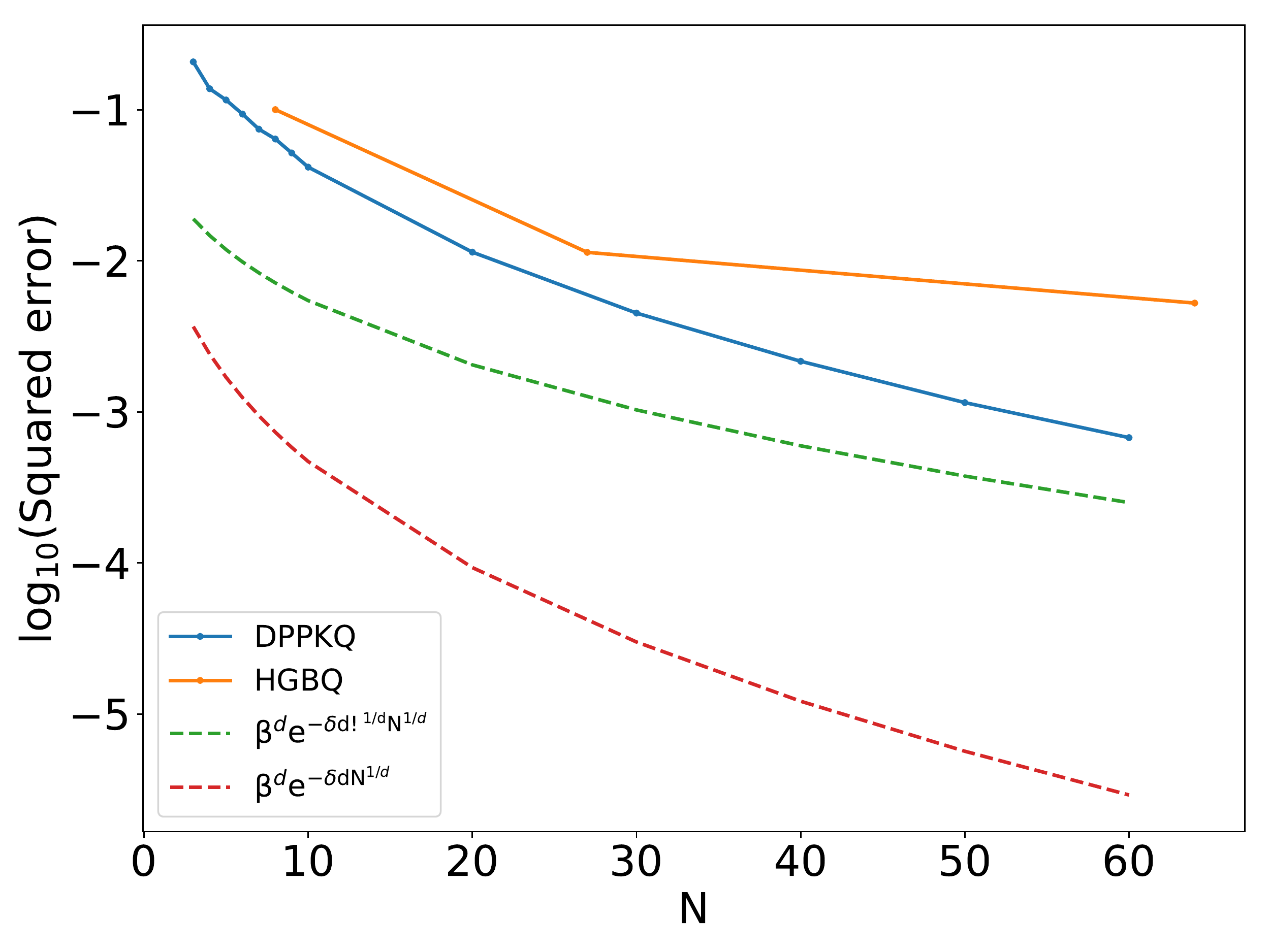} \\
\caption{The squared error for the Gaussian space ($d \in \{2,3\}$, $\gamma=1$). \label{fig:GaussianResult}}
\end{figure}

\section{Mercer's theorem, leverage scores, and principal angles}

For the sake of completeness, this section gathers some known results, which will be used to prove our own. We will need a general version of Mercer's theorem, as usual for kernel methods, see Section~\ref{subsec:mercer}. On a more technical ground, we will also need formulas for leverage score changes under rank 1 updates, see Section~\ref{subsec:lv_score_updates}. Finally, Section~\ref{subsec:principalangles} covers principal angles between subspaces of a Hilbert space, which bridge the gap between pairs of Hilbert subspaces and determinants, and facilitate taking expectations in Theorem~\ref{thm:main_theorem}.

\subsection{Mercer decomposition in non-compact subspaces}
\label{subsec:mercer}
In this section we recall Mercer's theorem and its extensions to non-compact spaces. Let $\mathcal{X}$ be a measurable space and $\mathrm{d}\omega$ a measure on $\mathcal{X}$. Assume $k$ is a positive definite kernel on $\mathcal{X}$. Whenever it is well-defined, we consider the operator
\begin{equation}
\bm{\Sigma} f(x) = \int\limits_{\mathcal{X}} k(x,y)f(y) \mathrm{d}\omega(y).
\end{equation}
\begin{theorem}\label{thm:Mercer_for_compact}
Assume that $\mathcal{X}$ is a compact space and $\mathrm{d}\omega$ is a finite Borel measure on $\mathcal{X}$.
Then, there exists an orthonormal basis $(e_{n})_{n \in \mathbb{N}^{*}}$ of $\mathbb{L}_{2}(\mathrm{d}\omega)$ consisting of eigenfunctions of $\bm{\Sigma}$, and the corresponding eigenvalues are non-negative.
The eigenfunctions corresponding to non-vanishing eigenvalues can be taken to be continuous, and the kernel $k$ writes
\begin{equation}
k(x,y) = \sum\limits_{n \in \mathbb{N}^{*}} \sigma_{n} e_{n}(x)e_{n}(y),
\end{equation}
where the convergence is absolute and uniform.
\end{theorem}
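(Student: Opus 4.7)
The plan is the classical four-stage argument for Mercer's theorem. First, I would verify that the integral operator $\bm{\Sigma}$ is compact, self-adjoint, and positive semi-definite on $\mathbb{L}_{2}(\mathrm{d}\omega)$. Since $\mathcal{X}$ is compact and $k$ is continuous (hence bounded), $k \in \mathbb{L}_{2}(\mathrm{d}\omega \otimes \mathrm{d}\omega)$, so $\bm{\Sigma}$ is Hilbert--Schmidt and therefore compact. Symmetry of $k$ yields self-adjointness, while the pointwise positive semi-definiteness of $k$ combined with an approximation of $\mathbb{L}_{2}$ functions by simple functions gives $\langle \bm{\Sigma} f, f\rangle_{\mathrm{d}\omega} \geq 0$ for every $f$.

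Second, the spectral theorem for compact self-adjoint operators produces a countable orthonormal family of eigenfunctions spanning the closure of the range of $\bm{\Sigma}$, with non-negative eigenvalues decreasing to $0$. Adjoining any orthonormal basis of $\ker \bm{\Sigma}$ (with $\sigma_n = 0$ on these modes) gives a full orthonormal basis $(e_{n})_{n \in \mathbb{N}^{*}}$ of $\mathbb{L}_{2}(\mathrm{d}\omega)$. For $\sigma_n > 0$ the identity $\sigma_n e_n = \bm{\Sigma} e_n$ holds $\mathrm{d}\omega$-a.e., and replacing $e_n$ by the function $x \mapsto \sigma_n^{-1} \int k(x,y) e_n(y)\mathrm{d}\omega(y)$ produces a continuous representative, continuity following from dominated convergence together with the uniform continuity of $k$ on the compact product $\mathcal{X} \times \mathcal{X}$.

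The main obstacle is the final stage, establishing pointwise, absolute and uniform convergence of the Mercer series. I would study the remainder
\begin{equation}
R_{N}(x,y) = k(x,y) - \sum_{n \in [N]} \sigma_n e_n(x) e_n(y),
\end{equation}
which is continuous and symmetric, and is the kernel of the operator $\bm{\Sigma} - \sum_{n \in [N]} \sigma_n \langle \cdot, e_n\rangle_{\mathrm{d}\omega}\, e_n$, whose non-zero eigenvalues are $\{\sigma_m\}_{m > N}$ and hence non-negative. A standard approximation of point evaluations by normalized indicators of small balls (which have positive $\mathrm{d}\omega$-mass by the full-support assumption made in the paper) transfers operator positivity to the pointwise bound $R_N(x,x) \geq 0$. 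This yields $\sum_n \sigma_n e_n(x)^2 \leq k(x,x)$, i.e.\ pointwise convergence on the diagonal.

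The last step upgrades this to uniform convergence. The partial sums $x \mapsto \sum_{n \leq N} \sigma_n e_n(x)^2$ form a monotone increasing sequence of continuous functions converging pointwise to the continuous function $x \mapsto k(x,x)$ on the compact set $\mathcal{X}$, so Dini's theorem gives uniform convergence on the diagonal. Off-diagonal absolute and uniform convergence then follows from Cauchy--Schwarz,
\begin{equation}
\sum_{n=N+1}^{M} |\sigma_n e_n(x) e_n(y)| \leq \Big(\sum_{n=N+1}^{M} \sigma_n e_n(x)^2\Big)^{1/2} \Big(\sum_{n=N+1}^{M} \sigma_n e_n(y)^2\Big)^{1/2},
\end{equation}
whose right-hand side tends uniformly to zero on $\mathcal{X} \times \mathcal{X}$ by the previously established uniform bound on the diagonal. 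This closes the argument.
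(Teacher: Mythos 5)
The paper does not actually prove this theorem; it quotes it and points to \cite{Mer1909}, \cite{Lax02}, \cite{CuZh07}, so your attempt has to be judged against the classical Mercer argument, which is indeed the route you take. Stages 1 and 2 (Hilbert--Schmidt hence compact, self-adjoint, positive; spectral theorem; continuous representatives via $e_n=\sigma_n^{-1}\bm{\Sigma}e_n$) are fine, and so is your use of the paper's standing assumption $\mathrm{supp}(\mathrm{d}\omega)=\mathcal{X}$ to turn operator positivity of the remainder kernel $R_N$ into the pointwise bound $R_N(x,x)\geq 0$ (this hypothesis is genuinely needed, even though the theorem as restated in the appendix omits it).

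The genuine gap is at the hand-off from stage 3 to stage 4. Positivity of $R_N$ on the diagonal only gives $\sum_{n\leq N}\sigma_n e_n(x)^2\leq k(x,x)$, i.e.\ the diagonal series converges to \emph{some} limit $s(x)\leq k(x,x)$; it does not identify that limit as $k(x,x)$. Yet that identification is exactly what you feed into Dini's theorem (whose hypothesis requires the pointwise limit to be the continuous function $x\mapsto k(x,x)$), and without it the Mercer identity itself does not follow. This identification is the crux of the classical proof and needs an extra argument, for instance: for fixed $x$, the function $k(x,\cdot)\in\mathbb{L}_2(\mathrm{d}\omega)$ has coefficients $\langle k(x,\cdot),e_n\rangle_{\mathrm{d}\omega}=\sigma_n e_n(x)$ and is orthogonal to $\ker\bm{\Sigma}$ (if $\bm{\Sigma}h=0$, the continuous function $y\mapsto\int k(y,z)h(z)\mathrm{d}\omega(z)$ vanishes $\mathrm{d}\omega$-a.e., hence everywhere by full support), so $k(x,\cdot)=\sum_n\sigma_n e_n(x)e_n(\cdot)$ in $\mathbb{L}_2(\mathrm{d}\omega)$; by Cauchy--Schwarz and the diagonal bound you already have, this series converges uniformly in $y$ for fixed $x$, so its sum is continuous in $y$ and agrees with $k(x,\cdot)$ a.e., hence everywhere by full support; taking $y=x$ gives $\sum_n\sigma_n e_n(x)^2=k(x,x)$. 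With that step inserted, your Dini-plus-Cauchy--Schwarz conclusion of absolute and uniform convergence on $\mathcal{X}\times\mathcal{X}$ is correct.
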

Theorem~\ref{thm:Mercer_for_compact} was first proven when $\mathcal{X} = [0,1]$ and $\mathrm{d}\omega$ is the Lebesgue measure in \cite{Mer1909}. A modern proof can be found in \cite{Lax02}, while the proof in the general case can be found in \cite{CuZh07}.  Note, however, that the compactness assumption in Theorem~\ref{thm:Mercer_for_compact} excludes kernels such as the Gaussian or the Laplace kernels.
Hence, extensions to non-compact spaces are usually required in ML.
In \cite{Sun05}, the author extended Theorem~\ref{thm:Mercer_for_compact} to $X = \cup_{i \in \mathbb{N}} X_{i}$, with the $X_{i}$s compact and $\mathrm{d}\omega(X_{i})<\infty$. One can also extend Mercer's theorem under a \textit{compact embedding} assumption \citep{StSc12}: the RKHS $\mathcal{F}$ associated to $k$ is said to be compactly embedded in $\mathbb{L}_{2}(\mathrm{d}\omega)$ if the application
\begin{align*}
  I_{\mathcal{F}}: \mathcal{F}&\longrightarrow \mathbb{L}_{2}(\mathrm{d}\omega) \\
  f &\longmapsto f
\end{align*}
is compact.
A sufficient condition for this assumption is the integrability of the diagonal (Lemma 2.3, \citep{StSc12}):
\begin{equation}
\int_{\mathcal{X}} k(x,x) \mathrm{d}\omega(x) < \infty.
\end{equation}
Note that this condition is not necessary (Example 2.9, \citep{StSc12}). Now, under the compact embedding assumption, the pointwise convergence of the Mercer decomposition to the kernel $k$ is equivalent to the injectivity of the embedding $I_{\mathcal{F}}$ (Theorem 3.1, \citep{StSc12}).

\subsection{Leverage score changes under rank 1 updates}\label{subsec:lv_score_updates}
In this section we prove a lemma inspired from Lemma 5 in \cite{Coh15}. This lemma concerns the changes of leverage scores under rank 1 updates.

We start by recalling the definition of leverage scores, which play an important role in randomized linear algebra \citep{DrMaMu06}. Let $N,M \in \mathbb{N}^{*}$, $M \geq N$. Let $\bm{A} \in \mathbb{R}^{N \times M}$ be a matrix of full rank.
For $i \in [M]$, denote $\bm{a}_{i}$ the $i$-th column of the matrix $\bm{A}$. Now, the $i$-th leverage score of the matrix $\bm{A}$ is defined by
\begin{equation}\label{eq:def_matrix_leverage_score}
\tau_{i}(\bm{A}) = \bm{a}_{i}^{\Tran} (\bm{A}\bm{A}^{\Tran})^{-1}\bm{a}_{i},
\end{equation}
while the cross-leverage score between the $i$-th column and the $j$-th column is defined by
\begin{equation}
\tau_{i,j}(\bm{A}) = \bm{a}_{i}^{\Tran} (\bm{A}\bm{A}^{\Tran})^{-1}\bm{a}_{j}.
\end{equation}
It holds \citep{DrMaMu06}
\begin{equation}\label{eq:bounds_on_lv_scores}
\forall i \in [M], \: \tau_{i}(\bm{A}) \in [0,1],
\end{equation}
and we have the following result.
\begin{lemma}\label{lemma:rank1_leverage_score}
Let $N,M \in \mathbb{N}^{*}$, $M \geq N$. Let $\bm{A} \in \mathbb{R}^{N \times M}$ of full rank and $\rho \in \mathbb{R}_{+}^{*}$ and $i \in [M]$. Let $\bm{W} \in \mathbb{R}^{M \times M}$ a diagonal matrix such that $\bm{W}_{i,i} = \sqrt{1+\rho}$ and $\bm{W}_{j,j} = 1$ for $j \neq i$. Then
\begin{equation}\label{eq:lv_score_update}
\tau_{i}(\bm{A}\bm{W}) = \frac{(1+\rho)\tau_{i}(\bm{A})}{1+\rho \tau_{i}(\bm{A})} \geq \tau_{i}(\bm{A}),
\end{equation}
and
\begin{equation}\label{eq:cross_lv_score_update}
\forall j \in [M]-\{i\}, \: \tau_{j}(\bm{A}\bm{W}) = \tau_{j}(\bm{A}) -\frac{\rho \tau_{i,j}(\bm{A})^{2}}{1+\rho \tau_{i}(\bm{A})} \leq \tau_{j}(\bm{A}).
\end{equation}
\end{lemma}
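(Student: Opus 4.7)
The plan is to observe that right-multiplying $\bm{A}$ by the diagonal matrix $\bm{W}$ only rescales the $i$-th column, hence
\[
\bm{A}\bm{W}(\bm{A}\bm{W})^{\Tran} = \bm{A}\bm{A}^{\Tran} + \rho\, \bm{a}_{i}\bm{a}_{i}^{\Tran},
\]
while the columns $\bm{w}_{j}$ of $\bm{A}\bm{W}$ satisfy $\bm{w}_{i} = \sqrt{1+\rho}\,\bm{a}_{i}$ and $\bm{w}_{j} = \bm{a}_{j}$ for $j \neq i$. So the whole lemma reduces to computing how $(\bm{A}\bm{A}^{\Tran})^{-1}$ transforms under a rank-$1$ additive perturbation, and the natural tool is the Sherman--Morrison identity
\[
(\bm{M} + \rho\, \bm{u}\bm{u}^{\Tran})^{-1} = \bm{M}^{-1} - \frac{\rho\, \bm{M}^{-1}\bm{u}\bm{u}^{\Tran}\bm{M}^{-1}}{1 + \rho\, \bm{u}^{\Tran}\bm{M}^{-1}\bm{u}},
\]
applied with $\bm{M} = \bm{A}\bm{A}^{\Tran}$ and $\bm{u} = \bm{a}_{i}$. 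Note that full rank of $\bm{A}$ guarantees invertibility of $\bm{M}$, and the denominator $1 + \rho\, \tau_{i}(\bm{A})$ is strictly positive since $\rho > 0$ and $\tau_{i}(\bm{A}) \in [0,1]$ by \eqref{eq:bounds_on_lv_scores}.

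For \eqref{eq:lv_score_update}, I would plug $\bm{w}_{i} = \sqrt{1+\rho}\,\bm{a}_{i}$ into definition \eqref{eq:def_matrix_leverage_score}, substitute Sherman--Morrison, and recognize $\bm{a}_{i}^{\Tran}\bm{M}^{-1}\bm{a}_{i} = \tau_{i}(\bm{A})$. Algebraically,
\[
\tau_{i}(\bm{A}\bm{W}) = (1+\rho)\left[\tau_{i}(\bm{A}) - \frac{\rho\, \tau_{i}(\bm{A})^{2}}{1+\rho\, \tau_{i}(\bm{A})}\right] = \frac{(1+\rho)\,\tau_{i}(\bm{A})}{1+\rho\, \tau_{i}(\bm{A})},
\]
and the inequality $\tau_{i}(\bm{A}\bm{W}) \geq \tau_{i}(\bm{A})$ follows by checking $(1+\rho) \geq 1 + \rho\, \tau_{i}(\bm{A})$, which is immediate from $\tau_{i}(\bm{A}) \leq 1$.

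For \eqref{eq:cross_lv_score_update} with $j \neq i$, I would likewise write
\[
\tau_{j}(\bm{A}\bm{W}) = \bm{a}_{j}^{\Tran}(\bm{A}\bm{A}^{\Tran} + \rho\, \bm{a}_{i}\bm{a}_{i}^{\Tran})^{-1}\bm{a}_{j}
\]
and apply Sherman--Morrison. The cross-term $\bm{a}_{j}^{\Tran}\bm{M}^{-1}\bm{a}_{i}$ is exactly $\tau_{i,j}(\bm{A})$, so
\[
\tau_{j}(\bm{A}\bm{W}) = \tau_{j}(\bm{A}) - \frac{\rho\, \tau_{i,j}(\bm{A})^{2}}{1 + \rho\, \tau_{i}(\bm{A})},
\]
and the inequality $\tau_{j}(\bm{A}\bm{W}) \leq \tau_{j}(\bm{A})$ is immediate since the subtracted term is non-negative.

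I do not anticipate a real obstacle: the argument is a direct Sherman--Morrison computation, and the only bookkeeping point is to be careful that we really have a rank-$1$ perturbation of $\bm{A}\bm{A}^{\Tran}$ (not of $\bm{A}$ itself), which is what makes Sherman--Morrison applicable and what keeps the formulas clean. The bounds $\tau_{i}(\bm{A}) \in [0,1]$ recalled in \eqref{eq:bounds_on_lv_scores} are what turn the identities into the claimed inequalities.
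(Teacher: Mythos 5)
Your proposal is correct and follows essentially the same route as the paper's proof: a Sherman--Morrison update applied to $\bm{A}\bm{W}\bm{W}^{\Tran}\bm{A}^{\Tran} = \bm{A}\bm{A}^{\Tran} + \rho\,\bm{a}_{i}\bm{a}_{i}^{\Tran}$, followed by direct substitution into the definitions of $\tau_{i}$ and $\tau_{j}$ and use of $\tau_{i}(\bm{A})\in[0,1]$ for the inequalities. No gaps.
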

The proof of this lemma is similar to Lemma 5 in \cite{Coh15}. We recall the proof for completeness.
\begin{proof}(Adapted from \cite{Coh15})
The Sherman-Morrison formula applied to $\bm{A}\bm{W}\bm{W}^{\Tran}\bm{A}^{\Tran}$ and the vector $\sqrt{\rho} \bm{a}_{i}$ yields
\begin{align}
(\bm{A}\bm{W}\bm{W}^{\Tran}\bm{A}^{\Tran})^{-1} &  = (\bm{A}\bm{A}^{\Tran} + \rho \bm{a}_{i}\bm{a}_{i}^{\Tran})^{-1} \\
& = (\bm{A}\bm{A}^{\Tran})^{-1} - \frac{(\bm{A}\bm{A}^{\Tran})^{-1}\rho \bm{a}_{i}\bm{a}_{i}^{\Tran} (\bm{A}\bm{A}^{\Tran})^{-1}}{1+ \rho \bm{a}_{i}^{\Tran}(\bm{A}\bm{A}^{\Tran})^{-1}\bm{a}_{i}}.
\end{align}
By definition of $\tau_{i}(\bm{A}\bm{W})$
\begin{align}
\tau_{i}(\bm{A}\bm{W}) & = \sqrt{1 + \rho} \bm{a}_{i}^{\Tran}(\bm{A}\bm{W}\bm{W}^{\Tran}\bm{A}^{\Tran})^{-1}\bm{a}_{i}\sqrt{1 + \rho}\\
& = (1+\rho)\bm{a}_{i}^{\Tran} \left( (\bm{A}\bm{A}^{\Tran})^{-1} - \frac{(\bm{A}\bm{A}^{\Tran})^{-1}\rho \bm{a}_{i}\bm{a}_{i}^{\Tran} (\bm{A}\bm{A}^{\Tran})^{-1}}{1+ \rho \bm{a}_{i}^{\Tran}(\bm{A}\bm{A}^{\Tran})^{-1}\bm{a}_{i}} \right) \bm{a}_{i} \nonumber\\
& = (1+\rho) \left(\tau_{i}(\bm{A}) - \frac{\rho \tau_{i}(\bm{A})^{2}}{1+\rho \tau_{i}(\bm{A})} \right) \nonumber\\
& = (1+\rho)\frac{\tau_{i}(\bm{A})}{1+\rho \tau_{i}(\bm{A})} \nonumber.
\end{align}
Now let $j \in [M]-\{i\}$. By definition of $\tau_{j}(\bm{A}\bm{W})$
\begin{align}
\tau_{j}(\bm{A}\bm{W}) & =  \bm{a}_{j}^{\Tran}(\bm{A}\bm{W}\bm{W}^{\Tran}\bm{A}^{\Tran})^{-1}\bm{a}_{j}\\
& = \bm{a}_{j}^{\Tran} \left( (\bm{A}\bm{A}^{\Tran})^{-1} - \frac{(\bm{A}\bm{A}^{\Tran})^{-1}\rho \bm{a}_{i}\bm{a}_{i}^{\Tran} (\bm{A}\bm{A}^{\Tran})^{-1}}{1+ \rho \bm{a}_{i}^{\Tran}(\bm{A}\bm{A}^{\Tran})^{-1}\bm{a}_{i}} \right) \bm{a}_{j} \nonumber\\
& =  \tau_{j}(\bm{A}) - \frac{\rho \tau_{i,j}(\bm{A})^{2}}{1+\rho \tau_{i}(\bm{A})} \nonumber\\
& \leq \tau_{j}(\bm{A}) \nonumber.
\end{align}
\end{proof}

\subsection{Principal angles between subspaces in Hilbert spaces}
\label{subsec:principalangles}
We recall in this section the definition of principal angles between subspaces in Hilbert spaces and connect them to the determinant of the Gramian matrix of their orthonormal bases.


\begin{proposition}\label{prop:cos_between_subspaces}
Let $\mathcal{H}$ be a Hilbert space. Let $\mathcal{P}_{1}$ and $\mathcal{P}_{2}$ be two finite-dimensional subspaces of $\mathcal{H}$ with $N = \dim \mathcal{P}_{1} = \dim \mathcal{P}_{2}$. Denote $\bm{\Pi}_{\mathcal{P}_{1}}$ and $\bm{\Pi}_{\mathcal{P}_{2}}$ the orthogonal projections of $\mathcal{H}$ onto these two subspaces. There exist two orthonormal bases for $\mathcal{P}_{1}$ and $\mathcal{P}_{2}$ denoted $(\bm{v}_{i}^{1})_{i \in [N]}$ and $(\bm{v}_{i}^{2})_{i \in [N]}$, and a set of angles $\theta_{i}(\mathcal{P}_{1},\mathcal{P}_{2}) \in [0,\frac{\pi}{2}]$ such that
\begin{equation}
\cos \theta_{N}(\mathcal{P}_{1},\mathcal{P}_{2}) \leq \dots \leq \cos \theta_{1}(\mathcal{P}_{1},\mathcal{P}_{2}) ,
\end{equation}

and for $i \in [1,...,N]$
\begin{equation}
\langle \bm{v}_{i}^{1}, \bm{v}_{i}^{2} \rangle_{\mathcal{H}} = \cos \theta_{i}(\mathcal{P}_{1},\mathcal{P}_{2}),
\end{equation}
and
\begin{equation}\label{eq:principal_vectors_projection_relationship}
 \bm{\Pi}_{\mathcal{P}_{1}}\bm{v}_{i}^{2} = \cos \theta_{i}(\mathcal{P}_{1},\mathcal{P}_{2}) \bm{v}_{i}^{1},
\end{equation}
and
\begin{equation}
 \bm{\Pi}_{\mathcal{P}_{2}}\bm{v}_{i}^{1} = \cos \theta_{i}(\mathcal{P}_{1},\mathcal{P}_{2}) \bm{v}_{i}^{2}.
\end{equation}
In particular
\begin{equation}\label{eq:costhetaN}
\cos \theta_{N}(\mathcal{P}_{1},\mathcal{P}_{2}) = \inf\limits_{\bm{v} \in \mathcal{P}_{1},\|\bm{v}\|_{\mathcal{H}} = 1} \|\bm{\Pi}_{\mathcal{P}_{2}}\bm{v}\|_{\mathcal{H}}.
\end{equation}
\end{proposition}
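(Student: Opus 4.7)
The plan is to reduce the infinite-dimensional statement to a finite-dimensional SVD computation on the cross-Gram matrix of arbitrary initial bases, and then read off the principal vectors, the angles, and the projection identities.

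Concretely, first I would pick any orthonormal bases $(\bm{u}_i^1)_{i \in [N]}$ of $\mathcal{P}_1$ and $(\bm{u}_j^2)_{j \in [N]}$ of $\mathcal{P}_2$ and form the cross-Gram matrix $\bm{M} \in \mathbb{R}^{N \times N}$ with entries $\bm{M}_{ij} = \langle \bm{u}_i^1, \bm{u}_j^2\rangle_{\mathcal{H}}$. Cauchy-Schwarz gives $\|\bm{M}\|_{\OP} \leq 1$, and the singular values $s_1 \geq \cdots \geq s_N$ of $\bm{M}$ therefore lie in $[0,1]$. Writing $\bm{M} = \bm{U} \bm{S} \bm{V}^{\Tran}$ with $\bm{S} = \Diag(s_1,\dots,s_N)$, I would define the candidate principal vectors by the orthogonal change of basis
\begin{equation}
\bm{v}_i^1 = \sum_{k \in [N]} \bm{U}_{ki}\, \bm{u}_k^1, \qquad \bm{v}_i^2 = \sum_{l \in [N]} \bm{V}_{li}\, \bm{u}_l^2,
\end{equation}
which remain orthonormal bases of $\mathcal{P}_1$ and $\mathcal{P}_2$ since $\bm{U},\bm{V}$ are orthogonal. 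Setting $\cos \theta_i(\mathcal{P}_1,\mathcal{P}_2) := s_i$, the identity $\bm{U}^{\Tran} \bm{M} \bm{V} = \bm{S}$ directly gives $\langle \bm{v}_i^1, \bm{v}_j^2\rangle_{\mathcal{H}} = s_i \delta_{ij}$, which yields both the claimed scalar products and the ordering on the angles.

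Next, the projection identities fall out immediately because $(\bm{v}_j^1)_{j \in [N]}$ is an orthonormal basis of $\mathcal{P}_1$: expanding
\begin{equation}
\bm{\Pi}_{\mathcal{P}_1} \bm{v}_i^2 = \sum_{j \in [N]} \langle \bm{v}_j^1, \bm{v}_i^2\rangle_{\mathcal{H}}\, \bm{v}_j^1 = \cos\theta_i(\mathcal{P}_1,\mathcal{P}_2)\, \bm{v}_i^1,
\end{equation}
and symmetrically for $\bm{\Pi}_{\mathcal{P}_2} \bm{v}_i^1$. Finally, for \eqref{eq:costhetaN}, any $\bm{v} \in \mathcal{P}_1$ with $\|\bm{v}\|_{\mathcal{H}} = 1$ decomposes as $\bm{v} = \sum_i \alpha_i \bm{v}_i^1$ with $\sum_i \alpha_i^2 = 1$, and the projection identity just proved yields $\|\bm{\Pi}_{\mathcal{P}_2} \bm{v}\|_{\mathcal{H}}^2 = \sum_i \alpha_i^2 \cos^2\theta_i(\mathcal{P}_1,\mathcal{P}_2) \geq \cos^2\theta_N(\mathcal{P}_1,\mathcal{P}_2)$, with equality attained at $\bm{v} = \bm{v}_N^1$.

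There is no real obstacle here beyond bookkeeping: the finite-dimensionality of $\mathcal{P}_1, \mathcal{P}_2$ means the ambient Hilbert structure of $\mathcal{H}$ is used only through the inner product restricted to $\mathcal{P}_1 + \mathcal{P}_2$, so no functional-analytic machinery is required, and all the content is in the SVD of $\bm{M}$. The only point worth mentioning carefully is that the singular values are bounded by $1$, which justifies interpreting them as cosines of genuine angles in $[0,\pi/2]$; this follows from Cauchy-Schwarz as noted above.
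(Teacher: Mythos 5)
Your proof is correct. Note that the paper does not actually prove this proposition: it defers to the references (Golub--Van Loan for the finite-dimensional case, Davis--Kahan for the general Hilbert-space case), so you have supplied the argument the paper only cites. What you wrote is essentially the classical SVD proof of the finite-dimensional statement, together with the (correct and worth making explicit) observation that the ambient Hilbert space plays no role beyond the restriction of its inner product to the finite-dimensional subspace $\mathcal{P}_1+\mathcal{P}_2$, so the same computation settles the general case without any of the functional-analytic machinery of Davis--Kahan. One small point of rigor: the bound $\|\bm{M}\|_{\OP}\leq 1$ does not follow from the entrywise Cauchy--Schwarz bound $|\bm{M}_{ij}|\leq 1$; the correct justification, which is presumably what you intend, is to apply Cauchy--Schwarz in $\mathcal{H}$ to the lifted vectors, namely for unit $\bm{a},\bm{b}\in\mathbb{R}^N$,
\begin{equation}
\bm{a}^{\Tran}\bm{M}\bm{b} \;=\; \Big\langle \sum_{i}a_i\bm{u}_i^1,\ \sum_{j}b_j\bm{u}_j^2\Big\rangle_{\mathcal{H}} \;\leq\; \Big\|\sum_{i}a_i\bm{u}_i^1\Big\|_{\mathcal{H}}\Big\|\sum_{j}b_j\bm{u}_j^2\Big\|_{\mathcal{H}} \;=\;1,
\end{equation}
using the orthonormality of each basis (equivalently, $\bm{M}$ represents the contraction $\bm{\Pi}_{\mathcal{P}_2}$ restricted to $\mathcal{P}_1$ in the chosen bases). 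With that clarified, the singular values lie in $[0,1]$, the angles are well defined, and the rest of your argument (the diagonal cross-Gram after the orthogonal changes of basis, the projection identities from expanding in the orthonormal bases, and the variational characterization of $\cos\theta_N$ with equality at $\bm{v}_N^1$) is complete.
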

We refer to \cite{GoVa12} for the proof in the finite-dimensional case and \cite{DaKa70} for the general case.
The following result shows that the principal angles are somewhat independent of the choice of orthonormal bases. It can be found in \cite{BjGo73,MiBe92} for the finite dimensional case. We give here the proof for the general case, for the sake of completeness.
\begin{corollary}\label{cor:cos_det_relationship}
Let $(\bm{w}^{1}_{i})_{i \in [N]}$ be any orthonormal basis of $\mathcal{P}_{1}$ and $(\bm{w}^{2}_{i})_{i \in [N]}$ be any orthonormal basis of $\mathcal{P}_{2}$, and let $\bm{W} = (\langle \bm{w}^{1}_{i},\bm{w}^{2}_{j} \rangle_{\mathcal{H}})_{1\leq i,j \leq N}$ and $\bm{G} = \bm{W}\bm{W}^{\Tran}$. Then
the eigenvalues of $\bm{G}$ are the $\cos^{2} \theta_{i}(\mathcal{P}_{1},\mathcal{P}_{2})$. In particular, $\Det^{2} \bm{W} = \Det \bm{G} =  \prod\limits_{i \in [N]} \cos^{2} \theta_{i}(\mathcal{P}_{1},\mathcal{P}_{2})$.
\end{corollary}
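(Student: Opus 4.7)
The plan is to reduce the computation to the diagonal case provided by Proposition~\ref{prop:cos_between_subspaces}, and then propagate that structure to arbitrary orthonormal bases via orthogonal changes of basis. Concretely, I would first invoke Proposition~\ref{prop:cos_between_subspaces} to produce principal orthonormal bases $(\bm{v}^{1}_{i})_{i\in[N]}$ of $\mathcal{P}_1$ and $(\bm{v}^{2}_{i})_{i\in[N]}$ of $\mathcal{P}_2$ such that $\langle \bm{v}^{1}_{i},\bm{v}^{2}_{j}\rangle_{\mathcal{H}} = \delta_{ij}\cos\theta_{i}(\mathcal{P}_{1},\mathcal{P}_{2})$. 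Writing $\bm{V} := \Diag(\cos\theta_{1},\dots,\cos\theta_{N})$, the Gram matrix of the two principal bases is exactly $\bm{V}$.

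Next, since both $\mathcal{P}_1$ and $\mathcal{P}_2$ are finite-dimensional, any orthonormal basis of $\mathcal{P}_1$ is related to the principal basis $(\bm{v}^{1}_{i})$ by a unique orthogonal matrix $\bm{U}_1 \in \mathbb{R}^{N\times N}$, and similarly for $\mathcal{P}_2$ and $\bm{U}_2$. That is, $\bm{w}^{1}_{i} = \sum_{k} (\bm{U}_{1})_{ik}\bm{v}^{1}_{k}$ and $\bm{w}^{2}_{j} = \sum_{\ell}(\bm{U}_{2})_{j\ell}\bm{v}^{2}_{\ell}$. Expanding the inner products by bilinearity and using the diagonal structure of the principal Gram matrix gives
\begin{equation}
\bm{W} = \bm{U}_{1}\bm{V}\bm{U}_{2}^{\Tran}.
\end{equation}

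Taking $\bm{G} = \bm{W}\bm{W}^{\Tran}$ and using $\bm{U}_2^{\Tran}\bm{U}_2 = \bm{I}_N$ then yields
\begin{equation}
\bm{G} = \bm{U}_{1}\bm{V}\bm{U}_{2}^{\Tran}\bm{U}_{2}\bm{V}^{\Tran}\bm{U}_{1}^{\Tran} = \bm{U}_{1}\bm{V}^{2}\bm{U}_{1}^{\Tran},
\end{equation}
so $\bm{G}$ is orthogonally similar to $\bm{V}^{2} = \Diag(\cos^{2}\theta_{1},\dots,\cos^{2}\theta_{N})$. In particular its eigenvalues are exactly the $\cos^{2}\theta_{i}(\mathcal{P}_{1},\mathcal{P}_{2})$, which establishes the spectral claim. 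Taking determinants gives $\Det\bm{G} = \Det\bm{V}^{2} = \prod_{i\in[N]}\cos^{2}\theta_{i}(\mathcal{P}_{1},\mathcal{P}_{2})$, and since $\bm{G} = \bm{W}\bm{W}^{\Tran}$ one also obtains $\Det^{2}\bm{W} = \Det\bm{G}$, closing the proof.

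There is no serious obstacle: the only step that requires minor care is observing that the map between any two orthonormal bases of a fixed finite-dimensional real Hilbert subspace is an orthogonal transformation, which is immediate from the preservation of inner products; the rest is bookkeeping. Note that we never need to appeal to the infinite-dimensional version of the principal angle theorem for existence of the $\bm{U}_{i}$'s because the subspaces $\mathcal{P}_{1},\mathcal{P}_{2}$ are themselves finite-dimensional, even if $\mathcal{H}$ is not.
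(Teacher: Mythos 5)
Your proof is correct and follows essentially the same route as the paper: express the two arbitrary orthonormal bases in terms of the principal bases via orthogonal matrices $\bm{U}_1,\bm{U}_2$, obtain $\bm{W}=\bm{U}_1\bm{V}\bm{U}_2^{\Tran}$, and conclude by orthogonal similarity of $\bm{G}$ with $\bm{V}^2$. The one point to flag is that you read Proposition~\ref{prop:cos_between_subspaces} as asserting the full biorthogonality $\langle \bm{v}^1_i,\bm{v}^2_j\rangle_{\mathcal{H}}=\delta_{ij}\cos\theta_i$, whereas as stated it only gives the diagonal entries $\langle \bm{v}^1_i,\bm{v}^2_i\rangle_{\mathcal{H}}=\cos\theta_i$ together with the projection identities; the vanishing of the off-diagonal entries of $\bm{V}$ is exactly what the paper spends the second half of its proof establishing (via a sum-of-squares comparison). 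The gap is easily closed in one line from \eqref{eq:principal_vectors_projection_relationship}: for $i\neq j$, since $\bm{v}^1_j\in\mathcal{P}_1$,
\begin{equation}
\langle \bm{v}^1_j,\bm{v}^2_i\rangle_{\mathcal{H}}=\langle \bm{v}^1_j,\bm{\Pi}_{\mathcal{P}_1}\bm{v}^2_i\rangle_{\mathcal{H}}=\cos\theta_i\,\langle \bm{v}^1_j,\bm{v}^1_i\rangle_{\mathcal{H}}=0,
\end{equation}
which is in fact shorter than the paper's own argument; with that sentence added, your proof is complete.
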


\begin{proof}
Let $(\bm{v}^{i}_{i})_{i \in [N]}$, $i\in\{1,2\}$, be the bases of Proposition~\ref{prop:cos_between_subspaces}. Let $\bm{U}^{1} \in \mathbb{O}_{N}(\mathbb{R})$ be such that
\begin{equation}
\forall i \in [N], \: \bm{w}^{1}_{i} = \sum\limits_{j \in [N]} u^{1}_{i,j} \bm{v}^{1}_{j}.
\end{equation}
Similarly, there exists a matrix $\bm{U}^{2} \in \mathbb{O}_{N}(\mathbb{R})$ such that
\begin{equation}
\forall i \in [N], \: \bm{w}^{2}_{i} = \sum\limits_{j \in [N]} u^{2}_{i,j} \bm{v}^{2}_{j}.
\end{equation}
This implies that
\begin{equation}
\bm{W} = \bm{U}^{1}\bm{V}\bm{U}^{2 \Tran},
\end{equation}
where $\displaystyle \bm{V} = (\langle \bm{v}^{1}_{i}, \bm{v}^{2}_{j} \rangle_{\mathcal{H}})_{1 \leq i,j \leq N}$. Then
\begin{equation}
\bm{G} = \bm{W}\bm{W}^{\Tran} = \bm{U}^{1}\bm{V}\bm{V}^{\Tran}\bm{U}^{1 \Tran}.
\end{equation}
Thus the eigenvalues of $\bm{G}$ are the eigenvalues of $\bm{V}\bm{V}^{\Tran}$. By Proposition~\ref{prop:cos_between_subspaces}, the diagonal elements of $\bm{V}$ are
\begin{equation}\label{eq:diagonal_of_V}
v_{i,i} = \langle \bm{v}^{1}_{i}, \bm{v}^{2}_{i} \rangle_{\mathcal{H}} = \cos \theta_{i}(\mathcal{P}_{1},\mathcal{P}_{2}).
\end{equation}
We finish the proof by showing that the anti-diagonal elements satisfy
\begin{equation}\label{eq:anti_diagonal_of_V}
v_{i,j} = \langle \bm{v}^{1}_{i}, \bm{v}^{2}_{j} \rangle_{\mathcal{H}} = 0.
\end{equation}
By \eqref{eq:principal_vectors_projection_relationship},
\begin{equation}
\forall i \in [N], \:\sum\limits_{j \in [N]} \langle \bm{v}_{i}^{2}, \bm{v}_{j}^{1}\rangle_{\mathcal{H}}^{2} = \|\bm{\Pi}_{\mathcal{P}_{1}}\bm{v}_{i}^{2}\|_{\mathcal{H}}^{2} = \cos^{2} \theta_{i}(\mathcal{P}_{1},\mathcal{P}_{2}).
\end{equation}
Then
\begin{equation}
 \sum\limits_{i \in [N]}\sum\limits_{j \in [N]} \langle \bm{v}_{i}^{2}, \bm{v}_{j}^{1}\rangle_{\mathcal{H}}^{2} = \sum\limits_{i \in [N]} \cos^{2} \theta_{i}(\mathcal{P}_{1},\mathcal{P}_{2}) = \sum\limits_{i \in [N]} \langle \bm{v}_{i}^{2}, \bm{v}_{i}^{1}\rangle_{\mathcal{H}}^{2} .
\end{equation}
Thus
\begin{equation}
\sum\limits_{\substack{i,j \in [N]\\ i \neq j}} \langle \bm{v}_{i}^{2}, \bm{v}_{i}^{1}\rangle_{\mathcal{H}}^{2} = 0 .
\end{equation}
Finally, $\bm{V}$ is a diagonal matrix and the eigenvalues of $\bm{G}$ are the $\cos^{2} \theta_{i}(\mathcal{P}_{1},\mathcal{P}_{2})$.
\end{proof}

\section{Proofs of our results}
\label{s:proofs}
Section~\ref{app:K_N_non_singular} contains the proof of Proposition~\ref{prop:K_N_non_singular}. In the main paper, we use it under the form of Corollary~\ref{c:regularization} to ensure that $\bm{K}(\bm{x})$ is almost surely invertible when $\bm{x} = \{x_{1}, \dots , x_{N}\}$ is a projection DPP with reference measure $\mathrm{d}\omega$ and kernel \eqref{e:kernel}. This allows computing the quadrature weights.

The rest of Section~\ref{s:proofs} deals with Theorem~\ref{thm:main_theorem}, our upper bound on the approximation error of DPP-based kernel quadrature. The proof is rather long, but can be decomposed in four steps, which we now introduce for ease of reading.

First, we prove Lemma~\ref{lemma:approximation_error_spectral_bound}, which separates the search for an upper bound into examining the contribution of the three terms in~\eqref{e:boundingOrthogonalProjection}; this is Section~\ref{s:proofOfSeparationLemma}. The first two terms of \eqref{e:boundingOrthogonalProjection} only depend on the function $g$ in~\eqref{e:quadrature}, and we leave them be. The third term is more geometric, and relates to the approximation error of the space spanned by $(e^{\cal F}_n)_{n\in[N]}$ by the (random) subspace ${\cal T}(\bm{x})$.

Second, in Section~\ref{s:proofOfPerturbationInequality}, we bound this geometric term for a fixed DPP realization $\bm{x}$. We pay attention to obtain a bound that will later yield a tractable expectation under that DPP. This is done in Proposition~\ref{prop:kernel_perturbation_inequality}, which in turn requires two intermediate results, Lemma~\ref{lemma:interpolation_error_matricial_form} and Proposition~\ref{prop:graded_RKHS}.

Third, we take the expectation of the bound in Proposition~\ref{prop:kernel_perturbation_inequality} under the proposed DPP. This is done in Proposition~\ref{prop:expected_value_of_product_of_cos}, which is proven thanks to Proposition~\ref{prop:K_N_non_singular}, Lemmas \ref{lemma:max_error_cos}, \ref{lemma:cos_ratio_det} \& \ref{lemma:truncated_Fredholm_formula}. This is Section~\ref{s:proofOfExpectedProduct}.

Fourth, Theorem~\ref{thm:main_theorem} is obtained in Section~\ref{s:finalBound}, using the results of the previous steps, and an argument to reduce the proof to RKHSs with flat initial spectrum.

\subsection{Proof of Proposition~\ref{prop:K_N_non_singular}}\label{app:K_N_non_singular}

\begin{proof}
Recall the Mercer decomposition of $k$:
\begin{equation}\label{eq:Mercer_decomposition_1}
k(x,y) = \sum\limits_{m \in \mathbb{N}^{*}} \sigma_{m} e_{m}(x)e_{m}(y),
\end{equation}
where the convergence is point-wise on $\mathcal{X}$.
Define for $M \in \mathbb{N}^{*}, \: M \geq N$ the $M$-th truncated kernel
\begin{equation}
k_{M}(x,y) = \sum\limits_{m \in [M]} \sigma_{m} e_{m}(x)e_{m}(y).
\end{equation}
By \eqref{eq:Mercer_decomposition_1}
\begin{equation}\label{eq:truncated_kernel_limit}
\forall x,y \in \mathcal{X}, \:\lim\limits_{M \rightarrow \infty} k_{M}(x,y) = k(x,y).
\end{equation}
Let $\bm{x} = (x_{1}, \dots, x_{N}) \in \mathcal{X}^{N}$ such that $\Det \bm{E}(\bm{x}) \neq 0$, and define
\begin{equation}
\bm{K}_{M}(\bm{x}) = (k_{M}(x_{i},x_{j}))_{i,j \in [N]} .
\end{equation}
By the continuity of the function $\bm{M} \in \mathbb{R}^{N \times N} \mapsto \Det \bm{M}$ and by \eqref{eq:truncated_kernel_limit}
\begin{equation}
  \lim\limits_{M \rightarrow \infty} \Det \bm{K}_{M}(\bm{x}) = \Det \bm{K}(\bm{x}).
\end{equation}
Thus to prove that $\Det \bm{K}(\bm{x}) > 0$, it is enough to prove that the $\Det \bm{K}_{M}(\bm{x})$ is larger than a positive real number for $M$ large enough. We write
\begin{equation}
  \bm{K}_{M}(\bm{x}) = \bm{F}_{M}(\bm{x})^{\Tran}\bm{\Sigma}_{M}\bm{F}_{M}(\bm{x}),
\end{equation}
with $\bm{F}_{M}(\bm{x}) = (e_{i}(x_{j}))_{(i,j) \in [M]\times[N]}$ and $\Sigma_{M}$ is a diagonal matrix containing the first $M$ eigenvalues $(\sigma_{m})$. The Cauchy-Binet identity yields
\begin{align}
  \Det \bm{K}_{M}(\bm{x}) & = \sum\limits_{T \subset [M], |T| = N} \prod\limits_{i \in T} \sigma_{i} \Det^{2} (e_{i}(x_{j}))_{(i,j)\in T \times [N]} \\
& \geq \prod\limits_{i \in [N]} \sigma_{i} \Det^{2}\bm{E}(\bm{x})>0.
\end{align}
Therefore,
\begin{equation}
\Det \bm{K}(\bm{x}) = \lim\limits_{M \rightarrow \infty} \Det \bm{K}_{M}(\bm{x}) \geq \prod\limits_{i \in [N]} \sigma_{i} \Det^{2} \bm{E}(\bm{x}) > 0.
\end{equation}
so that $\bm{K}(\bm{x})$ is a.s. invertible.
\end{proof}

\subsection{Proof of Lemma~\ref{lemma:approximation_error_spectral_bound}}
\label{s:proofOfSeparationLemma}
\begin{proof}
First, we prove that
\begin{equation}
  \|\bm{\Sigma}^{-1/2}\mu_{g}\|_{\mathcal{F}}^{2} = \|g\|_{\mathrm{d}\omega}^{2} \leq 1.
\end{equation} 
 Recall that
\begin{equation}
\mu_{g} = \int_{\mathcal{X}} g(y)k(.,y) \mathrm{d}\omega(y),
\end{equation}
and that we assumed in Section~\ref{s:intro} that $\mathcal{F}$ is dense in $\mathbb{L}_{2}(\mathrm{d}\omega)$, so that $(e_{m})_{m \in \mathbb{N}}$ is an orthonormal basis of $\mathbb{L}_{2}(\mathrm{d}\omega)$ and the eigenvalues $\sigma_{n}$ are strictly positive. Now let $\bm{\Sigma}^{-1/2} : \mathcal{F} \rightarrow \mathbb{L}_{2}(\mathrm{d}\omega)$ and $\bm{\Sigma}^{1/2} : \mathbb{L}_{2}(\mathrm{d}\omega) \rightarrow \mathcal{F}$ be defined by
\begin{equation}
\bm{\Sigma}^{-1/2} e_{m}^{\mathcal{F}} = e_{m}, \: \forall m\in \mathbb{N}^*,
\end{equation}
\begin{equation}
\bm{\Sigma}^{1/2} e_{m} =  e_{m}^{\mathcal{F}}, \: \forall m\in \mathbb{N}^*.
\end{equation}
Observe that $\bm{\Sigma}^{-1/2} \mu_{g} = \bm{\Sigma}^{-1/2} \bm{\Sigma} g = \bm{\Sigma}^{1/2} g \in \mathcal{F}$. Now, for $m \in \mathbb{N}^{*}$,
\begin{align}
\langle e_{m}^{\mathcal{F}}, \bm{\Sigma}^{-1/2}\mu_{g} \rangle_{\mathcal{F}} & =\langle e_{m}^{\mathcal{F}}, \bm{\Sigma}^{1/2}g \rangle_{\mathcal{F}} \\
& = \langle e_{m}^{\mathcal{F}}, \bm{\Sigma}^{1/2} \sum\limits_{n \in \mathbb{N}^{*}} \langle g, e_{n} \rangle_{\mathrm{d}\omega} e_{n} \rangle_{\mathcal{F}}\\
& = \sum\limits_{n \in \mathbb{N}^{*}} \langle g, e_{n} \rangle_{\mathrm{d}\omega} \langle e_{m}^{\mathcal{F}}, \bm{\Sigma}^{1/2} e_{n} \rangle_{\mathcal{F}} \\
& = \sum\limits_{n \in \mathbb{N}^{*}} \langle g, e_{n} \rangle_{\mathrm{d}\omega} \langle e_{m}^{\mathcal{F}},  e_{n}^{\mathcal{F}} \rangle_{\mathcal{F}} \\
& = \langle g,e_{m} \rangle_{\mathrm{d}\omega} \nonumber.
\end{align}
As a consequence,
\begin{equation}\label{eq:bound_on_norm_mean_element}
  \|\bm{\Sigma}^{-1/2}\mu_{g}\|_{\mathcal{F}}^{2} = \|g\|_{\mathrm{d}\omega}^{2} \leq 1.
\end{equation} 
Now we turn to proving \eqref{e:boundingOrthogonalProjection} from the main text. Define first the operators $\bm{\Sigma}_{N}, \bm{\Sigma}_{N}^{1/2}, \bm{\Sigma}_{N}^{\perp}, \bm{\Sigma}_{N}^{\perp 1/2}: \mathbb{L}_{2}(\mathrm{d}\omega) \rightarrow \mathcal{F}$, $\bm{\Sigma}_{N}^{1/2} : \mathbb{L}_{2}(\mathrm{d}\omega) \rightarrow \mathcal{F}$ and $\bm{\Sigma}_{N}^{\perp} : \mathbb{L}_{2}(\mathrm{d}\omega) \rightarrow \mathcal{F}$ by
\begin{equation}
 \bm{\Sigma}_{N} e_{m} = \left\{
    \begin{array}{ll}
        \sigma_{m}e_{m} & \mbox{if} \: m \in [N] \\
        0 & \mbox{else}
    \end{array}
\right. ,
\end{equation}
\begin{equation}
 \bm{\Sigma}_{N}^{1/2} e_{m} = \left\{
    \begin{array}{ll}
        \sqrt{\sigma_{m}}e_{m} & \mbox{if} \: m \in [N] \\
        0 & \mbox{else}
    \end{array}
\right. ,
\end{equation}
\begin{equation}
 \bm{\Sigma}_{N}^{\perp} e_{m} = \left\{
    \begin{array}{ll}
        0      & \mbox{if} \: m \in [N] \\
        \sigma_{m}e_{m} & \mbox{if} \: m \geq N+1 \\
    \end{array}
\right. ,
\end{equation}
\begin{equation}
 \bm{\Sigma}_{N}^{\perp 1/2} e_{m} = \left\{
    \begin{array}{ll}
        0      & \mbox{if} \: m \in [N] \\
        \sqrt{\sigma_{m}}e_{m} & \mbox{if} \: m \geq N+1 \\
    \end{array}
\right. ,
\end{equation}
Note that $\bm{\Sigma}^{1/2} = \bm{\Sigma}_{N}^{1/2} + \bm{\Sigma}_{N}^{\perp 1/2}$ and
\begin{equation}\label{eq:sup_sigma_N}
  \sup\limits_{\|\mu\|_{\mathcal{F}} \leq 1} \|\bm{\Sigma}_{N}^{\perp 1/2} \mu\|_{\mathcal{F}}^{2}  = \sigma_{N+1}.
\end{equation}
Using \eqref{eq:bound_on_norm_mean_element}, there exists $\tilde{\mu}_{g} \in \mathcal{F}$ such that $\|\tilde{\mu}_{g}\|_{\mathcal{F}} \leq 1$ and $\mu_{g} = \bm{\Sigma}^{1/2}\tilde{\mu}_{g}$. Now, the approximation error writes
\begin{align}
  \|\bm{\Pi}_{\mathcal{T}(\bm{x})^{\perp}}\mu_{g}\|^{2}_{\mathcal{F}}& = \| \bm{\Pi}_{\mathcal{T}(\bm{x})^{\perp}}\bm{\Sigma}^{1/2} \tilde{\mu}_{g}\| _{\mathcal{F}}^{2}\\
  & = \| \bm{\Pi}_{\mathcal{T}(\bm{x})^{\perp}}(\bm{\Sigma}^{1/2}_{N} + \bm{\Sigma}^{\perp 1/2}_{N}) \tilde{\mu}_{g}\| _{\mathcal{F}}^{2} \nonumber\\
  & = \| \bm{\Pi}_{\mathcal{T}(\bm{x})^{\perp}}\bm{\Sigma}^{1/2}_{N} \tilde{\mu}_{g}\|_{\mathcal{F}}^{2} + \| \bm{\Pi}_{\mathcal{T}(\bm{x})^{\perp}}\bm{\Sigma}^{\perp 1/2}_{N} \tilde{\mu}_{g}\| _{\mathcal{F}}^{2} \\
  & \qquad\qquad + 2 \langle \bm{\Pi}_{\mathcal{T}(\bm{x})^{\perp}}\bm{\Sigma}^{1/2}_{N} \tilde{\mu}_{g}, \bm{\Pi}_{\mathcal{T}(\bm{x})^{\perp}}\bm{\Sigma}^{\perp 1/2}_{N} \tilde{\mu}_{g}\rangle_{\mathcal{F}} \nonumber\\
  & \leq 2 \left(\| \bm{\Pi}_{\mathcal{T}(\bm{x})^{\perp}}\bm{\Sigma}^{1/2}_{N} \tilde{\mu}_{g}\| _{\mathcal{F}}^{2} + \| \bm{\Pi}_{\mathcal{T}(\bm{x})^{\perp}}\bm{\Sigma}^{\perp 1/2}_{N} \tilde{\mu}_{g}\| _{\mathcal{F}}^{2}\right) \nonumber.
\end{align}
The operator $\bm{\Pi}_{\mathcal{T}(\bm{x})^{\perp}}$ is an orthogonal projection and $\|\tilde{\mu}_{g}\|_{\mathcal{F}} \leq 1$ so that by \eqref{eq:sup_sigma_N}
\begin{equation}
\| \bm{\Pi}_{\mathcal{T}(\bm{x})^{\perp}}\bm{\Sigma}^{\perp 1/2}_{N} \tilde{\mu}_{g}\| _{\mathcal{F}}^{2} \leq \| \bm{\Sigma}^{\perp 1/2}_{N} \tilde{\mu}_{g}\| _{\mathcal{F}}^{2} \leq \sigma_{N+1}.
\end{equation}
Now, recall that the $(e_{n}^{\mathcal{F}})_{n \in [N]}$ is orthonormal. Moreover for $n \in [N]$, $e_{n}^{\mathcal{F}}$ is an eigenfunction of $\bm{\Sigma}_{N}^{1/2}$ and the corresponding eigenvalue is $\sqrt{\sigma}_{n}$. Thus
 \begin{equation}
\bm{\Pi}_{\mathcal{T}(\bm{x})^{\perp}}\bm{\Sigma}^{1/2}_{N} \tilde{\mu}_{g} = \bm{\Pi}_{\mathcal{T}(\bm{x})^{\perp}}\sum\limits_{n \in [N]} \sqrt{\sigma_{n}} \langle \tilde{\mu}_{g}, e_{n}^{\mathcal{F}} \rangle_{\mathcal{F}} e_{n}^{\mathcal{F}} = \sum\limits_{n \in [N]} \langle \tilde{\mu}_{g}, e_{n}^{\mathcal{F}} \rangle_{\mathcal{F}} \sqrt{\sigma_{n}} \bm{\Pi}_{\mathcal{T}(\bm{x})^{\perp}}e_{n}^{\mathcal{F}}.
\end{equation}
Then
\begin{align}
    \|\bm{\Pi}_{\mathcal{T}(\bm{x})^{\perp}}\bm{\Sigma}^{1/2}_{N} \tilde{\mu}_{g}\|_{\mathcal{F}}^{2} & = \| \sum\limits_{n \in [N]} \langle \tilde{\mu}_{g}, e_{n}^{\mathcal{F}} \rangle_{\mathcal{F}} \sqrt{\sigma_{n}} \bm{\Pi}_{\mathcal{T}(\bm{x})^{\perp}}e_{n}^{\mathcal{F}} \|_{\mathcal{F}}^{2}\\
    & = \sum\limits_{n \in [N]} \sum\limits_{m \in [N]}\langle \tilde{\mu}_{g}, e_{n}^{\mathcal{F}} \rangle_{\mathcal{F}} \langle \tilde{\mu}_{g}, e_{m}^{\mathcal{F}} \rangle_{\mathcal{F}} \sqrt{\sigma_{n}}\sqrt{\sigma_{m}}  \langle  \bm{\Pi}_{\mathcal{T}(\bm{x})^{\perp}}e_{n}^{\mathcal{F}}, \bm{\Pi}_{\mathcal{T}(\bm{x})^{\perp}}e_{m}^{\mathcal{F}}\rangle_{\mathcal{F}} \nonumber\\
    & \leq \sum\limits_{n \in [N]} \sum\limits_{m \in [N]}\langle \tilde{\mu}_{g}, e_{n}^{\mathcal{F}} \rangle_{\mathcal{F}} \langle \tilde{\mu}_{g}, e_{m}^{\mathcal{F}} \rangle_{\mathcal{F}} \sqrt{\sigma_{n}}\sqrt{\sigma_{m}}   \|\bm{\Pi}_{\mathcal{T}(\bm{x})^{\perp}}e_{n}^{\mathcal{F}}\|_{\mathcal{F}}\|\bm{\Pi}_{\mathcal{T}(\bm{x})^{\perp}}e_{m}^{\mathcal{F}}\|_{\mathcal{F}} \nonumber\\
    & \leq \left( \sum\limits_{n \in [N]} \sum\limits_{m \in [N]} |\langle \tilde{\mu}_{g}, e_{n}^{\mathcal{F}} \rangle_{\mathcal{F}}|\cdot | \langle \tilde{\mu}_{g}, e_{m}^{\mathcal{F}} \rangle_{\mathcal{F}}| \right) \max\limits_{n \in [N]}\sigma_{n}  \|\bm{\Pi}_{\mathcal{T}(\bm{x})^{\perp}}e_{n}^{\mathcal{F}}\|_{\mathcal{F}}^{2} \nonumber\\
    & \leq \left( \sum\limits_{n \in [N]} |\langle \tilde{\mu}_{g}, e_{n}^{\mathcal{F}} \rangle_{\mathcal{F}}| \right)^{2} \max\limits_{n \in [N]}\sigma_{n}  \|\bm{\Pi}_{\mathcal{T}(\bm{x})^{\perp}}e_{n}^{\mathcal{F}}\|_{\mathcal{F}}^{2} .
\end{align}
Remarking that
$ \|g \|_{d\omega,1} = \sum\limits_{n \in [N]} |\langle \tilde{\mu}_{g}, e_{n}^{\mathcal{F}} \rangle_{\mathcal{F}}|
$
concludes the proof of \eqref{e:boundingOrthogonalProjection} and therefore Lemma~\ref{lemma:approximation_error_spectral_bound}.
\end{proof}

\subsection{Proof of Proposition~\ref{prop:kernel_perturbation_inequality}}
\label{s:proofOfPerturbationInequality}
Proposition~\ref{prop:kernel_perturbation_inequality} gives an upper bound to the term $\max\limits_{n \in [N]}\sigma_{n}  \|\bm{\Pi}_{\mathcal{T}(\bm{x})^{\perp}}e_{n}^{\mathcal{F}}\|_{\mathcal{F}}^{2}$ that appears in Lemma~\ref{lemma:approximation_error_spectral_bound}.
We first prove a technical result, Lemma~\ref{lemma:interpolation_error_matricial_form}, and then combine it with  Proposition~\ref{prop:graded_RKHS} to finish the proof. We conclude with the proof of Proposition~\ref{prop:graded_RKHS}.

%

\subsubsection{A preliminary lemma}

Let $\bm{x} = (x_{1}, \dots, x_{N}) \in \mathcal{X}^{N}$. Recall that $\bm{K}(\bm{x}) = (k(x_{i},x_{j}))_{1 \leq i,j \leq N}$ and denote $\tilde{\bm{K}}(\bm{x}) = (\tilde{k}(x_{i},x_{j}))_{1 \leq i,j \leq N}$, see section \ref{subsec:expectation_under_the_DPP}.
%
In the following, we define  
\begin{eqnarray}
  \Delta_n^{\cal F}(\bm{x}) & = & e_{n}^{\mathcal{F}}(\bm{x})^{\Tran}\bm{K}(\bm{x})^{-1}e_{n}^{\mathcal{F}}(\bm{x})\\
  \Delta_n^{\tilde{\cal F}}(\bm{x}) & = & e_{n}^{\tilde{\mathcal{F}}}(\bm{x})^{\Tran}\tilde{\bm{K}}(\bm{x})^{-1}e_{n}^{\tilde{\mathcal{F}}}(\bm{x})
\end{eqnarray}

Lemma~\ref{lemma:interpolation_error_matricial_form} below shows that each term of the form $\Delta_n^{\cal F}(\bm{x})$ measures the squared norm of the projection of $e_{n}^{\mathcal{F}}$ on ${\cal T}(\bm{x})$. The same holds for $\Delta_n^{\tilde{\cal F}}(\bm{x})$ and the projection of $e_{n}^{\tilde{\mathcal{F}}}$ onto $\tilde{{\cal T}}(\bm{x})$.

Indeed, $\displaystyle \|\bm{\Pi}_{\mathcal{T}(\bm{x})^{\perp}}e_{n}^{\mathcal{F}}\|^{2}_{\mathcal{F}} = 1- \|\bm{\Pi}_{\mathcal{T}(\bm{x})}e_{n}^{\mathcal{F}}\|^{2}_{\mathcal{F}}$ since $\|e_{n}^{\mathcal{F}}\|^{2}_{\mathcal{F}} = 1$.
Thus it is sufficient to prove that $\|\bm{\Pi}_{\mathcal{T}(\bm{x})}e_{n}^{\mathcal{F}}\|^{2}_{\mathcal{F}} = \Delta_n^{\cal F}(\bm{x})$. This boils down to showing that $\bm{K}(\bm{x})^{-1}$ is the matrix of the inner product $\langle \cdot,\cdot \rangle_{\cal F}$ restricted to ${\mathcal{T}(\bm{x})}$.
\begin{lemma}\label{lemma:interpolation_error_matricial_form}
For $n \in \mathbb{N}^{*}$, let $e_{n}^{\mathcal{F}}(\bm{x}), e_{n}^{\tilde{\mathcal{F}}}(\bm{x})\in \mathbb{R}^{N}$ the vectors of the evaluations of $e_{n}^{\mathcal{F}}$ and $e_{n}^{\tilde{\mathcal{F}}}$ on the elements of $\bm{x}$ respectively. Then
\begin{eqnarray}
  \|\bm{\Pi}_{\mathcal{T}(\bm{x})^{\perp}}e_{n}^{\mathcal{F}}\|^{2}_{\mathcal{F}} & = & 1- \Delta_n^{\cal F}(\bm{x}), \label{eq:basis_element_interpolation_error_1}\\
  \|\bm{\Pi}_{\tilde{\mathcal{T}}_{N}(\bm{x})^{\perp}}e_{n}^{\tilde{\mathcal{F}}}\|^{2}_{\tilde{\mathcal{F}}} & = & 1- \Delta_n^{\tilde{\cal F}} (\bm{x}). \label{eq:basis_element_interpolation_error_2}
\end{eqnarray}
\end{lemma}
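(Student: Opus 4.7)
The plan is to identify $\Delta_n^{\mathcal{F}}(\bm{x})$ as the squared $\mathcal{F}$-norm of the orthogonal projection of $e_n^{\mathcal{F}}$ onto $\mathcal{T}(\bm{x})$, and then invoke Pythagoras together with $\|e_n^{\mathcal{F}}\|_{\mathcal{F}}=1$. The analogous argument for $\tilde{\mathcal{F}}$ will be literally identical, so I would handle both identities with one proof.

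First, I would note that whenever $\Det \bm{E}(\bm{x})\neq 0$ (the relevant case, since otherwise $\bm{K}(\bm{x})^{-1}$ is not defined), Proposition~\ref{prop:K_N_non_singular} ensures that $\bm{K}(\bm{x})$ is invertible, and hence $(k(x_j,\cdot))_{j\in[N]}$ is a basis of $\mathcal{T}(\bm{x})$. Write $\bm{\Pi}_{\mathcal{T}(\bm{x})} e_n^{\mathcal{F}} = \sum_{j\in[N]} \alpha_j\, k(x_j,\cdot)$ for some coefficient vector $\alpha\in\mathbb{R}^N$. The defining property of an orthogonal projection is that for every $i\in[N]$,
\begin{equation*}
\langle \bm{\Pi}_{\mathcal{T}(\bm{x})} e_n^{\mathcal{F}} , k(x_i,\cdot)\rangle_{\mathcal{F}} = \langle e_n^{\mathcal{F}} , k(x_i,\cdot)\rangle_{\mathcal{F}} = e_n^{\mathcal{F}}(x_i),
\end{equation*}
where the last equality is the reproducing property. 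The left-hand side equals $(\bm{K}(\bm{x})\alpha)_i$, so $\alpha$ solves $\bm{K}(\bm{x})\alpha = e_n^{\mathcal{F}}(\bm{x})$, i.e.\ $\alpha = \bm{K}(\bm{x})^{-1} e_n^{\mathcal{F}}(\bm{x})$.

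Next I would compute the squared norm of the projection:
\begin{equation*}
\|\bm{\Pi}_{\mathcal{T}(\bm{x})} e_n^{\mathcal{F}}\|_{\mathcal{F}}^{2} = \alpha^{\Tran}\bm{K}(\bm{x})\alpha = e_n^{\mathcal{F}}(\bm{x})^{\Tran} \bm{K}(\bm{x})^{-1} e_n^{\mathcal{F}}(\bm{x}) = \Delta_n^{\mathcal{F}}(\bm{x}).
\end{equation*}
Since $(e_m^{\mathcal{F}})_{m\in\mathbb{N}^*}$ is an orthonormal basis of $\mathcal{F}$, we have $\|e_n^{\mathcal{F}}\|_{\mathcal{F}}^{2}=1$, and the orthogonal decomposition $e_n^{\mathcal{F}} = \bm{\Pi}_{\mathcal{T}(\bm{x})} e_n^{\mathcal{F}} + \bm{\Pi}_{\mathcal{T}(\bm{x})^{\perp}} e_n^{\mathcal{F}}$ gives
\begin{equation*}
\|\bm{\Pi}_{\mathcal{T}(\bm{x})^{\perp}} e_n^{\mathcal{F}}\|_{\mathcal{F}}^{2} = 1 - \Delta_n^{\mathcal{F}}(\bm{x}),
\end{equation*}
establishing \eqref{eq:basis_element_interpolation_error_1}. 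For \eqref{eq:basis_element_interpolation_error_2}, the same derivation applies verbatim after substituting $k\mapsto\tilde{k}$, $\mathcal{F}\mapsto\tilde{\mathcal{F}}$, $\mathcal{T}(\bm{x})\mapsto\tilde{\mathcal{T}}(\bm{x})$, and $e_n^{\mathcal{F}}\mapsto e_n^{\tilde{\mathcal{F}}}$, noting that by construction \eqref{eq:tilde_k_kernel_definition} the family $(e_n^{\tilde{\mathcal{F}}})_{n\in\mathbb{N}^*}$ is an orthonormal basis of $\tilde{\mathcal{F}}$. There is no real obstacle; this is a routine application of the reproducing property and Pythagoras's identity, and the only point requiring care is citing Proposition~\ref{prop:K_N_non_singular} to justify the invertibility of the kernel matrices.
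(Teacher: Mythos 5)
Your proof is correct and follows essentially the same route as the paper's: identify the projection coefficients as $\bm{K}(\bm{x})^{-1}e_{n}^{\mathcal{F}}(\bm{x})$, deduce $\|\bm{\Pi}_{\mathcal{T}(\bm{x})}e_{n}^{\mathcal{F}}\|_{\mathcal{F}}^{2}=\Delta_n^{\mathcal{F}}(\bm{x})$, and conclude by Pythagoras using $\|e_{n}^{\mathcal{F}}\|_{\mathcal{F}}=1$. The only cosmetic difference is that you obtain the Gram identities directly from the reproducing property and the normal equations, whereas the paper verifies the same norm computation by expanding $k(x_i,\cdot)$ in the Mercer series; your explicit justification of the coefficient vector is if anything slightly more complete.
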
\label{lemma:interpolation_error_kernel_formula}
We give the proof of \eqref{eq:basis_element_interpolation_error_1}; the proof of \eqref{eq:basis_element_interpolation_error_2} follows the same lines.
\begin{proof}
Let us write
\begin{equation}
\bm{\Pi}_{\mathcal{T}(\bm{x})}e_{n}^{\mathcal{F}} = \sum\limits_{i \in [N]} c_{i}k(x_{i},.),
\end{equation}
where the $c_{i}$ are the elements of the vector $\bm{c} = \bm{K}(\bm{x})^{-1}e_{n}^{\mathcal{F}}(\bm{x})$. Then
\begin{align}
\bm{\Pi}_{\mathcal{T}(\bm{x})}e_{n}^{\mathcal{F}} & =  \sum\limits_{i \in [N]}c_{i}\sum\limits_{m \in \mathbb{N}^{*}} \sigma_{m}e_{m}(x_{i})e_{m}(.)\\
& = \sum\limits_{m \in \mathbb{N}^{*}} \sqrt{\sigma_{m}}\left(\sum\limits_{i \in [N]}c_{i} e_{m}(x_{i}) \right) e_{m}^{\mathcal{F}}(.) \nonumber.
\end{align}
Since $(e_m^{\cal F})_{m\in \mathbb{N}^{*}}$ is orthonormal,
\begin{align}\label{eq:interpolation_error_and_vector_c}
\|\bm{\Pi}_{\mathcal{T}(\bm{x})}e_{n}^{\mathcal{F}}\|^{2}_{\mathcal{F}} & = \sum\limits_{m \in \mathbb{N}^{*}} \sigma_{m} \left(\sum\limits_{i \in [N]}c_{i} e_{m}(x_{i}) \right)^{2}\\
& = \sum\limits_{m \in \mathbb{N}^{*}} \sigma_{m} \sum\limits_{i \in [N]} \sum\limits_{j \in [N]}c_{i}c_{j} e_{m}(x_{i})e_{m}(x_{j}) \nonumber\\
& = \sum\limits_{m \in \mathbb{N}^{*}} \bm{c}^{\Tran} e_{m}^{\mathcal{F}}(\bm{x})e_{m}^{\mathcal{F}}(\bm{x})^{\Tran} \bm{c} \nonumber\\
& = \bm{c}^{\Tran} \sum\limits_{m \in \mathbb{N}^{*}}  e_{m}^{\mathcal{F}}(\bm{x})e_{m}^{\mathcal{F}}(\bm{x})^{\Tran} \bm{c} \nonumber.
\end{align}
Using Mercer's theorem, see \eqref{eq:truncated_kernel_limit},
\begin{equation}\label{eq:matricial_mercer_decomposition}
  \bm{K}(\bm{x}) = \sum\limits_{m \in \mathbb{N}^{*}}  e_{m}^{\mathcal{F}}(\bm{x})e_{m}^{\mathcal{F}}(\bm{x})^{\Tran}.
\end{equation}
Combining \eqref{eq:interpolation_error_and_vector_c} and \eqref{eq:matricial_mercer_decomposition} along with the definition of the vector $\bm{c} = \bm{K}(\bm{x})^{-1}e_{n}^{\mathcal{F}}(\bm{x})$ yields
\begin{align}
\|\bm{\Pi}_{\mathcal{T}(\bm{x})}e_{n}^{\mathcal{F}}\|^{2}_{\mathcal{F}} & = \bm{c}^{\Tran} \bm{K}(\bm{x}) \bm{c}\\
& = e_{n}^{\mathcal{F}}(\bm{x})^{\Tran} \bm{K}(\bm{x})^{-1} \bm{K}(\bm{x}) \bm{K}(\bm{x})^{-1}e_{n}^{\mathcal{F}}(\bm{x}) \nonumber\\
& = e_{n}^{\mathcal{F}}(\bm{x})^{\Tran} \bm{K}(\bm{x})^{-1}e_{n}^{\mathcal{F}}(\bm{x}) \nonumber\\
& = \Delta_n^{\cal F}(\bm{x}) \nonumber.
\end{align}
\end{proof}

%

%
\subsubsection{End of the proof of Proposition~~\ref{prop:kernel_perturbation_inequality}}
\begin{proof}
By Lemma~\ref{lemma:interpolation_error_matricial_form}, the inequality~\eqref{eq:kernel_perturbation_inequality} in Proposition~\ref{prop:kernel_perturbation_inequality} is equivalent to
\begin{equation}\label{eq:kernel_perturbation_inequality_reformulated}
\forall n \in [N], \: \sigma_{n} \left(1- \Delta_n^{\cal F}(\bm{x}) \right) \leq \sigma_{1} \left(1- \Delta_n^{\tilde{\cal F}}(\bm{x})\right) .
\end{equation}
As an intermediate remark, note that in the special case $n =1$, by construction
\begin{equation}
  \bm{K}(\bm{x}) \prec \tilde{\bm{K}}(\bm{x}),
\end{equation}
where $\prec$ is the Loewner order, the partial order defined by the convex cone of positive semi-definite matrices. Thus
\begin{equation}\label{eq:inequality_between_K_K_tilde}
  \tilde{\bm{K}}(\bm{x})^{-1} \prec \bm{K}(\bm{x})^{-1}.
\end{equation}
Noting that $\tilde{\sigma}_{1} = \sigma_{1}$ and that
\begin{equation}\label{eq:trivial_equality_between_eigenfunctions}
  e_{1}^{\mathcal{F}} = \sqrt{\sigma_{1}}e_{1} = \sqrt{\tilde{\sigma}_{1}}e_{1} = e_{1}^{\tilde{\mathcal{F}}}.
\end{equation}
yields \eqref{eq:kernel_perturbation_inequality_reformulated} for $n=1$:
\begin{equation}
  1-e_{1}^{\mathcal{F}}(\bm{x})^{\Tran}\bm{K}(\bm{x})^{-1}e_{1}^{\mathcal{F}}(\bm{x})  \leq  1- e_{1}^{\tilde{\mathcal{F}}}(\bm{x})^{\Tran}\tilde{\bm{K}}(\bm{x})^{-1}e_{1}^{\tilde{\mathcal{F}}}(\bm{x}).
\end{equation}
For $n \neq 1$, the proof is much more subtle. Indeed, a naive application of the inequality \eqref{eq:inequality_between_K_K_tilde} would lead to the following inequality
\begin{equation}
  1-e_{n}^{\tilde{\mathcal{F}}}(\bm{x})^{\Tran}\bm{K}(\bm{x})^{-1}e_{n}^{\tilde{\mathcal{F}}}(\bm{x})  \leq  1- e_{n}^{\tilde{\mathcal{F}}}(\bm{x})^{\Tran}\tilde{\bm{K}}(\bm{x})^{-1}e_{n}^{\tilde{\mathcal{F}}}(\bm{x}).
\end{equation}
Since $\forall n\in\mathbb{N}$, $e_{n}^{\tilde{\mathcal{F}}} = \sqrt{\sigma_1/\sigma_n} e_{n}^{\mathcal{F}}$, we get
\begin{equation}
  1-\sigma_1 e_{n}^{\mathcal{F}}(\bm{x})^{\Tran}\bm{K}(\bm{x})^{-1}e_{n}^{\mathcal{F}}(\bm{x})  \leq  1- \sigma_n e_{n}^{\tilde{\mathcal{F}}}(\bm{x})^{\Tran}\tilde{\bm{K}}(\bm{x})^{-1}e_{n}^{\tilde{\mathcal{F}}}(\bm{x}),
\end{equation}
and hence the unsatisfactory inequality
\begin{equation}
  1- \sigma_1 \Delta_n^{\cal F}(\bm{x})  \leq  1- \sigma_n \Delta_n^{\tilde{\cal F}}(\bm{x})
\end{equation}

We can prove a better inequality by applying a sequence of rank-one updates to the kernel $k$ to build $N$ intermediate kernels $k^{(\ell)}$ that lead to $N$ inequalities sharp enough to prove \eqref{eq:kernel_perturbation_inequality_reformulated} for $n \neq 1$.
Then inequality~(\ref{eq:kernel_perturbation_inequality_reformulated}) will result as a corollary of Proposition~\ref{prop:graded_RKHS} below. To this aim, we define $N$ RKHS $\tilde{\mathcal{F}}_{\ell}$, $1\leq \ell\leq N$, that interpolate between $\mathcal{F}$ and $\tilde{\mathcal{F}}$. For $\ell \in [N]$, define the kernel $\tilde{k}^{(\ell)}$ by 
\begin{equation}
  \tilde{k}^{(\ell)}(x,y) = \sum\limits_{m \in [\ell]} \sigma_{1}e_{m}(x)e_{m}(y) + \sum\limits_{m \geq \ell+1 }\sigma_{m}e_{m}(x)e_{m}(y),
\end{equation}
and let $\tilde{\mathcal{F}}_{\ell}$ the RKHS corresponding to the kernel $\tilde{k}^{(\ell)}$. For $\bm{x} \in \mathcal{X}^{N}$, define $\tilde{\bm{K}}^{(\ell)}(\bm{x}) = (\tilde{k}^{(\ell)}(x_{i},x_{j}))_{1 \leq i,j \leq N}$. Similar to previous notations, we define as well
\begin{equation}
  \Delta_n^{\tilde{\cal F}_\ell}(\bm{x}) = e_{n}^{\tilde{\mathcal{F}}_{\ell}}(\bm{x})^{\Tran}\tilde{\bm{K}}^{(\ell)}(\bm{x})^{-1}e_{n}  ^{\tilde{\mathcal{F}}_{\ell}}(\bm{x}).
\end{equation}
Now we have the following useful proposition.
%
\begin{proposition}\label{prop:graded_RKHS}
For $n \in [N]\smallsetminus\{1\}$, we have
\begin{equation}\label{eq:graded_RKHS_inequality_1}
\sigma_{n} \left(1-\Delta_n^{\tilde{\cal F}_{n-1}}(\bm{x}) \right) \leq \sigma_{1} \left(1- \Delta_n^{\tilde{\cal F}_{n}}(\bm{x}) \right),
\end{equation}
and
\begin{equation}\label{eq:graded_RKHS_inequality_2}
  \forall \: \ell \in [N]\smallsetminus\{1,n\}, \:\: 1-\Delta_n^{\tilde{\cal F}_{\ell-1}}(\bm{x}) \leq  1- \Delta_n^{\tilde{\cal F}_{\ell}}(\bm{x}) .
\end{equation}
\end{proposition}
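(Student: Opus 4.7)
The plan is to recognize each $\Delta_n^{\tilde{\mathcal{F}}_\ell}(\bm{x})$ as a leverage score of a suitable matrix, so that the transition from $\tilde{k}^{(\ell-1)}$ to $\tilde{k}^{(\ell)}$ becomes a rank-one column-rescaling update to which Lemma~\ref{lemma:rank1_leverage_score} applies directly.

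More precisely, for any truncation level $M \geq N$ I will introduce the matrix $\bm{A}_M^{(\ell)} \in \mathbb{R}^{N \times M}$ whose $m$-th column is $\sqrt{\tilde{\sigma}_m^{(\ell)}}\,e_m(\bm{x})$, so that $\bm{A}_M^{(\ell)} (\bm{A}_M^{(\ell)})^{\Tran}$ is the $M$-truncation of $\tilde{\bm{K}}^{(\ell)}(\bm{x})$. Arguing as in the proof of Proposition~\ref{prop:K_N_non_singular}, these truncated matrices are invertible for $M$ large enough and converge entry-wise, along with their inverses, to $\tilde{\bm{K}}^{(\ell)}(\bm{x})$ and $\tilde{\bm{K}}^{(\ell)}(\bm{x})^{-1}$. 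Since $e_n^{\tilde{\mathcal{F}}_\ell} = \sqrt{\tilde{\sigma}_n^{(\ell)}}\,e_n$, the $n$-th leverage score $\tau_n(\bm{A}_M^{(\ell)})$ of~\eqref{eq:def_matrix_leverage_score} will converge to $\Delta_n^{\tilde{\mathcal{F}}_\ell}(\bm{x})$ as $M \to \infty$.

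Going from $\ell - 1$ to $\ell$ only rescales the $\ell$-th column of $\bm{A}_M^{(\ell-1)}$ by $\sqrt{\sigma_1/\sigma_\ell}$, matching Lemma~\ref{lemma:rank1_leverage_score} with $\rho = \sigma_1/\sigma_\ell - 1 \geq 0$. Passing the lemma's bounds to the limit, its cross-column part yields $\Delta_n^{\tilde{\mathcal{F}}_\ell}(\bm{x}) \leq \Delta_n^{\tilde{\mathcal{F}}_{\ell-1}}(\bm{x})$ whenever $n \neq \ell$, which is precisely~\eqref{eq:graded_RKHS_inequality_2}. For~\eqref{eq:graded_RKHS_inequality_1}, I apply the lemma's diagonal formula at $\ell = n$ with $a := \Delta_n^{\tilde{\mathcal{F}}_{n-1}}(\bm{x}) \in [0,1]$, where the bound $a \leq 1$ comes from~\eqref{eq:bounds_on_lv_scores}, to get $1 - \Delta_n^{\tilde{\mathcal{F}}_n}(\bm{x}) = \sigma_n(1-a)/(\sigma_n + a(\sigma_1 - \sigma_n))$. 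The desired inequality $\sigma_n(1-a) \leq \sigma_1(1 - \Delta_n^{\tilde{\mathcal{F}}_n}(\bm{x}))$ then rearranges into $(\sigma_1 - \sigma_n)(1-a) \geq 0$, which is immediate from $\sigma_1 \geq \sigma_n$ and $a \leq 1$.

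The only delicate point I foresee is the passage from finite truncations to the full Mercer sum, since Lemma~\ref{lemma:rank1_leverage_score} is stated in finite dimension. This is handled exactly as in the proof of Proposition~\ref{prop:K_N_non_singular}, by invoking continuity of matrix inversion on the open set of invertible matrices; the algebra above is otherwise routine, and the bound $a \in [0,1]$ provided by~\eqref{eq:bounds_on_lv_scores} is what makes the monotonicity argument in~\eqref{eq:graded_RKHS_inequality_1} sharp enough.
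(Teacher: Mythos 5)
Your proposal is correct and follows essentially the same route as the paper's proof: identify $\Delta_n^{\tilde{\mathcal{F}}_\ell}(\bm{x})$ with the $n$-th leverage score of the truncated feature matrix whose $m$-th column is $\sqrt{\tilde{\sigma}_m^{(\ell)}}\,e_m(\bm{x})$, treat the passage from $\tilde{k}^{(\ell-1)}$ to $\tilde{k}^{(\ell)}$ as a single column rescaling handled by Lemma~\ref{lemma:rank1_leverage_score} with $\rho=\sigma_1/\sigma_\ell-1$, and pass to the limit $M\to\infty$ by continuity of matrix inversion as in Proposition~\ref{prop:K_N_non_singular}. Your algebra for \eqref{eq:graded_RKHS_inequality_1}, reducing it to $(\sigma_1-\sigma_n)(1-a)\geq 0$ with $a\in[0,1]$ from \eqref{eq:bounds_on_lv_scores}, is just a rearranged version of the paper's computation, and the cross-leverage bound gives \eqref{eq:graded_RKHS_inequality_2} exactly as in the paper.
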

%
%
For ease of reading, we first show that inequality~\eqref{eq:kernel_perturbation_inequality_reformulated} and therefore Proposition~\ref{prop:kernel_perturbation_inequality} is easily deduced from this Proposition~\ref{prop:graded_RKHS} and then give its proof.

Let $n \in [N]$ such that $n \neq 1$.
We first remark that $\mathcal{F}=\tilde{\mathcal{F}}_{1} $ and use $(n-2)$ times inequality \eqref{eq:graded_RKHS_inequality_2} of Proposition~\ref{prop:graded_RKHS}:
\begin{align}
  \sigma_{n} \left(1- \Delta_n^{\cal F}(\bm{x}) \right) & = \sigma_{n}  \left(1- \Delta_n^{\tilde{\cal F}_{1}}(\bm{x}) \right) \\
  & \leq  \sigma_{n} \left(1-\Delta_n^{\tilde{\cal F}_{n-1}}(\bm{x}) \right) \nonumber
\end{align}
Then we use \eqref{eq:graded_RKHS_inequality_1} that is connected to the rank-one update from the kernel $k^{(n-1)}$ to $k^{(n)}$ so that
\begin{equation}
  \sigma_{n} \left(1-\Delta_n^{\tilde{\cal F}_{n-1}}(\bm{x}) \right)
  \leq
  \sigma_{1} \left(1-\Delta_n^{\tilde{\cal F}_{n}}(\bm{x}) \right)
\end{equation}
Then we apply \eqref{eq:graded_RKHS_inequality_2} to the r.h.s. again $N-n-1$ times to finally get:
\begin{align}
\sigma_{n} \left(1-\Delta_n^{\cal F}(\bm{x}) \right)
& \leq \sigma_{1} \left(1-\Delta_n^{\tilde{\cal F}_{N}}(\bm{x}) \right) \\
& \leq \sigma_{1} \left(1-\Delta_n^{\tilde{\cal F}}(\bm{x}) \right) \nonumber,
\end{align}
since $\tilde{k}^{(N)}=\tilde{k}$ and $\tilde{\mathcal{F}}_{N} = \tilde{\mathcal{F}}$. This concludes the proof of the desired inequality \eqref{eq:kernel_perturbation_inequality_reformulated} and therefore of Proposition~\ref{prop:kernel_perturbation_inequality}.
\end{proof}

\subsubsection{Proof of Proposition~\ref{prop:graded_RKHS}}
%
\begin{proof}(Proposition~\ref{prop:graded_RKHS})
Let $n \in [N]\smallsetminus \{1\}$, and $M \in \mathbb{N}$ such that $M \geq N$. Let $\bm{A}_{\ell} \in \mathbb{R}^{N \times M}$ defined by
\begin{equation}
\forall (i,m) \in [N]\times[M], \: (\bm{A}_{\ell})_{i,m} = e_{m}^{\tilde{\mathcal{F}}_{\ell}}(x_{i}). \footnote{The matrix $\bm{A}_{\ell}$ depends on $\bm{x}$.}
\end{equation}
For $\ell \in [N]$ define
\begin{equation}
\tilde{\bm{K}}_{M}^{(\ell)}(\bm{x}) = \bm{A}_{\ell}^{\Tran}\bm{A}_{\ell}.
\end{equation}
Let  $\bm{W}_{\ell} \in \mathbb{R}^{M \times M}$ the diagonal matrix defined by
\begin{equation}
\bm{W}_{\ell} = \mathrm{diag}(\underbrace{1,...,1}_{\ell-1},\sqrt{\frac{\sigma_{1}}{\sigma_{\ell}}}, 1...,1)
\end{equation}
Then one has the simple relation
\begin{equation}\label{eq:perturbation_of_A}
\bm{A}_{\ell+1} = \bm{A}_{\ell}\bm{W}_{\ell},
\end{equation}
which prepares the use of Lemma~\ref{lemma:rank1_leverage_score} in Section~\ref{subsec:lv_score_updates}.
By definition of the $n$-th leverage score of the matrix $\bm{A}$, see \eqref{eq:def_matrix_leverage_score} in Section~\ref{subsec:lv_score_updates},
\begin{equation}
e_{n}^{\tilde{\mathcal{F}}_{\ell}}(\bm{x})^{\Tran} \tilde{\bm{K}}_{M}^{(\ell)}(\bm{x})^{-1}e_{n}^{\tilde{\mathcal{F}}_{\ell}}(\bm{x})  = e_{n}^{\tilde{\mathcal{F}}_{\ell}}(\bm{x})^{\Tran} \left(\bm{A}_{\ell}^{\Tran}\bm{A}_{\ell} \right)^{-1}e_{n}^{\tilde{\mathcal{F}}_{\ell}}(\bm{x}) = \tau_{n} \left(\bm{A}_{\ell} \right).
\end{equation}
Define similarly $\Delta_{n,M}^{\tilde{\cal F}_{\ell}}(\bm{x}) = e_{n}^{\tilde{\mathcal{F}}_{\ell}}(\bm{x})^{\Tran} \tilde{\bm{K}}_{M}^{(\ell)}(\bm{x})^{-1}e_{n}^{\tilde{\mathcal{F}}_{\ell}}(\bm{x})$.
Thanks to \eqref{eq:lv_score_update} of Lemma~\ref{lemma:rank1_leverage_score} and \eqref{eq:perturbation_of_A} and for $\ell = n$
\begin{equation}
\tau_{n} \Big(\bm{A}_{n}\Big) = \tau_{n} \left( \bm{A}_{n-1}\bm{W}_{n} \right) = \frac{(1+\rho_{n})\tau_{n} \Big(\bm{A}_{n-1} \Big)}{1+\rho_{n} \tau_{n} \Big( \bm{A}_{n-1} \Big)},
\end{equation}
where $\displaystyle \rho_{n} = \frac{\sigma_{1}}{\sigma_{n}} - 1$.
Thus
\begin{equation}
  1-\tau_{n} \Big(\bm{A}_{n} \Big)  =1- \frac{(1+\rho_{n})\tau_{n} \Big(\bm{A}_{n-1} \Big)}{1+\rho_{n} \tau_{n} \Big( \bm{A}_{n-1} \Big)} = \frac{1-\tau_{n}  \Big( \bm{A}_{n-1} \Big)}{1+\rho_{n} \tau_{n} \Big( \bm{A}_{n-1} \Big) }.
\end{equation}
Then
\begin{align}
    \sigma_{1} \left(1-\tau_{n} \Big(\bm{A}_{n} \Big)\right) & = \sigma_{1} \frac{1-\tau_{n} \Big( \bm{A}_{n-1} \Big)}{1+\rho_{n} \tau_{n} \Big( \bm{A}_{n-1} \Big)} \\
    & = \sigma_{n}(1+ \rho_{n})\frac{1-\tau_{n} \Big( \bm{A}_{n-1} \Big)}{1+\rho_{n} \tau_{n} \Big( \bm{A}_{n-1} \Big)} \nonumber\\
    & = \frac{1+ \rho_{n}}{1+\rho_{n} \tau_{n} \Big( \bm{A}_{n-1} \Big)} \sigma_{n}\left(1-\tau_{n} \Big( \bm{A}_{n-1} \Big)\right) \nonumber\\
    & \geq \sigma_{n}\left(1-\tau_{n} \Big( \bm{A}_{n-1} \Big)\right) \nonumber,
\end{align}
since $\rho_{n} \geq 0$ and $\tau_{n} \Big( \bm{A}_{n-1} \Big) \in [0,1]$ thanks to \eqref{eq:bounds_on_lv_scores}.
This proves that for $M \in \mathbb{N}^{*}$ such that $M \geq N$,
\begin{equation}\label{eq:graded_RKHS_truncated_inequality_1}
\sigma_{n} \left(1-\Delta_{n,M}^{\tilde{\cal F}_{n-1}}(\bm{x}) \right) \leq \sigma_{1} \left(1- \Delta_{n,M}^{\tilde{\cal F}_{n}}(\bm{x}) \right).
\end{equation}
Now,
\begin{eqnarray}
  \lim\limits_{M \rightarrow \infty} \tilde{\bm{K}}_{M}^{(n+1)}(\bm{x}) & = & \tilde{\bm{K}}^{(n+1)}(\bm{x}),\\
  \lim\limits_{M \rightarrow \infty} \tilde{\bm{K}}_{M}^{(n)}(\bm{x}) & =  & \tilde{\bm{K}}^{(n)}(\bm{x}).
\end{eqnarray}
Moreover the application $\bm{X} \mapsto \bm{X}^{-1}$ is continuous in $GL_{N}(\mathbb{R})$.
This proves the inequality~\eqref{eq:graded_RKHS_inequality_1} of Proposition~\ref{prop:graded_RKHS}.
To  prove the inequality~\eqref{eq:graded_RKHS_inequality_2}, we start by using~\eqref{eq:cross_lv_score_update}:
\begin{equation}
  \forall \ell \in [N]\smallsetminus\{1,n\}, \:\: \tau_{n} \Big(\bm{A}( \ell) \Big) = \tau_{n} \left(\bm{A}_{\ell-1} \bm{W}_{\ell} \right) \leq \tau_{n} \Big(\bm{A}_{\ell-1} \Big).
\end{equation}
which implies that
\begin{equation}
  \forall \ell \in [N]\smallsetminus\{1,n\}, 1- \tau_{n} \Big(\bm{A}_{\ell-1} \Big) \leq 1-\tau_{n} \Big(\bm{A}_{\ell} \Big).
\end{equation}
Then for $M \geq N$,
\begin{equation}\label{eq:graded_RKHS_truncated_inequality_2}
  \forall \: \ell \in [N]\smallsetminus\{1,n\}, \:\: 1- \Delta_{n,M}^{\tilde{\cal F}_{\ell-1}}(\bm{x}) \leq  1- \Delta_{n,M}^{\tilde{\cal F}_{\ell}}(\bm{x}) .
\end{equation}
As above, we conclude the proof by considering the limit $M\to\infty$
\begin{equation}
  \forall \: \ell \in [N]\smallsetminus\{1,n\}, \:\:\lim\limits_{M \rightarrow \infty} \tilde{\bm{K}}_{M}^{(\ell)}(\bm{x}) =  \tilde{\bm{K}}^{(\ell)}(\bm{x}).
\end{equation}
This proves inequality~\eqref{eq:graded_RKHS_inequality_2} and concludes the proof of Proposition~\ref{prop:graded_RKHS}.
\end{proof}



\subsection{Proof of Proposition~\ref{prop:expected_value_of_product_of_cos}}
\label{s:proofOfExpectedProduct}
%

In this section, $\bm{x}  = (x_{1}, \dots , x_{N}) \in \mathcal{X}^{N}$ is the realization of the DPP of Theorem~\ref{thm:main_theorem}. Let $\bm{E}^{\mathcal{F}}(\bm{x}) = (e_{i}^{\mathcal{F}}(x_{j}))_{1 \leq i,j \leq N}$ and $\bm{E}(\bm{x}) = (e_{i}(x_{j}))_{1 \leq i,j \leq N}$, and $\bm{K}(\bm{x})= (k(x_{i},x_{j}))_{1 \leq i,j \leq N} $.
Moreover, let $\mathcal{E}^{\mathcal{F}}_{N} = \Span(e_{m}^{\mathcal{F}})_{m \in [N]}$ and $\mathcal{T}(\bm{x}) =  \Span \left( k(x_{i},.) \right)_{i \in [N]}$.

We first prove two lemmas that are necessary to prove Proposition~\ref{prop:expected_value_of_product_of_cos}.

\subsubsection{Two preliminary lemmas}

\begin{lemma}\label{lemma:cos_ratio_det}
Let $\bm{x} =  (x_{1}, \dots , x_{N}) \in \mathcal{X}^{N} $ such that $\Det^{2} \bm{E}(\bm{x}) \neq 0$. Then,
\begin{equation}
\prod\limits_{\ell \in [N]} \frac{1}{\cos^{2} \theta_{\ell} \left(\mathcal{E}^{\mathcal{F}}_{N}, \mathcal{T}(\bm{x}) \right)} = \frac{\Det \bm{K}(\bm{x})}{\Det^{2} \bm{E}^{\mathcal{F}}(\bm{x})}.
\end{equation}
\end{lemma}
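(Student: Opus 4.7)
The plan is to apply Corollary~\ref{cor:cos_det_relationship}, which relates the product $\prod_\ell \cos^2\theta_\ell$ to the squared determinant of the Gramian between two orthonormal bases. So I need to build orthonormal bases for the two subspaces $\mathcal{E}^{\mathcal{F}}_N$ and $\mathcal{T}(\bm{x})$ of $\mathcal{F}$, and then compute the determinant of the cross-inner-product matrix.

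First, $(e_n^{\mathcal{F}})_{n \in [N]}$ is already orthonormal in $\mathcal{F}$, so it is a ready-made ONB of $\mathcal{E}^{\mathcal{F}}_N$. For $\mathcal{T}(\bm{x}) = \Span(k(x_j,\cdot))_{j \in [N]}$, the Gram matrix of the natural generating family is exactly $\bm{K}(\bm{x})$, since $\langle k(x_i,\cdot), k(x_j,\cdot)\rangle_{\mathcal{F}} = k(x_i,x_j)$ by the reproducing property. Under the hypothesis $\Det^2 \bm{E}(\bm{x}) \neq 0$, Proposition~\ref{prop:K_N_non_singular} ensures $\bm{K}(\bm{x})$ is invertible, and so $\bm{K}(\bm{x})^{-1/2}$ is well-defined; then
\begin{equation*}
w_i := \sum_{j \in [N]} \bigl(\bm{K}(\bm{x})^{-1/2}\bigr)_{ij}\, k(x_j,\cdot), \qquad i \in [N],
\end{equation*}
is an ONB of $\mathcal{T}(\bm{x})$.

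Next, I would compute the cross-Gramian $\bm{W} = \bigl(\langle e_n^{\mathcal{F}}, w_i\rangle_{\mathcal{F}}\bigr)_{n,i \in [N]}$. Using the reproducing property $\langle e_n^{\mathcal{F}}, k(x_j,\cdot)\rangle_{\mathcal{F}} = e_n^{\mathcal{F}}(x_j)$ and the symmetry of $\bm{K}(\bm{x})^{-1/2}$, one immediately gets
\begin{equation*}
\bm{W} = \bm{E}^{\mathcal{F}}(\bm{x})\, \bm{K}(\bm{x})^{-1/2}.
\end{equation*}
Taking determinants yields $\Det^2 \bm{W} = \Det^2 \bm{E}^{\mathcal{F}}(\bm{x}) / \Det \bm{K}(\bm{x})$.

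Finally, Corollary~\ref{cor:cos_det_relationship} applied to the ONBs $(e_n^{\mathcal{F}})_{n\in [N]}$ and $(w_i)_{i\in[N]}$ gives $\Det^2 \bm{W} = \prod_{\ell \in [N]} \cos^2 \theta_\ell(\mathcal{E}^{\mathcal{F}}_N, \mathcal{T}(\bm{x}))$, and inverting the resulting equality produces the claim. There is no serious obstacle: the only point requiring care is justifying that $\bm{K}(\bm{x})$ is invertible (so $\bm{K}(\bm{x})^{-1/2}$ exists and $\mathcal{T}(\bm{x})$ is $N$-dimensional), which follows from Proposition~\ref{prop:K_N_non_singular} under the stated non-degeneracy assumption.
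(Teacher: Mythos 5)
Your proof is correct and follows essentially the same route as the paper: both invoke Corollary~\ref{cor:cos_det_relationship} with the orthonormal basis $(e_n^{\mathcal{F}})_{n\in[N]}$ of $\mathcal{E}^{\mathcal{F}}_N$ and an orthonormal basis of $\mathcal{T}(\bm{x})$ expressed through a coefficient matrix acting on the $k(x_j,\cdot)$, then compute the cross-Gramian via the reproducing property. The only cosmetic difference is that you fix the concrete choice $\bm{C}=\bm{K}(\bm{x})^{-1/2}$, whereas the paper works with an arbitrary orthonormal basis and deduces $\Det^2\bm{C}=1/\Det\bm{K}(\bm{x})$ from $\bm{C}^{\Tran}\bm{K}(\bm{x})\bm{C}=\mathbb{I}_N$.
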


\begin{proof}
The condition $\Det^{2} \bm{E}(\bm{x}) \neq 0$ yields by Proposition~\ref{prop:K_N_non_singular} that $\bm{K}(\bm{x})$ is non singular. Thus $\dim \mathcal{T}(\bm{x}) = N$. Let $(t_{i})_{i \in [N]}$ an orthonormal basis of $\mathcal{T}(\bm{x})$ with respect to $\langle ., . \rangle_{\mathcal{F}}$.
Using Corollary~\ref{cor:cos_det_relationship}, and the fact that $(e_{n}^{\mathcal{F}})_{n \in [N]}$ is an orthonormal basis of $\mathcal{E}^{\mathcal{F}}_{N}$ according to $\langle ., . \rangle_{\mathcal{F}}$,
\begin{equation}\label{eq:prod_cos_det_E}
\prod\limits_{\ell \in [N]} \cos^{2} \theta_{\ell} \left(\mathcal{E}^{\mathcal{F}}_{N}, \mathcal{T}(\bm{x}) \right) = \Det^{2} (\langle e_{n}^{\mathcal{F}}, t_{i} \rangle_{\mathcal{F}})_{(n,i) \in [N]\times[N]}.
\end{equation}
Now, write for $i \in [N]$,
\begin{equation}\label{eq:t_as_function_of_kx}
t_{i} = \sum\limits_{j \in [N]} c_{i,j} k(x_{j},.).
\end{equation}
Thus
\begin{align}
\langle e_{n}^{\mathcal{F}}, t_{i} \rangle_{\mathcal{F}} = & \sum\limits_{j \in [N]} c_{i,j} \langle e_{n}^{\mathcal{F}}, k(x_{j},.) \rangle_{\mathcal{F}} \\
= &\sum\limits_{j \in [N]} c_{i,j}  e_{n}^{\mathcal{F}}(x_{j}).
\end{align}
Then
\begin{equation}
(\langle e_{n}^{\mathcal{F}}, t_{i} \rangle_{\mathcal{F}})_{(n,i) \in [N]\times[N]} = \bm{E}^{\mathcal{F}}(\bm{x}) \bm{C}(\bm{x})^{\Tran} ,
\end{equation}
where
\begin{equation}
\bm{C}(\bm{x}) = (c_{i,j})_{1 \leq i,j \leq N}.
\end{equation}
Thus
\begin{equation}\label{eq:AN_times_EN}
\Det^{2} (\langle e_{n}^{\mathcal{F}}, t_{i} \rangle_{\mathcal{F}})_{(n,i) \in [N]\times[N]} = \Det^{2} \bm{C}(\bm{x}) \Det^{2} \bm{E}^{\mathcal{F}}(\bm{x}).
\end{equation}
Now, let $\bm{c}_{i}$ the columns of the matrix $\bm{C}(\bm{x})$. $(t_{i})_{i \in [N]}$ is an orthonormal basis of $\mathcal{T}(\bm{x})$ with respect to $\langle .,. \rangle_{\mathcal{F}}$, then by \eqref{eq:t_as_function_of_kx}
\begin{equation}
  \delta_{i,i'} = \langle t_{i}, t_{i'} \rangle_{\mathcal{F}} = \bm{c}_{i}^{\Tran} \bm{K}(\bm{x}) \bm{c}_{i'}  .
\end{equation}
Therefore
\begin{equation}
\bm{C}(\bm{x})^{\Tran} \bm{K}(\bm{x}) \bm{C}(\bm{x}) = \mathbb{I}_{N}.
\end{equation}
Thus
\begin{equation}\label{eq:det_AN_det_KN_relationship}
\Det^{2} \bm{C}(\bm{x}) = \frac{1}{\Det \bm{K}(\bm{x})}.
\end{equation}
Combining \eqref{eq:prod_cos_det_E}, \eqref{eq:AN_times_EN} and \eqref{eq:det_AN_det_KN_relationship} concludes the proof of Lemma~\ref{lemma:cos_ratio_det}:
\begin{equation}
\prod\limits_{\ell \in [N]} \frac{1}{\cos^{2} \theta_{\ell} \left(\mathcal{E}^{\mathcal{F}}_{N}, \mathcal{T}(\bm{x}) \right)} = \frac{\Det \bm{K}(\bm{x})}{\Det^{2} \bm{E}^{\mathcal{F}}(\bm{x})}.
\end{equation}
\end{proof}

\begin{lemma}\label{lemma:truncated_Fredholm_formula}
\begin{equation}
  \frac{1}{N!} \int_{\mathcal{X}^{N}}\Det \bm{K}(x_{1}, \dots, x_{N}) \otimes_{j \in [N]} \mathrm{d}\omega(x_{j})  = \sum\limits_{\substack{T \subset \mathbb{N}^{*} \\ |T| = N}}  \prod\limits_{t \in T}\sigma_{t}.
\end{equation}
\end{lemma}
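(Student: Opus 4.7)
The plan is to expand the determinant via Mercer's theorem, apply Cauchy--Binet to get a sum over index subsets, use orthonormality of the eigenfunctions to evaluate each summand, and finally pass to the limit in the spectral truncation.

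First, I would recycle the truncated kernel $k_M$ introduced in the proof of Proposition~\ref{prop:K_N_non_singular}, namely $k_M(x,y)=\sum_{m\in [M]}\sigma_m e_m(x)e_m(y)$ for $M\geq N$, and write $\bm{K}_M(\bm{x})=\bm{F}_M(\bm{x})^{\Tran}\bm{\Sigma}_M\bm{F}_M(\bm{x})$ with $\bm{F}_M(\bm{x})=(e_m(x_j))_{(m,j)\in[M]\times[N]}$. Cauchy--Binet then gives
\begin{equation}
\Det \bm{K}_M(\bm{x}) = \sum_{\substack{T\subset[M]\\|T|=N}}\Big(\prod_{t\in T}\sigma_t\Big)\,\Det^{2}\bigl(e_t(x_j)\bigr)_{t\in T,\,j\in[N]}.
\end{equation}
Each term is non-negative, so $M\mapsto\Det \bm{K}_M(\bm{x})$ is pointwise non-decreasing, and by \eqref{eq:truncated_kernel_limit} and continuity of the determinant it converges pointwise to $\Det \bm{K}(\bm{x})$. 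This sets up a monotone-convergence argument for the final $M\to\infty$ step.

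Next, I would integrate the right-hand side term by term. Fix $T=\{t_1<\dots<t_N\}\subset[M]$ and expand $\Det^{2}\bigl(e_{t_i}(x_j)\bigr)$ via the Leibniz formula as
\begin{equation}
\sum_{\sigma,\tau\in\mathfrak{S}_N}\mathrm{sgn}(\sigma\tau)\prod_{i\in[N]} e_{t_i}(x_{\sigma(i)})e_{t_i}(x_{\tau(i)}).
\end{equation}
After re-indexing by $j=\sigma(i)$ and $j=\tau(i)$ and using Fubini, the integral against $\otimes_{j\in[N]}\mathrm{d}\omega(x_j)$ factorises over $j$, and the orthonormality $\int e_{t_{\sigma^{-1}(j)}}e_{t_{\tau^{-1}(j)}}\mathrm{d}\omega = \delta_{t_{\sigma^{-1}(j)},t_{\tau^{-1}(j)}}$ together with the distinctness of the $t_i$ forces $\sigma=\tau$. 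Each of the $N!$ permutations then contributes $1$, yielding $\int_{\mathcal{X}^N}\Det^{2}(e_{t_i}(x_j))\otimes_j\mathrm{d}\omega(x_j)=N!$. Therefore
\begin{equation}
\frac{1}{N!}\int_{\mathcal{X}^N}\Det \bm{K}_M(\bm{x})\otimes_{j\in[N]}\mathrm{d}\omega(x_j)
= \sum_{\substack{T\subset[M]\\|T|=N}}\prod_{t\in T}\sigma_t.
\end{equation}

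Finally, monotone convergence on the left and on the right (both are sums of non-negative terms increasing in $M$) sends $M\to\infty$, producing the claimed identity
\begin{equation}
\frac{1}{N!}\int_{\mathcal{X}^N}\Det \bm{K}(\bm{x})\otimes_{j\in[N]}\mathrm{d}\omega(x_j)
= \sum_{\substack{T\subset\mathbb{N}^{*}\\|T|=N}}\prod_{t\in T}\sigma_t.
\end{equation}
The only subtle point is justifying the limit: this rests on the monotonicity of the Cauchy--Binet expansion in $M$, which is not entirely obvious a priori but follows transparently from the non-negativity of each summand once Cauchy--Binet is applied. Everything else is a standard Gram-type computation using orthonormality of $(e_m)$.
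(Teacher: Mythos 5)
Your proof is correct and follows essentially the same route as the paper's: truncate the kernel, apply Cauchy--Binet, use orthonormality to get the factor $N!$, and pass to the limit by monotone convergence thanks to the non-negativity of the Cauchy--Binet summands. The only differences are cosmetic: you prove the identity $\int_{\mathcal{X}^N}\Det^2(e_t(x_j))\otimes_j\mathrm{d}\omega(x_j)=N!$ directly by a Leibniz/Gram computation where the paper cites Lemma~21 of \cite{HKPV06}, and you omit the paper's Maclaurin-inequality check that the limiting series is finite, which is not needed for the identity itself (monotone convergence holds in $[0,+\infty]$, and finiteness follows anyway from $\bm{\Sigma}$ being trace class).
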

\begin{proof}
Let $\bm{x} = (x_{1}, \dots, x_{N}) \in \mathcal{X}^{N}$. From \eqref{eq:truncated_kernel_limit}
\begin{equation}
\Det \bm{K}(\bm{x}) = \lim\limits_{M \rightarrow \infty} \Det \bm{K}_{M}(\bm{x}).
\end{equation}
Moreover,
\begin{equation}
\Det \bm{K}_{M}(\bm{x})  = \sum\limits_{T \subset [M], |T| = N} \prod\limits_{i \in T} \sigma_{i} \Det^{2} (e_{i}(x_{j}))_{(i,j)\in T \times [N]}.
\end{equation}
Now, for $T \subset [M]$ such that $|T| = N$, $(e_{t})_{t \in T}$ is an orthonormal family of $\mathbb{L}_{2}(\mathrm{d}\omega)$, then by \cite{HKPV06} Lemma 21:
\begin{equation}
\int_{\mathcal{X}^{N}}\Det^{2} (e_{t}(x_{j})) \otimes_{j \in [N]} \mathrm{d}\omega(x_{j}) = N! .
\end{equation}

Thus
\begin{align}
\frac{1}{N!} \int_{\mathcal{X}^{N}}\Det \bm{K}_{M}(\bm{x}) \otimes_{j \in [N]} \mathrm{d}\omega(x_{j}) &=  \frac{1}{N!} \sum\limits_{T \subset [M], |T| = N} \prod\limits_{t \in T} \sigma_{t} \int_{\mathcal{X}^{N}}\Det^{2} (e_{t}(x_{j})) \otimes_{j \in [N]} \mathrm{d}\omega(x_{j}) \\
& = \sum\limits_{T \subset [M], |T| = N} \prod\limits_{t \in T} \sigma_{t} \nonumber.
\end{align}
Now, $\displaystyle \sum\limits_{n \in \mathbb{N}^{*}} \sigma_{n} < \infty$ implies that $\displaystyle \sum\limits_{T \subset \mathbb{N}^{*}, |T| = N} \prod\limits_{t \in T} \sigma_{t} < \infty$. In fact, for $\ell \in [N]$ let $p_{\ell}$ the $\ell$-th symmetric polynomial. By Maclaurin’s inequality \citep{Ste04}, and for any vector $\bm{\nu} \in \mathbb{R}_{+}^{M}$
\begin{equation}
\left( \frac{p_{\ell}(\bm{\nu})}{{M\choose \ell}} \right)^{\frac{1}{\ell}} \leq \frac{p_{1}(\bm{\nu})}{M}.
\end{equation}
Thus
\begin{align}\label{eq:Maclaurin_inequality}
p_{\ell}(\bm{\nu}) & \leq  \frac{{M\choose \ell}}{M^{\ell}}p_{1}(\bm{\nu})^{\ell} \\
& \leq \frac{M!}{\ell!(M-\ell)! M^{\ell}}p_{1}(\bm{\nu})^{\ell} \nonumber\\
& \leq \frac{M (M-1) \dots (M-\ell +1)}{\ell! M^{\ell}}p_{1}(\bm{\nu})^{\ell} \nonumber\\
& \leq \frac{1}{\ell !}p_{1}(\bm{\nu})^{\ell} \nonumber.
\end{align}
This inequality is independent of the dimension $M$ thus it can be extended for $\bm{\nu} \in \mathbb{R}_{+}^{\mathbb{N}^{*}}$ with $\displaystyle \sum\limits_{n \in \mathbb{N}^{*}} \nu_{n} < \infty$. Therefore
\begin{equation}
\sum\limits_{T \subset \mathbb{N}^{*}, |T| = N} \prod\limits_{t \in T} \sigma_{t}  \leq \frac{1}{N!} (\sum\limits_{n \in \mathbb{N}^{*}} \sigma_{n})^{N} < \infty .
\end{equation}

Furthermore,
\begin{equation}
\forall M \in \mathbb{N}^{*}, \: \forall \bm{x} \in \mathcal{X}^{N}, \: 0 \leq \Det \bm{K}_{M}(\bm{x}) \leq \Det \bm{K}_{M+1,N}(\bm{x}).
\end{equation}
Then by monotone convergence theorem, $\displaystyle \bm{x} \mapsto \frac{1}{N!}\Det \bm{K}(\bm{x})$ is mesurable and
\begin{align}
\int_{\mathcal{X}^{N}} \frac{1}{N!}\Det \bm{K}(\bm{x}) \otimes_{j \in [N]} \mathrm{d}\omega(x_j) & = \lim\limits_{M \rightarrow \infty} \int_{\mathcal{X}^{N}}\frac{1}{N!} \Det \bm{K}_{M}(\bm{x})\otimes_{j \in [N]} \mathrm{d}\omega(x_j)\\
& = \lim\limits_{M \rightarrow \infty} \sum\limits_{T \subset [M], |T| = N} \prod\limits_{t \in T} \sigma_{t} \nonumber\\
& = \sum\limits_{T \subset \mathbb{N}^{*}, |T| = N} \prod\limits_{t \in T} \sigma_{t} \nonumber.
\end{align}
\end{proof}

\subsubsection{End of the proof of Proposition~\ref{prop:expected_value_of_product_of_cos}}
\begin{proof}
Remember that
\begin{equation}
\Prb \left( \Det \bm{E}(\bm{x})  \neq 0 \right) = 1.
\end{equation}
Then by Lemma~\ref{lemma:cos_ratio_det} and the fact that $\Det^{2} \bm{E}^{\mathcal{F}}(\bm{x}) = \prod\limits_{n \in [N]} \sigma_{n} \Det^{2} \bm{E}(\bm{x})$
\begin{equation}
\prod\limits_{\ell \in [N]} \frac{1}{\cos^{2} \theta_{\ell} \left(\mathcal{E}^{\mathcal{F}}_{N}, \mathcal{T}(\bm{x}) \right)} = \frac{\Det \bm{K}(\bm{x})}{\Det^{2} \bm{E}^{\mathcal{F}}(\bm{x})} = \frac{1}{\prod\limits_{n \in [N]} \sigma_{n}}\frac{\Det \bm{K}(\bm{x})}{\Det^{2} \bm{E}(\bm{x})}.
\end{equation}
Then, taking the expectation with respect to $\bm{x}$ resulting from a DPP of kernel $\KDPP(x,y)$,
\begin{align}
\EX_{\DPP} \prod\limits_{\ell \in [N]} \frac{1}{\cos^{2} \theta_{\ell}\left(\mathcal{E}^{\mathcal{F}}_{N}, \mathcal{T}(\bm{x}) \right)} = & \frac{1}{N!} \int_{\mathcal{X}^{N}} \Det^{2} \bm{E}(\bm{x}) \prod\limits_{\ell \in [N]} \frac{1}{\cos^{2} \theta_{\ell} \left(\mathcal{E}^{\mathcal{F}}_{N}, \mathcal{T}(\bm{x}) \right)} \otimes_{i =1}^{N} \mathrm{d}\omega(x_{i}) \\
 = & \frac{1}{N!} \int_{\mathcal{X}^{N}} \Det^{2} \bm{E}(\bm{x}) \frac{1}{\prod\limits_{n \in [N]} \sigma_{n}}\frac{\Det \bm{K}(\bm{x})}{\Det^{2} \bm{E}(\bm{x})} \otimes_{i =1}^{N} \mathrm{d}\omega(x_{i}) \nonumber\\
 = & \frac{1}{\prod\limits_{n \in [N]} \sigma_{n}} \frac{1}{N!} \int_{\mathcal{X}^{N}}\Det \bm{K}(\bm{x}) \otimes_{i =1}^{N} \mathrm{d}\omega(x_{i}) \nonumber.
\end{align}
Now, by Lemma~\ref{lemma:truncated_Fredholm_formula}
\begin{equation}
\frac{1}{N!} \int_{\mathcal{X}^{N}}\Det \bm{K}(\bm{x}) \otimes_{i =1}^{N} \mathrm{d}\omega(x_{i})  = \sum\limits_{\substack{T \subset \mathbb{N}^{*} \\ |T| = N}}  \prod\limits_{t \in T}\sigma_{t}.
\end{equation}
Therefore,
\begin{equation}
\EX_{\DPP} \prod\limits_{\ell \in [N]} \frac{1}{\cos^{2} \theta_{\ell}\left(\mathcal{E}^{\mathcal{F}}_{N}, \mathcal{T}(\bm{x}) \right)}  =  \sum\limits_{\substack{T \subset \mathbb{N}^{*} \\ |T| = N}} \frac{ \prod\limits_{t \in T}\sigma_{t}}{\prod\limits_{n \in [N]} \sigma_{n}}.
\end{equation}

\end{proof}

\subsection{Proof of Theorem~\ref{thm:main_theorem}}
\label{s:finalBound}
\begin{proof}
Thanks to Proposition~\ref{prop:kernel_perturbation_inequality} and Lemma~\ref{lemma:max_error_cos} (for $\tilde{\cal F}$ and $\tilde{k}$)
\begin{align}\label{eq:prooftheorem1_first}
  \max_{ n \in [N]}\sigma_n \|\bm{\Pi}_{\mathcal{T}(\bm{x})^{\perp}} e_{n}^{\mathcal{F}}\|_{\mathcal{F}}^{2}
  & \leq
  \sigma_1 \cdot \max_{ n \in [N]} \|\bm{\Pi}_{\tilde{\cal{T}}(\bm{x})^{\perp}} e_{n}^{\tilde{\mathcal{F}}}\|_{\tilde{\mathcal{F}}}^{2}\\
  & \leq
  \sigma_1 \cdot \left( \prod\limits_{n \in [N]}\frac{1}{\cos^{2} \theta_{n}(\tilde{\cal{T}}(\bm{x}),\mathcal{E}^{\tilde{\mathcal{F}}}_{N})} - 1\right).
\end{align}
%
%
Then Proposition~\ref{prop:expected_value_of_product_of_cos} applied to $\tilde{\cal F}$ with kernel $\tilde{k}$ yields
\begin{equation}
  \EX_{\DPP} \prod\limits_{n \in [N]} \frac{1}{\cos^{2} \theta_{n}\left(\mathcal{E}^{\tilde{\mathcal{F}}}_{N}, \tilde{\mathcal{T}}_{N}(\bm{x}) \right)}   =
  \sum\limits_{\substack{T \subset \mathbb{N}^{*} \\ |T| = N}} \frac{ \prod\limits_{t \in T}\tilde{\sigma}_{t}}{\prod\limits_{n \in [N]} \tilde{\sigma}_{n}}.
\end{equation}
Every subset $T \subset \mathbb{N}^{*}$ such that $|T| = N$ can be written as $T = V \cup W$ with $V \subset [N]$ and $W \subset \mathbb{N}^{*} \smallsetminus [N]$, and this decomposition is unique. Then
\begin{equation}
\frac{ \prod\limits_{t \in T}\tilde{\sigma}_{t}}{\prod\limits_{n \in [N]} \tilde{\sigma}_{n}} = \frac{\prod\limits_{v \in V}\tilde{\sigma}_{v} \prod\limits_{w \in W}\tilde{\sigma}_{w}}{\prod\limits_{n \in [N]} \tilde{\sigma}_{n}} = \frac{\prod\limits_{w \in W}\tilde{\sigma}_{w}}{\prod\limits_{n \in [N]\smallsetminus V} \tilde{\sigma}_{n}}.
\end{equation}
Therefore
\begin{align}
  \sum\limits_{\substack{T \subset \mathbb{N}^{*} \\ |T| = N}} \frac{ \prod\limits_{t \in T}\tilde{\sigma}_{t}}{\prod\limits_{n \in [N]} \tilde{\sigma}_{n}} & = \sum\limits_{\substack{T \subset \mathbb{N}^{*} \\ |T| = N\\ T = V \cup W}} \frac{\prod\limits_{w \in W}\tilde{\sigma}_{w}}{\prod\limits_{n \in [N]\smallsetminus V} \tilde{\sigma}_{n}} \\
  & = \sum\limits_{V \subset [N]}\sum\limits_{\substack{W \subset \mathbb{N}^{*}\smallsetminus [N]\\ |W| = N-|V|}} \frac{\prod\limits_{w \in W}\tilde{\sigma}_{w}}{\prod\limits_{n \in [N]\smallsetminus V} \tilde{\sigma}_{n}} \nonumber\\
  & = \sum\limits_{0 \leq \ell \leq N} \bigg[ \sum\limits_{\substack{V \subset [N]\\ |V| = \ell}}\prod\limits_{n \in [N]\smallsetminus V} \frac{1}{\tilde{\sigma}_{n}}\bigg]\bigg[\sum\limits_{\substack{W \subset \mathbb{N}^{*}\smallsetminus [N]\\ |W| = N-\ell}} \prod\limits_{w \in W}\tilde{\sigma}_{w}\bigg] \nonumber\\
  & = \sum\limits_{0 \leq \ell \leq N} \bigg[\sum\limits_{\substack{V \subset [N]\\ |V| = N-\ell}}\prod\limits_{n \in V} \frac{1}{\tilde{\sigma}_{n}}\bigg] \bigg[\sum\limits_{\substack{W \subset \mathbb{N}^{*}\smallsetminus [N]\\ |W| = N-\ell}} \prod\limits_{w \in W}\tilde{\sigma}_{w}\bigg] \nonumber\\
  & = \sum\limits_{0 \leq \ell \leq N} p_{N-\ell}\left(\left(\frac{1}{\tilde{\sigma}_{m}}\right)_{m \in [N]}\right)p_{N-\ell}\left((\tilde{\sigma}_{m})_{m \geq N+1}\right) \nonumber\\
  & = \sum\limits_{0 \leq \ell \leq N} p_{\ell}\left(\left(\frac{1}{\tilde{\sigma}_{m}}\right)_{m \in [N]}\right)p_{\ell}\left((\tilde{\sigma}_{m})_{m \geq N+1}\right) \nonumber,
\end{align}
where for $\ell \in [N]$, $p_{\ell}$ is the $\ell$-th symmetric polynomial with the convention that $p_{0} = 1$.

Finally, thanks to \eqref{eq:Maclaurin_inequality} above
\begin{align}
  \sum\limits_{\substack{T \subset \mathbb{N}^{*} \\ |T| = N}} \frac{ \prod\limits_{t \in T}\tilde{\sigma}_{t}}{\prod\limits_{n \in [N]} \tilde{\sigma}_{n}}
  & \leq  1 + \sum\limits_{\ell \in [N]} \frac{1}{\ell!^{2}} \left(\sum\limits_{m \in [N]}\frac{1}{\tilde{\sigma}_{m}} \sum\limits_{m \geq N+1} \tilde{\sigma}_{m}\right)^{\ell} \\
  & \leq 1+ \sum\limits_{\ell \in [N]} \frac{1}{\ell!^{2}} \left(\frac{N}{\sigma_1} \sum\limits_{m \geq N+1} \sigma_{m}\right)^{\ell} \nonumber.
\end{align}
As a consequence, by writing $r_{N} = \sum\limits_{m \geq N+1} \sigma_{m}$,
\begin{equation}
  \EX_{\DPP} \left[ \max_{ n \in [N]}\sigma_n \|\bm{\Pi}_{\mathcal{T}(\bm{x})^{\perp}} e_{n}^{\mathcal{F}}\|_{\mathcal{F}}^{2} \right]
  \leq
  \sigma_1 \cdot \sum_{\ell=1}^{N} \frac{1}{\ell!^{2}} \left(\frac{N  r_{N} }{\sigma_1}\right)^{\ell}
\end{equation}
which can be plugged in Lemma~\ref{lemma:approximation_error_spectral_bound} to conclude the proof.
\end{proof}


\section{The intuitions behind the algorithm}
The algorithm presented in this article is based on several intuitions. In this section, we summarize these intuitions.
\subsection{The geometric intuition}\label{app:geometric_intuition}
Recall that the quadrature problem in a RKHS boils down to a problem of interpolation of the mean element $\mu_{g}$ by a mixture of $k(x_{i},.)$, where $g \in \mathbb{L}_{2}(\mathrm{d}\omega)$ such that $\|g\|_{\mathrm{d}\omega} \leq 1$. A promising algorithm would thus be to select the nodes $\{x_i, i\in [N]\}$ so as to minimize the projection of $\mu_{g}$ onto $\mathcal{T}(\bm{x}) = \Span(k(x_i,\cdot); i\in [N])$. Upper bounding the approximation error $\|\mu_{g} - \bm{\Pi}_{\mathcal{T}(\bm{x})}\mu_{g}\|_{\mathcal{F}}$ is not easy in general. One the one side, we propose to replace $\mu_{g}$ by its projection $\bm{\Pi}_{\mathcal{E}_{N}^{\mathcal{F}}} \mu_{g}$ onto the first eigenfunctions of $\bm{\Sigma}$. Then it is easy to prove that
\begin{equation}
\|\mu_{g}- \bm{\Pi}_{\mathcal{E}_{N}^{\mathcal{F}}} \mu_{g}\|_{\mathcal{F}} \leq \sqrt{\sigma_{N+1}}.
\label{e:pca}
\end{equation}
On the other side, if we find a quadrature rule such that $\|\bm{\Pi}_{\mathcal{E}_{N}^{\mathcal{F}}}\mu_g - \bm{\Pi}_{\mathcal{T}(\bm{x})}\mu_{g}\|_{\mathcal{F}}$ is small, then we can guarantee an overall approximation error that is not too much worse than the PCA error \eqref{e:pca}.
After introducing an auxiliary RKHS $\tilde{\mathcal{F}}$ with kernel $\tilde k$, we express this second term using the principal angles between the subspaces $\tilde{\mathcal{T}}(\bm{x})$ and $\mathcal{E}_{N}^{\tilde{\mathcal{F}}}$ (see section \ref{subsec:expectation_under_the_DPP}). This yields a bound on the interpolation error
\begin{equation}
    \|\mu_{g} - \bm{\Pi}_{\mathcal{T}(\bm{x})}\mu_{g}\|^{2}_{\mathcal{F}} \leq 2 \bigg( \sigma_{N+1} +    \sigma_{1}  \|g\|_{\mathrm{d}\omega,1}^2\tan^{2} \theta_{N} \left(\mathcal{E}^{\tilde{\mathcal{F}}}_{N}, \tilde{\mathcal{T}}(\bm{x}) \right) \bigg) .
\label{e:boundingOrthogonalProjection_2}
\end{equation}

The first term in the right hand side of \eqref{e:boundingOrthogonalProjection_2} is $2 \sigma_{N+1}$, which corresponds to the approximation error observed in numerical simulations. The second term depends on the largest principal angle $\theta_{N}$ between the subspaces $\tilde{\mathcal{T}}(\bm{x})$ and $\mathcal{E}_{N}^{\tilde{\mathcal{F}}}$, see Figure~\ref{fig:func_principal_angles}. This term can in turn be bounded by the symmetrized quantity
\begin{equation}
\prod\limits_{\ell \in [N]} \frac{1}{\cos^{2} \theta_{\ell} \left(\mathcal{E}^{\tilde{\mathcal{F}}}_{N}, \tilde{\mathcal{T}}(\bm{x}) \right)} - 1 = \frac{\Det \tilde{\bm{K}}(\bm{x})}{\Det^{2} \bm{E}^{\tilde{\mathcal{F}}}(\bm{x})} - 1,
\end{equation}
which has a tractable expectation under the projection DPP that we consider in this paper.
As an illustration of \eqref{e:boundingOrthogonalProjection_2}, Figure~\ref{fig:principal_angles_comparison} compares the quality of approximation of a mean element $\mu_{g}$ using kernel interpolation based on two configurations of nodes: the first configuration (top) is well spread and the second configuration (bottom) is not. Observe that the largest principal angle $\theta_{N}$ for the first configuration is around $\pi/4$, so that $\tan^{2} \theta_{N} \approx 1$; while it is around $\pi/2$ for the second configuration so that $\tan^{2} \theta_{N} \gg 1$. Now observe that the first design of nodes gives the best reconstruction. This observation is consistent with \eqref{e:boundingOrthogonalProjection_2}.

\begin{figure}[]
\centering
\input{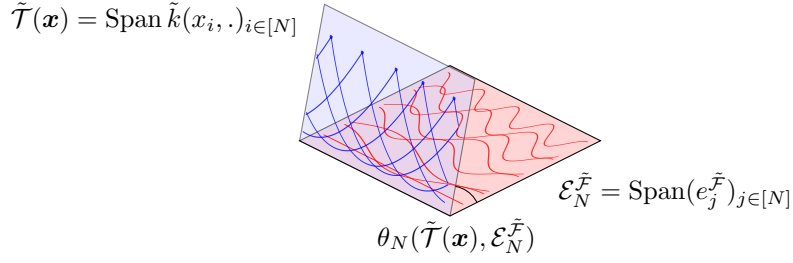}
 \caption{Illustration of the largest principal angle between the subspaces $\tilde{\mathcal{T}}(\bm{x})$ and $\mathcal{E}_{N}^{\tilde{\mathcal{F}}}$ in the case of the RKHS of Section~\ref{s:sobolev_numsim} (the periodic Sobolev space of order 1).  \label{fig:func_principal_angles}}
\end{figure}

\begin{figure}[]
\centering
\includegraphics[width= 0.85\textwidth]{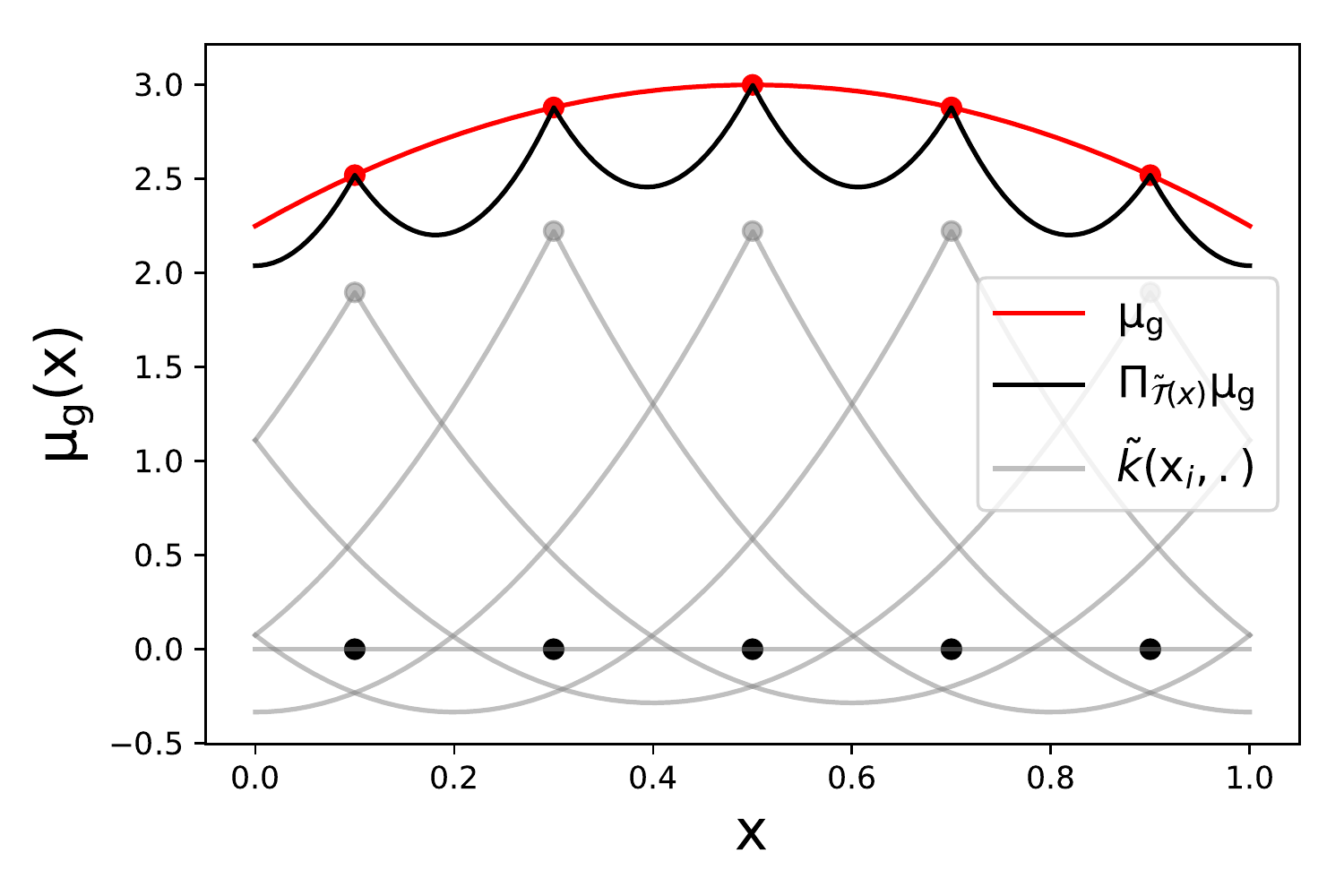}
\includegraphics[width= 0.25\textwidth]{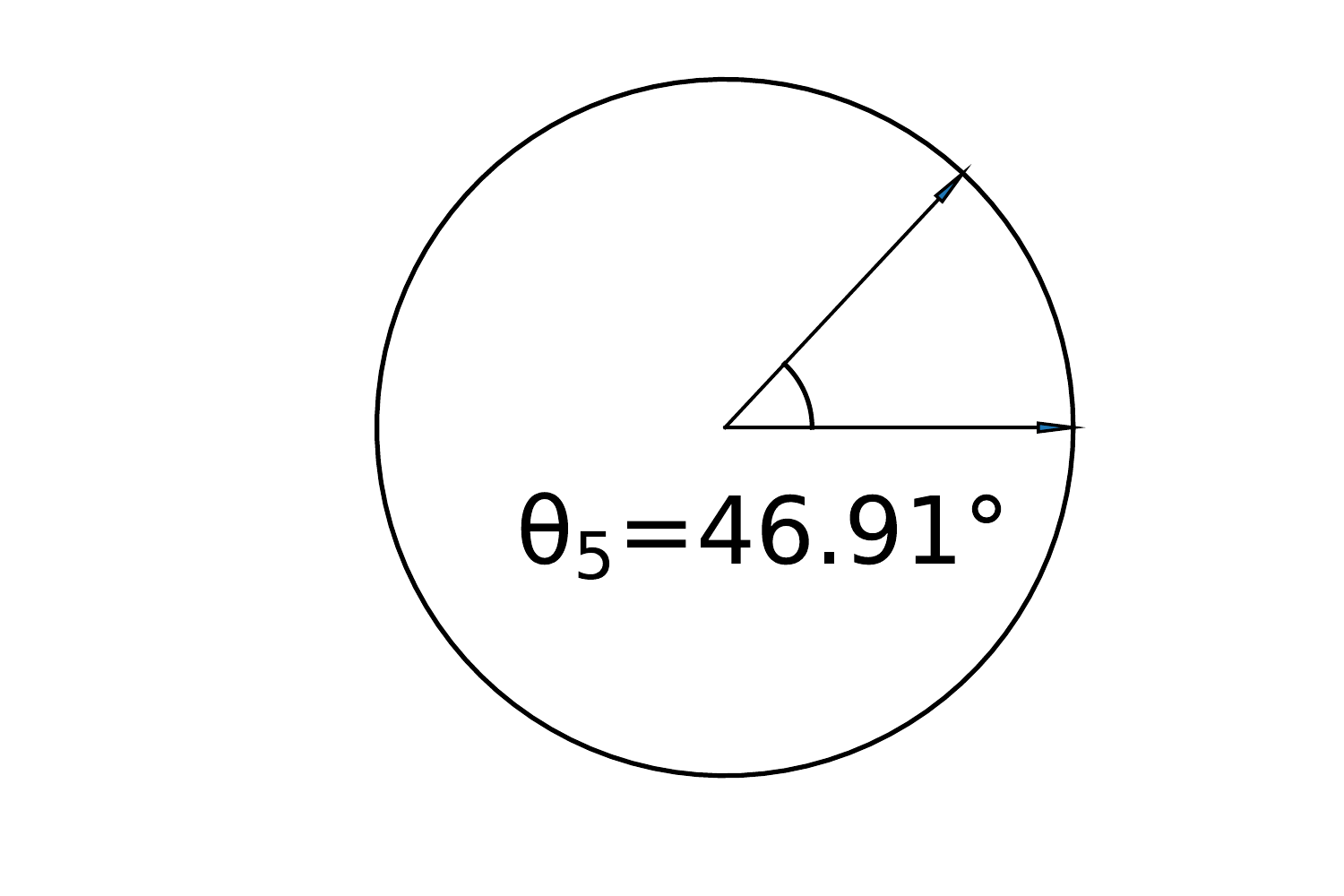}
\\
\includegraphics[width= 0.85\textwidth]{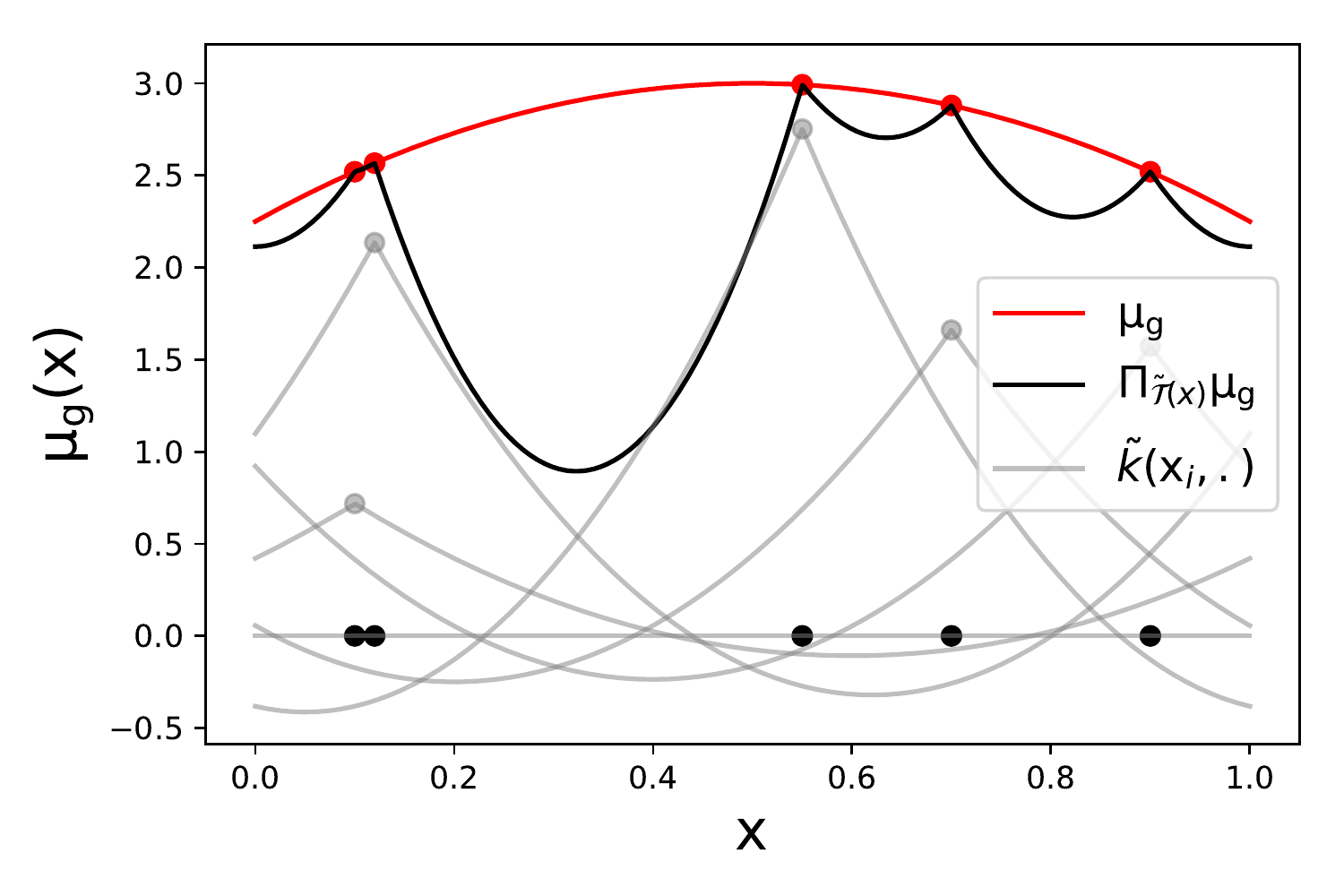}
\includegraphics[width= 0.25\textwidth]{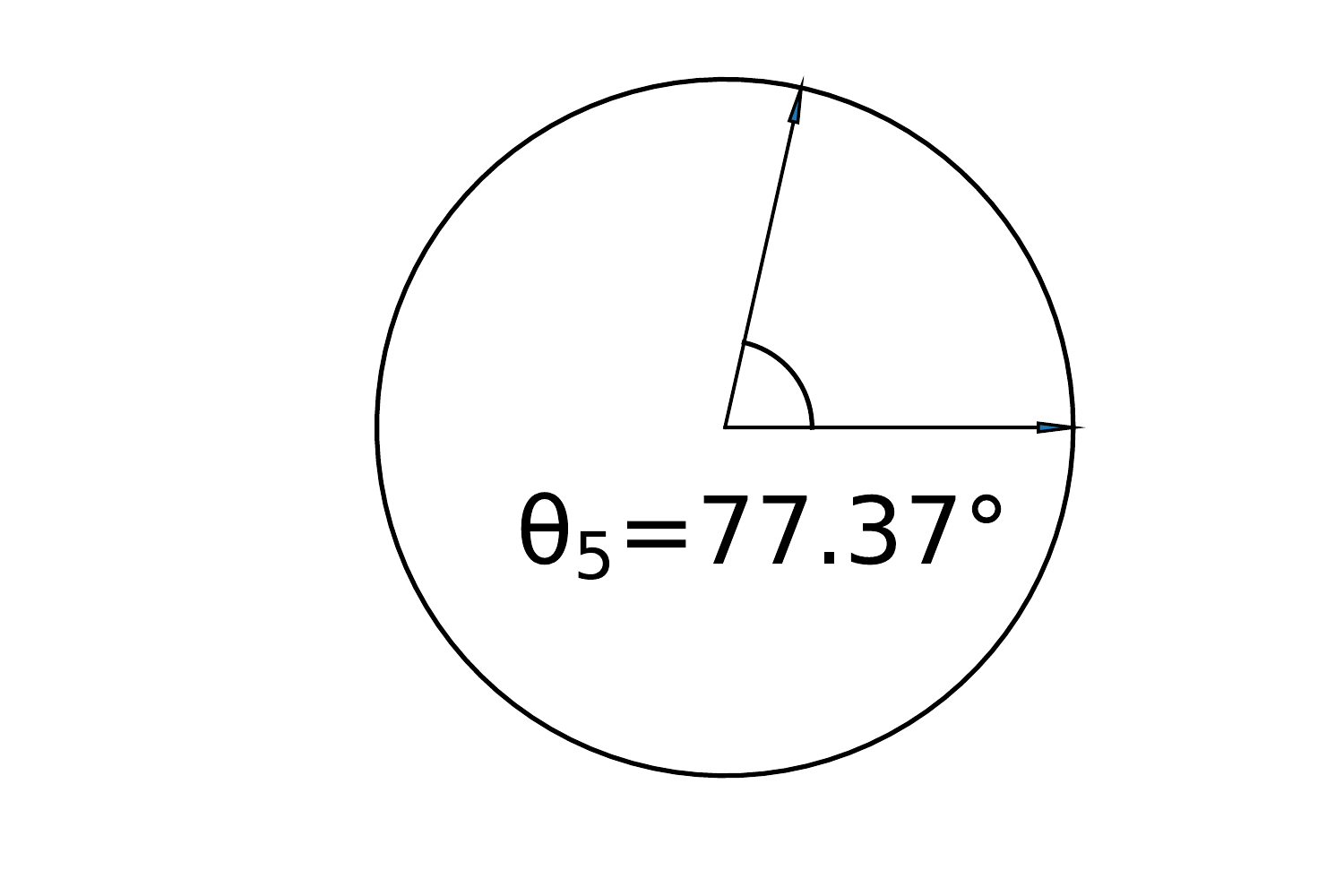}
 \caption{The dependency of the quality of reconstruction and the largest principal angle  $ \theta_{N}~=~\theta_{N}~\bigg(\tilde{\mathcal{T}}(\bm{x}),\mathcal{E}_{N}^{\tilde{\mathcal{F}}}\bigg)$ for $N = 5$. A comparison of a design of nodes well-spread (above) and a design of nodes with clustering (below). \label{fig:principal_angles_comparison}}
\end{figure}

\subsection{The inclusion probability of DPPs and the Christoffel functions}\label{app:christoffel}
The optimal distribution $q_{\lambda}$, see section \ref{s:bach}, can be linked to the so-called \emph{Christoffel functions} \citep{PaBaVe18}. These functions are rooted in the literature on orthogonal polynomials \citep{Nev86}. To make it simpler, we introduce them in dimension $d=1$. They are defined by
\begin{equation}
C_{\ell,\mathrm{d}\omega}: z \mapsto \min\limits_{\substack{P \in \mathbb{R}_{\ell}[X]\\ P(z) = 1}} \int_{\mathcal{X}} P(x)^{2} \mathrm{d}\omega(x), \quad \ell \in \mathbb{N}.
 \end{equation}
Christoffel functions have a more explicit form \citep{Nev86} that can be used for pointwise evaluation
\begin{equation}\label{eq:Christoffel_functions_definition}
C_{\ell,\mathrm{d}\omega}(z) = \frac{1}{ \sum\limits_{m \leq \ell} P_{m}(z)^{2} },\quad\ell\in\mathbb{N},
\end{equation}
where $(P_{m})_{m \in \mathbb{N}}$ are the orthonormal polynomials with respect to $\mathrm{d}\omega$. To establish a connection with $q_{\lambda}$, the authors of \cite{PaBaVe18} defined regularized Christoffel functions for some kernel $k$:
\begin{equation}
C_{\lambda,\mathrm{d}\omega,k}: z \mapsto \min\limits_{\substack{f \in \mathcal{F}\\ f(z) = 1}} \int_{\mathcal{X}} f(x)^{2} \mathrm{d}\omega(x) + \lambda \|f\|_{\mathcal{F}}^{2}, \quad \lambda \in \mathbb{R}_{+}^{*}.
 \end{equation}
The authors derived an asymptotic equivalent of the function $C_{\lambda,w,k}$ in the regime $\lambda \rightarrow 0$ under some assumptions on the kernel. Furthermore, they proved that $C_{\lambda,w,k}$ is tied to $q_{\lambda}$ by the following relationship (Lemma 5, \citep{PaBaVe18}):
\begin{equation}
q_{\lambda}(x) \propto \langle k(x,.), (\Sigma+ \lambda \mathbb{I}_{\mathcal{H}})^{-1} k(x,.) \rangle_{\mathcal{F}} = \frac{1}{C_{\lambda,w,k}(x)}.
\end{equation}

On the other hand, assume that the $(\psi_{n})$ are the family of orthonormal polynomials with respect to $\mathrm{d}\omega$. Let $x \in \mathcal{X}$ and $\bm{x}$ a random subset of $\mathcal{X}^{N}$ drawn from the Projection DPP $(\KDPP, \mathrm{d}\omega)$, then
\begin{equation}
\Prb_{\DPP}(x \in \bm{x}) = \frac{1}{N}\KDPP(x,x)\mathrm{d}\omega(x) = \frac{1}{N}\sum\limits_{n \in [N]} \psi_{n}(x)\psi_{n}(x) \mathrm{d}\omega(x)= \frac{1}{NC_{N,\mathrm{d}\omega}(x)} \mathrm{d}\omega(x).
\end{equation}
In other words, the inclusion probability of the corresponding projection DPP is related to the inverse of the Christoffel function as defined in \eqref{eq:Christoffel_functions_definition}. Figure \ref{fig:unidimHermiteEnsembleChristoffel} illustrates the evaluations of the inclusion probability of the projection DPP in the case of RKHS defined by the Gaussian kernel along with the Gaussian measure in the real line. Recall that in this case the eigenfunctions are given by
\begin{equation}
\tilde{e}_{m}(.) = H_{m}(\sqrt{2c}.).
\end{equation}
The theoretical analysis of the "bumps" of the functions $x \mapsto 1/N\KDPP(x,x)\mathrm{d}\omega(x)$ was carried out in \citep{DuEd05}. More precisely, the authors studied the approximations of those bumps by Gaussians centred on the Hermite polynomials roots, see Figure \ref{fig:unidimHermiteEnsembleChristoffel} (b). We observe a similar behaviour for the multidimensional Gaussian case as illustrated in Figure \ref{fig:HermiteEnsembleChristoffel}: the inclusion probability of the projection DPP have has local maxima around the tensor products of the Hermite polynomials roots. In other words, the quadratures based on nodes sampled according to a projection DPP are probabilistic relaxations of classical quadratures based on roots of orthogonal polynomials that can be defined even if $N$ is not the square of an integer (the cases $N \in \{17,21\}$ in Figure~\ref{fig:HermiteEnsembleChristoffel}).

\begin{figure}[]
    \centering
\includegraphics[width= 0.45\textwidth]{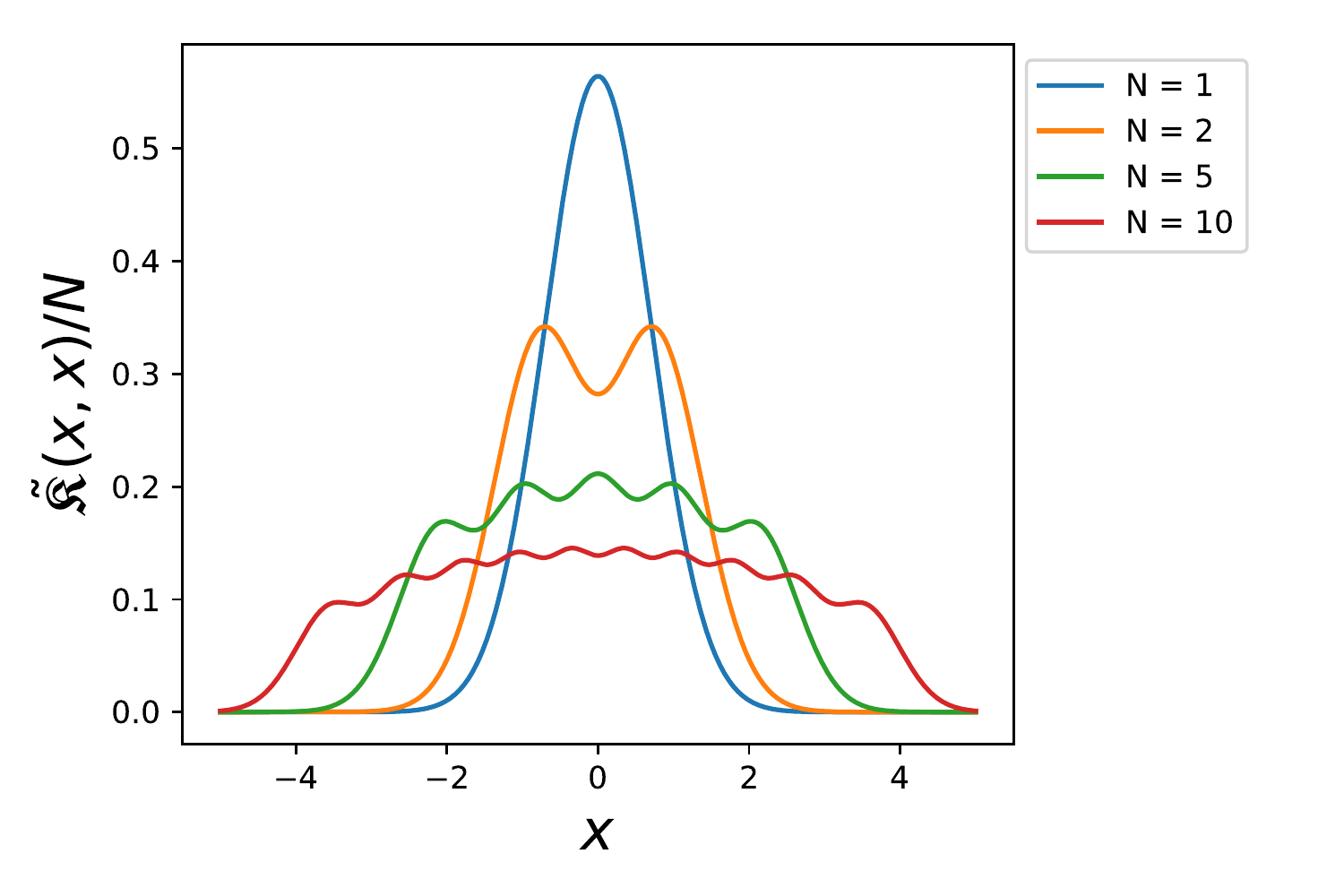}~
\includegraphics[width= 0.45\textwidth]{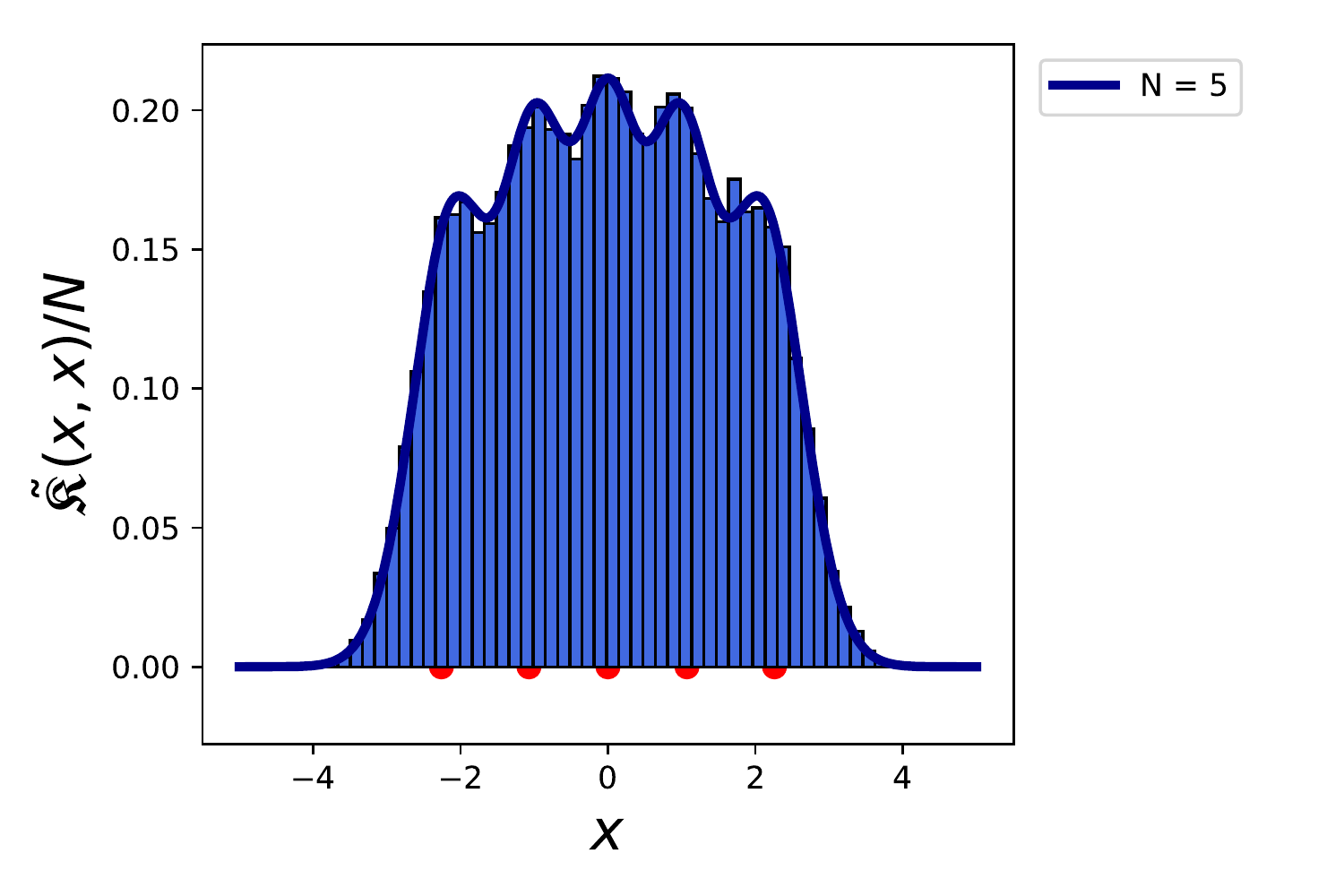}
\caption{The inclusion probability of the projection DPP in the Gaussian case $(d=1)$: (a) the evaluations of the functions $\displaystyle x \mapsto \frac{\KDPP(x,x)}{N}\mathrm{d}\omega(x)$ for $N \in \{1,2,5,10\}$ where $\mathrm{d}\omega$ is the measure of a normalized Gaussian variable, (b) the empirical inclusion probability based on 50000 realisations of the projection DPP compared to the evaluation of the Christoffel function ($N=5$), the dots in red corresponds to the zeros of the scaled Hermite polynomial of order 5.   \label{fig:unidimHermiteEnsembleChristoffel}}
\end{figure}
\begin{figure}[]
    \centering
\includegraphics[width= 0.48\textwidth]{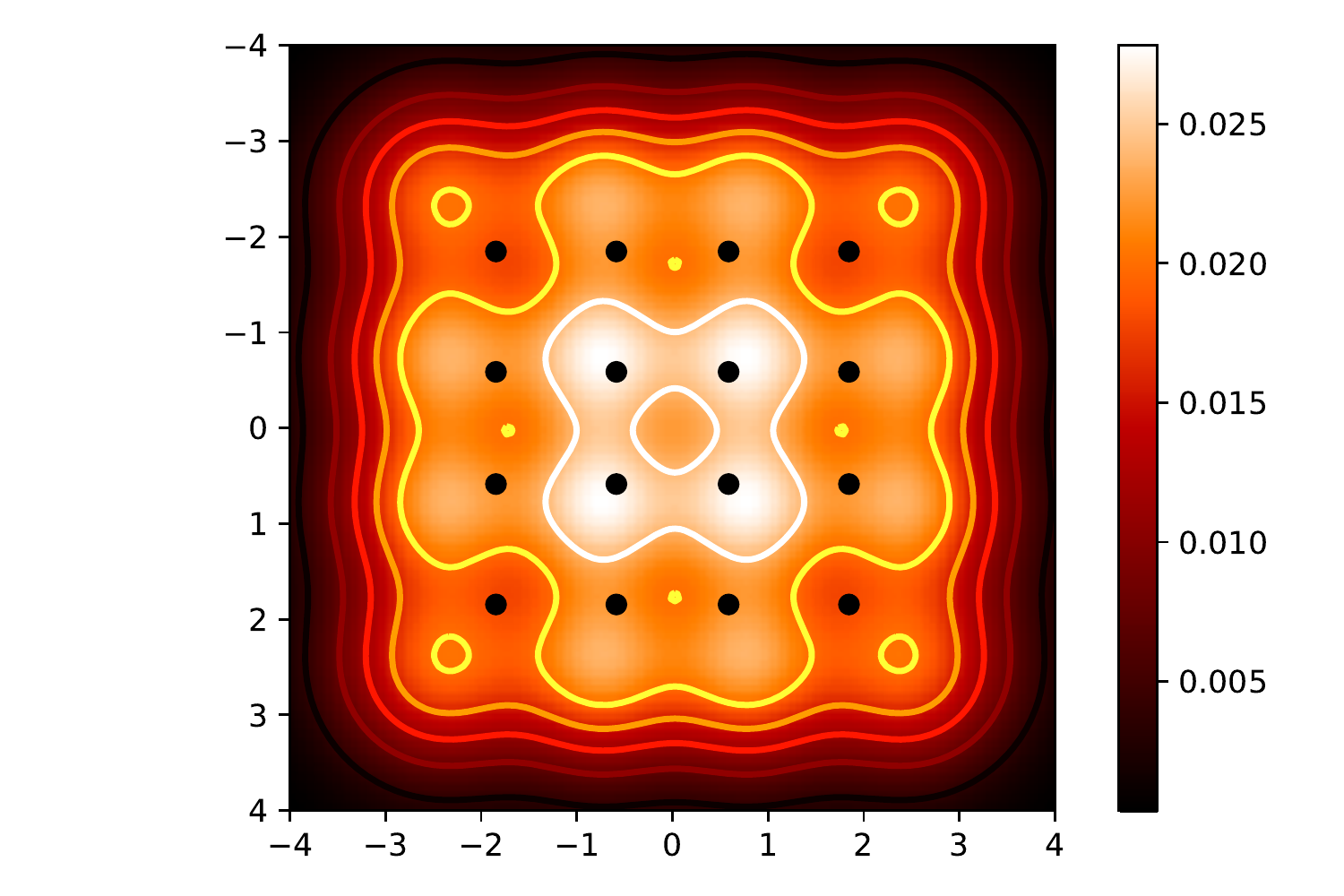}
\includegraphics[width= 0.48\textwidth]{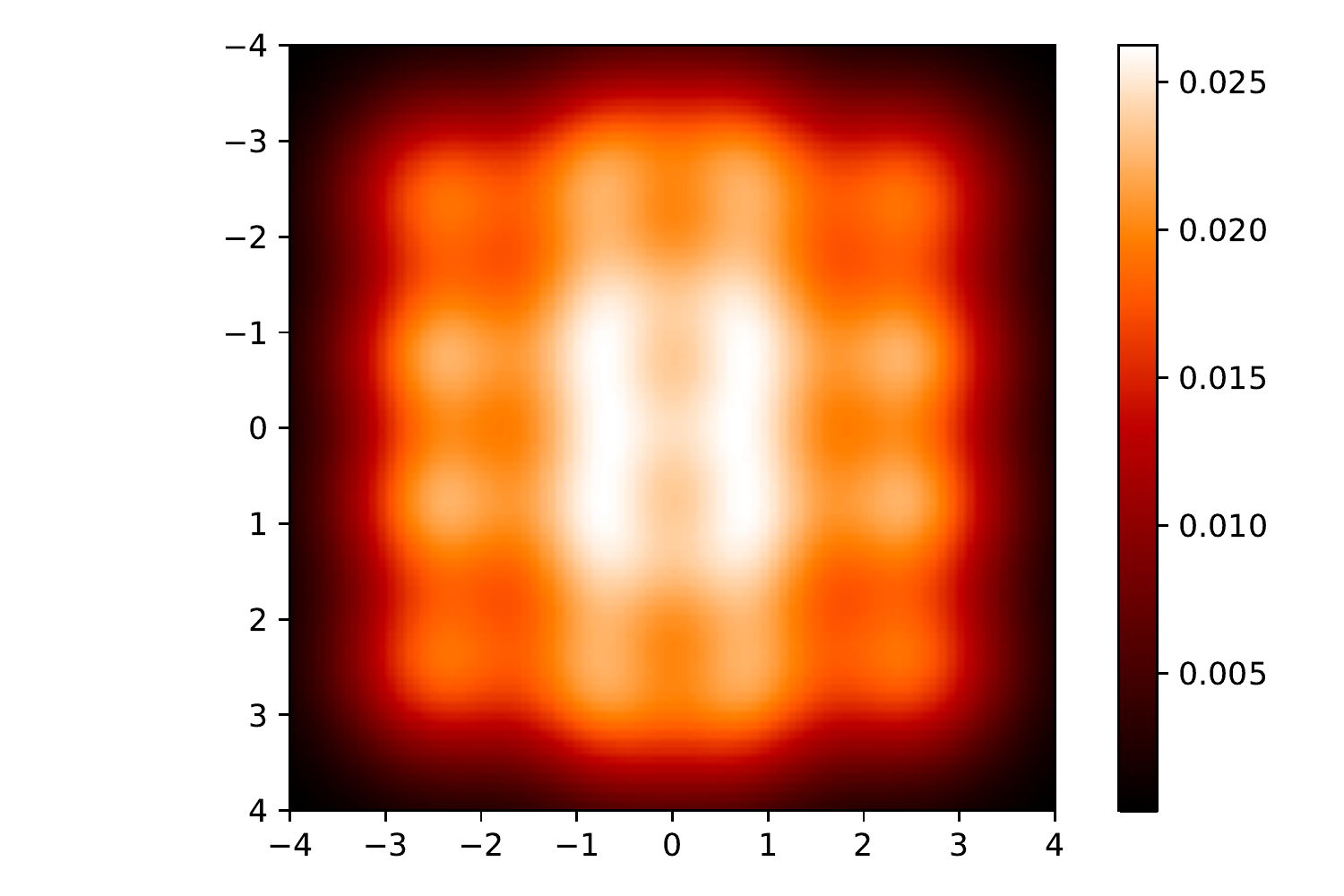}
\includegraphics[width= 0.48\textwidth]{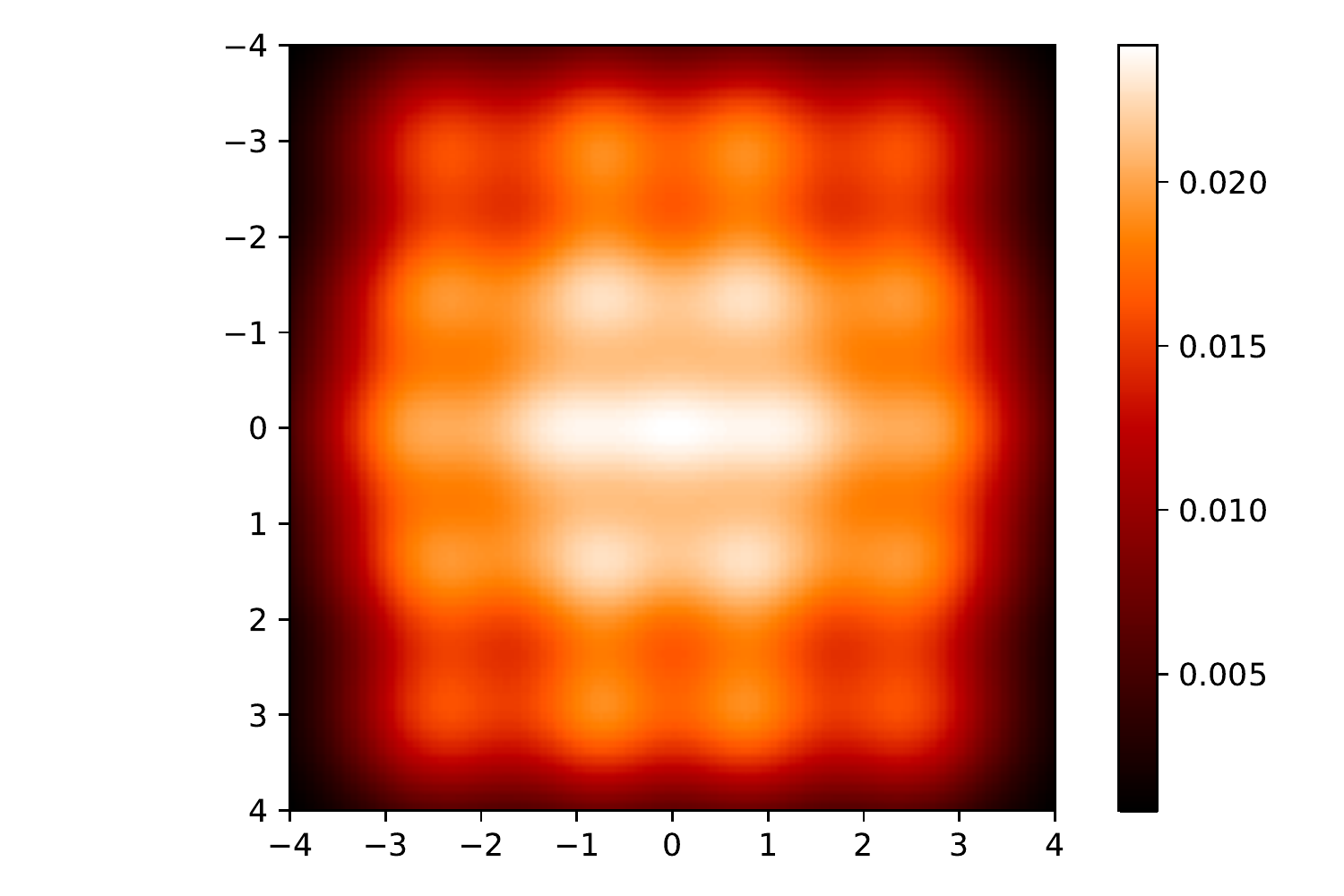}
\includegraphics[width= 0.48\textwidth]{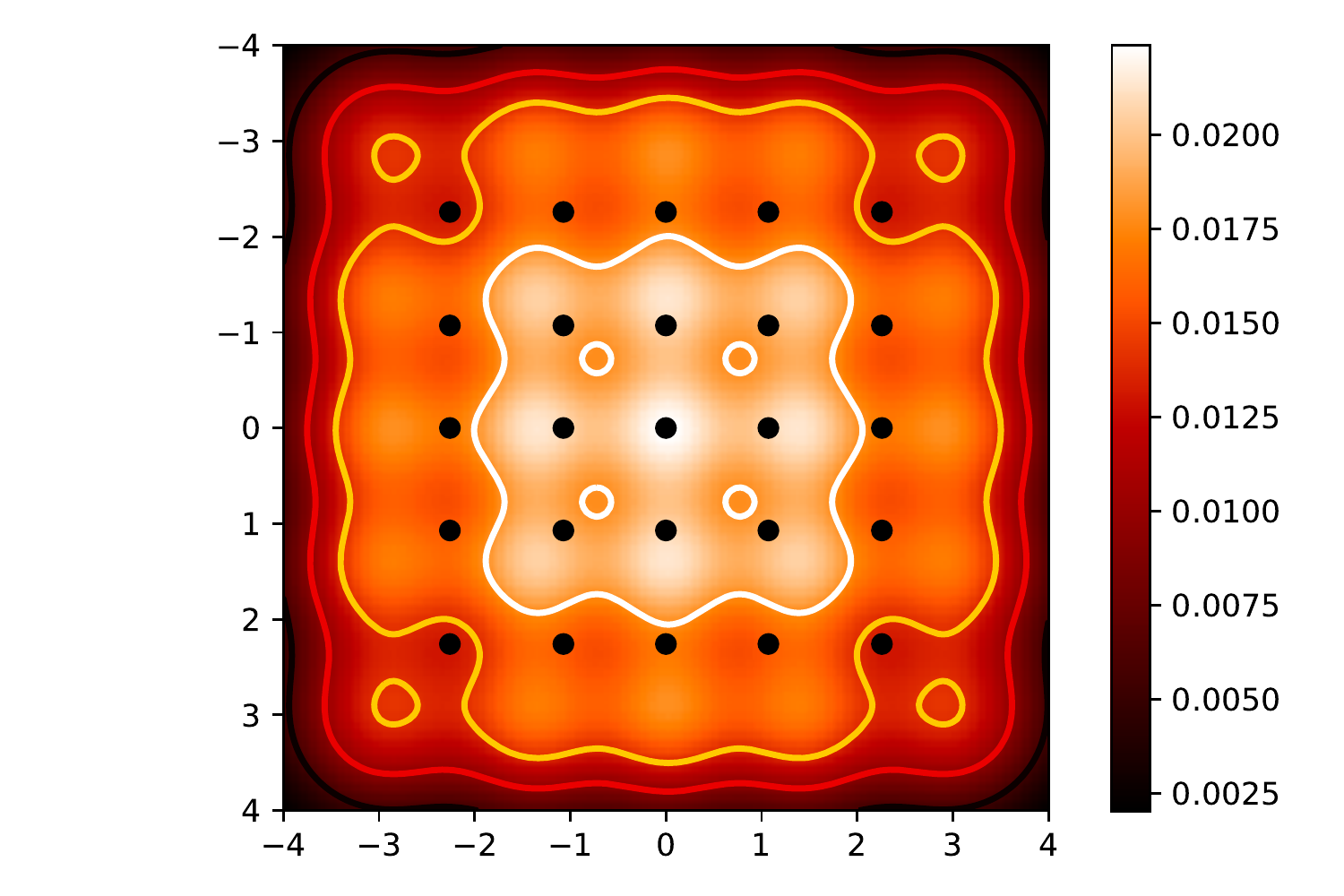} \\
\caption{The inclusion probability of the projection DPP in the multidimensional Gaussian case $(d=2)$: the evaluations of the functions $\displaystyle  x \mapsto \frac{\KDPP(x,x)}{N} \otimes_{i =1}^{d} \mathrm{d}\omega(x_{i})$ for $N \in \{16,17,21,25\}$, the dots in black corresponds to the tensor product of the zeros of the scaled Hermite polynomials.   \label{fig:HermiteEnsembleChristoffel}}
\end{figure}

\newpage


\end{document}